\DeclareMathOperator*{\argmax}{argmax}
\DeclareMathOperator{\bbN}{\mathbb{N}}
\DeclareMathOperator{\bbR}{\mathbb{R}}
\DeclareMathOperator{\calC}{\mathcal{C}}
\DeclareMathOperator{\calD}{\mathcal{D}}
\DeclareMathOperator{\calE}{\mathcal{E}}
\DeclareMathOperator{\calG}{\mathcal{G}}
\DeclareMathOperator{\calH}{\mathcal{H}}
\DeclareMathOperator{\calV}{\mathcal{V}}
\DeclareMathOperator{\calX}{\mathcal{X}}
\DeclareMathOperator{\bfa}{\mathbf{a}}
\DeclareMathOperator{\bfc}{\mathbf{c}}
\DeclareMathOperator{\bfv}{\mathbf{v}}
\DeclareMathOperator{\bfx}{\mathbf{x}}
\DeclareMathOperator{\bfy}{\mathbf{y}}
\DeclareMathOperator{\bfalpha}{\mathbf{\alpha}}
\DeclareMathOperator{\bfbeta}{\mathbf{\beta}}
\DeclareMathOperator{\bftheta}{\mathbf{\theta}}
\newtheorem{theorem}{Theorem}[section]
\newtheorem{corollary}{Corollary}[theorem]
\newtheorem{proposition}{Proposition}
\theoremstyle{definition}
\newtheorem{definition}{Definition}[section]
\theoremstyle{remark}
\newtheorem*{remark}{Remark}
\title{Mixed Variable Bayesian Optimization with Frequency Modulated Kernels}
\author[1]{\href{mailto:Changyong Oh <c.oh@uva.nl>}{Changyong Oh}{}} % Lead author
\author[1]{Efstratios Gavves}
\author[1,2]{Max Welling}
\affil[1]{%
    Informatics Institute\\
    University of Amsterdam\\
    Amsterdam, The Netherlands
}
\affil[2]{%
    Qualcomm AI Research Netherlands\\
    Amsterdam, The Netherlands
}
\newcommand*{\addFileDependency}[1]{% argument=file name and extension
  \typeout{(#1)}
  \@addtofilelist{#1}
  \IfFileExists{#1}{}{\typeout{No file #1.}}
}
\newcommand*{\myexternaldocument}[1]{%
    \externaldocument{#1}%
    \addFileDependency{#1.tex}%
    \addFileDependency{#1.aux}%
}
\begin{document}
\maketitle

\begin{abstract}
The sample efficiency of Bayesian optimization~(BO) is often boosted by Gaussian Process~(GP) surrogate models.
However, on mixed variable spaces, surrogate models other than GPs are prevalent, mainly due to the lack of kernels which can model complex dependencies across different types of variables.
In this paper, we propose the frequency modulated (FM) kernel flexibly modeling dependencies among different types of variables, so that BO can enjoy the further improved sample efficiency.
The FM kernel uses distances on continuous variables to modulate the graph Fourier spectrum derived from discrete variables.
However, the frequency modulation does not always define a kernel with the similarity measure behavior which returns higher values for pairs of more similar points. 
Therefore, we specify and prove conditions for FM kernels to be positive definite and to exhibit the similarity measure behavior.
In experiments, we demonstrate the improved sample efficiency of GP BO using FM kernels~(BO-FM).
On synthetic problems and hyperparameter optimization problems, BO-FM outperforms competitors consistently.
Also, the importance of the frequency modulation principle is empirically demonstrated on the same problems.
On joint optimization of neural architectures and SGD hyperparameters, BO-FM outperforms competitors including Regularized evolution~(RE) and BOHB.
Remarkably, BO-FM performs better even than RE and BOHB using three times as many evaluations.

\end{abstract}

% ----- SECTION -------
\section{Introduction}
\label{sec:introduction}
\vspace{-4pt}
\vspace{-4pt}

Bayesian optimization has found many applications ranging from daily routine level tasks of finding a tasty cookie recipe~\citep{solnik2017bayesian} to sophisticated hyperparameter optimization tasks of machine learning algorithms (e.g. Alpha-Go~\citep{chen2018bayesian}). 
Much of this success is attributed to the flexibility and the quality of uncertainty quantification of Gaussian Process~(GP)-based surrogate models~\citep{snoek2012practical,swersky2013multi,oh2018bock}.
 
Despite the superiority of GP surrogate models, as compared to non-GP ones, their use on spaces with discrete structures (\emph{e.g.,} chemical spaces~\citep{reymond2012exploring}, graphs and even mixtures of different types of spaces) is still application-specific~\citep{kandasamy2018neural,korovina2019chembo}. 
The main reason is the difficulty of defining kernels flexible enough to model dependencies across different types of variables.
On mixed variable spaces which consist of different types of variables including continuous, ordinal and nominal variables, current BO approaches resort to non-GP surrogate models, such as simple linear models or linear models with manually chosen basis functions~\citep{daxberger2019mixed}.
However, such linear approaches are limited because they may lack the necessary model capacity.
 
% In this paper, we propose a GP surrogate model for BO which has capability to model dependencies across different types of variables.
% Specifically, we propose a new way to construct kernels on mixed variable spaces, which we term \textit{frequency modulation}. \comment{I would rewrite the previous sentence. Why emphasize new? In my knowledge, nobody has proposed one before in the first place. Our closest competitor is ProdLab, which in a way is our invention. Also, I like the fact that you use the term mixed variables. However, this term does not appear anywhere in your problem formulation in the previous paragraphs and sentences.}
There is much progress on BO using GP surrogate models~(GP BO) for continuous, as well as for discrete variables. However, for mixed variables it is not straightforward how to define kernels ,which can model dependencies across different types of variables. 
To bridge the gap, we propose \textit{frequency modulation} which uses distances on continuous variables to modulate the frequencies of the graph spectrum ~\citep{ortega2018graph} where the graph represents the discrete part of the search space~\citep{oh2019combinatorial}.

A potential problem in the frequency modulation is that it does not always define a kernel with the similarity measure behavior~\citep{vert2004primer}.
That is, the frequency modulation does not necessarily define a kernel that returns higher values for pairs of more similar points.
Formally, for a stationary kernel $k(x,y) = s(x - y)$, $s$ should be decreasing~\citep{remes2017non}.
In order to guarantee the similarity measure behavior of kernels constructed by frequency modulation, we stipulate a condition, the \textit{frequency modulation principle}.
Theoretical analysis results in proofs of the positive definiteness as well as the effect of the frequency modulation principle.
We coin frequency modulated~(FM) kernels as the kernels constructed by frequency modulation and respecting the frequency modulation principle.

Different to methods that construct kernels on mixed variables by kernel addition and kernel multiplication, for example, FM kernels do not impose an independence assumption among different types of variables.
In FM kernels, quantities in the two domains, that is the distances in a spatial domain and the frequencies in a Fourier domain, interact.
Therefore, the restrictive independence assumption is circumvented, and thus flexible modeling of mixed variable functions is enabled.

In this paper, \textit{(i)} we propose frequency modulation, a new way to construct kernels on mixed variables, \textit{(ii)} we provide the condition to guarantee the similarity measure behavior of FM kernels together with a theoretical analysis, and \textit{(iii)} we extend frequency modulation so that it can model complex dependencies between arbitrary types of variables.
% \add{Perhaps here  you want to start with listing your contributions. Then close up with a summary of experiments you perflrm.}
In experiments, we validate the benefit of the increased modeling capacity of FM kernels and the importance of the frequency modulation principle for improved sample efficiency on different mixed variable BO tasks.
We also test BO with GP using FM kernels~(BO-FM) on a challenging joint optimization of the neural architecture and the hyperparameters with two strong baselines, Regularized Evolution~(RE)~\citep{real2019regularized} and BOHB~\citep{falkner2018bohb}.
BO-FM outperforms both baselines which have proven their competence in neural architecture search~\citep{dong2021nats}. 
Remarkably, BO-FM outperforms RE with three times evaluations.
% Lastly, on the hyperparameter optimization of WideResNet~\citep{zagoruyko2016wide}, BO-FM finds an architecture with $0.53\%$ reduction in validation error and $13.31\%$ reduction in the number of parameters, compared with the original architecture which was considerably optimized via extensive empirical evaluations.

% % ----- SECTION -------
\section{Preliminaries}
\label{sec:preliminary}
\vspace{-4pt}

\subsection{Bayesian Optimization with Gaussian Processes}
\vspace{-4pt}
Bayesian optimization~(BO) aims at finding the global optimum of a black-box function $g$ over a search space $\calX$. 
At each round BO performs an evaluation $y_i$ on a new point $\bfx_i \in \calX$, collecting the set of evaluations $\calD_t=\{(\bfx_i, y_i)\}_{i=1,\cdots,t}$ at the $t$-th round.
Then, a surrogate model approximates the function $g$ given $\calD_t$ using the predictive mean $\mu(\bfx_* \vert \calD_t)$ and the predictive variance $\sigma^2 (\bfx_* \vert \calD_t)$.
Now, an acquisition function $r(\bfx_*) = r(\mu(\bfx_* \vert \calD_t), \sigma^2 (\bfx_* \vert \calD_t))$ quantifies how informative input $\bfx \in \calX$ is for the purpose of finding the global optimum.
$g$ is then evaluated at $\bfx_{t+1} = \argmax_{\bfx \in \calX} r(\bfx)$, $y_{t+1} = g(\bfx_{t+1})$.
With the updated set of evaluations, $\calD_{t+1} = \calD_t \cup \{(\bfx_{t+1}, y_{t+1})\}$, the process is repeated.

A crucial component in BO is thus the surrogate model.
Specifically, the quality of the predictive distribution of the surrogate model is critical for balancing the exploration-exploitation trade-off~\citep{shahriari2015taking}. Compared with other surrogate models~(such as Random Forest~\citep{hutter2011sequential} and a tree-structured density estimator~\citep{bergstra2011algorithms}), Gaussian Processes~(GPs) tend to yield better results~\citep{snoek2012practical,oh2018bock}.

For a given kernel $k$ and data $\calD=(\mathbf{X},\bfy)$ where $\mathbf{X}=[\bfx_1, \cdots, \bfx_n]^T$ and $\bfy = [y_1, \cdots, y_n]^T$, a GP has a predictive mean $\mu(\bfx_* \vert \mathbf{X},\bfy) = k_{*\mathbf{X}}(k_{\mathbf{X}\mathbf{X}} + \sigma^2 I)^{-1}\bfy$ and predictive variance $\sigma^2 (\bfx_* \vert \mathbf{X},\bfy) = k_{**} - k_{*\mathbf{X}}(k_{\mathbf{X}\mathbf{X}} + \sigma^2 I)^{-1}k_{\mathbf{X}*}$ where $k_{**} = k(\bfx_*, \bfx_*)$, $[k_{*\mathbf{X}}]_{1,i} = k(\bfx_*, \bfx_i)$, $k_{\mathbf{X}*} = (k_{*\mathbf{X}})^T$ and $[k_{\mathbf{X}\mathbf{X}}]_{i,j} = k(\bfx_i,\bfx_j)$.

% --------------------------------------------------
% --------------------------------------------------
% --------------------------------------------------
% --------------------------------------------------
% --------------------------------------------------
\subsection{Kernels on discrete variables}
\vspace{-4pt}
We first review some kernel terminology~\citep{scholkopf2001learning} that is needed in the rest of the paper.
\vspace{-2pt}
\begin{definition}[Gram Matrix] 
    Given a function $k : \calX \times \calX \rightarrow \bbR$ and data $x_1, \cdots, x_n \in \calX$, the $n \times n$ matrix $K$ with elements $[K]_{ij} = k(x_i, x_j)$ is called the Gram matrix of $k$ with respect to $x_1, \cdots, x_n$.
\end{definition}
\vspace{-4pt}
\begin{definition}[Positive Definite Matrix]
    A real $n \times n$ matrix $K$ satisfying $\sum_{i,j} a_i [K]_{ij} a_j \ge 0$ for all $a_i \in \bbR$ is called positive definite~(PD)\footnote{Sometimes, different terms are used, semi-positive definite for $\sum_{i,j} a_i [K]_{ij} a_j \ge 0$ and positive definite for $\sum_{i,j} a_i [K]_{ij} a_j > 0$. Here, we stick to the definition in~\citep{scholkopf2001learning}.}.
\end{definition}
\vspace{-4pt}
\begin{definition}[Positive Definite Kernel]\label{def:pd_kernel}
    A function $k : \calX \times \calX \rightarrow \bbR$ which  gives rise to a positive definite Gram matrix for all $n \in \bbN$ and all $x_1, \cdots, x_n \in \calX$ is called a positive definite~(PD) kernel, or simply a kernel.
\end{definition}
\vspace{-2pt}

A search space which consists of discrete variables, including both nominal and ordinal variables, can be represented as a graph~\citep{kondor2002diffusion,oh2019combinatorial}.
In this graph each vertex represents one state of exponentially many joint states of the discrete variables. The edges represent relations between these states (e.g. if they are similar) ~\citep{oh2019combinatorial}. 
With a graph representing a search space of discrete variables, kernels on a graph can be used for BO.
In~\citep{smola2003kernels}, for a positive decreasing function $f$ and a graph $\calG = (\calV, \calE)$ whose graph Laplacian $L(\calG)$\footnote{In this paper, we use a (unnormalized) graph Laplacian $L(\calG) = D - A$ while, in \citep{smola2003kernels}, symmetric normalized graph Laplacian, $L^{sym}(\calG) = D^{-1/2}(D - A)D^{-1/2}$.~($A$ : adj. mat. / $D$ : deg. mat.) Kernels are defined for both.} has the eigendecomposition $U \Lambda U^T$, it is shown that a kernel can be defined as
\begin{equation}\label{eq:kernel_graph_laplacian}
    k_{disc}(v,v' \vert \beta) = [U f(\Lambda \vert \beta) U^T]_{v,v'}
\end{equation} \par
where $\beta \ge 0$ is a kernel parameter and $f$ is a positive decreasing function. 
It is the reciprocal of a regularization operator~\citep{smola2003kernels} which penalizes high frequency components in the spectrum.

% ----- SECTION -------
\section{Mixed Variable Bayesian Optimization}
\label{sec:method}

% Gaussian process~(GP) has been the most popular surrogate model for BO thanks to its well-calibrated uncertainty~\citep{rasmussen2003gaussian}.
% Still, in spite of the high demand in sample efficient black-box function optimization methods on mixed variable spaces~\citep{daxberger2019mixed,nguyen2019bayesian}, GP surrogate models are not as popular. 
% The main reason is the difficulty of defining kernels flexible enough to model complex dependencies across different types of variables.
With the goal of obtaining flexible kernels on mixed variables which can model complex dependencies across different types of variables,
we propose the frequency modulated~(FM) kernel. 
Our objective is to enhance the modelling capacity of GP surrogate models and, thereby improve the sample efficiency of mixed-variable BO.
FM kernels use the continuous variables to modulate the frequencies of the kernel of discrete variables defined on the graph. 
As a consequence, FM kernels can model complex dependencies between continuous and discrete variables.
Specifically, let us start with continuous variables of dimension $D_{\calC}$, and discrete variables represented by the graph $\calG = (\calV, \calE)$ whose graph Laplacian $L(\calG)$ has eigendecompostion $U \Lambda U^T$.
To define a frequency modulated kernel we consider the function $k : (\bbR^{D_{\calC}} \times \calV) \times (\bbR^{D_{\calC}} \times \calV) \Rightarrow \bbR$ of the following form
\begin{align} \label{eq:FM_kernel_form}
    &k((\bfc,v),(\bfc',v') \vert \beta, \bftheta) \nonumber \\
    &= \sum_{i=1}^{\vert \calV \vert} [U]_{v,i}f(\lambda_i, \Vert \bfc - \bfc' \Vert_{\bftheta} \vert \beta)[U]_{v',i} 
\end{align}\vspace{-6pt}\\
where $\Vert \bfc - \bfc' \Vert_{\bftheta}^2 = \sum_{d=1}^{D_{\calC}} (c_d - c_d')^2 / \theta_d^2$ and ($\bftheta$, $\beta$) are tunable parameters. $f$ is the frequency modulating function defined below in Def.~\ref{def:fm_function}.

The function $f$ in Eq.~\eqref{eq:FM_kernel_form} takes frequency $\lambda_i$ and distance $\Vert \bfc - \bfc' \Vert_{\bftheta}^2$ as arguments, and its output is combined with the basis $[U]_{v,i}$.
That is, the function $f$ processes the information in each eigencomponent separately while Eq.~\eqref{eq:FM_kernel_form} then sums up the information processed by $f$.
Note that unlike kernel addition and kernel product,\footnote{e.g $k_{add}((\bfc,v),(\bfc',v')) = e^{-\Vert \bfc - \bfc' \Vert_{\bftheta}^2} + k_{disc}(v,v')$ and $k_{prod}((\bfc,v),(\bfc',v')) = e^{-\Vert \bfc - \bfc' \Vert_{\bftheta}^2} \cdot k_{disc}(v,v')$}, the distance $\Vert \bfc - \bfc' \Vert_{\bftheta}^2$ influences each eigencomponent separately as illustrated in Figure.\ref{fig:FM_GM}.
Unfortunately, Eq.~\eqref{eq:FM_kernel_form} with an arbitrary function $f$ does not always define a positive definite kernel.
Moreover, Eq.~\eqref{eq:FM_kernel_form} with an arbitrary function $f$ may return higher kernel values for less similar points, which is not expected from a proper similarity measure~\citep{vert2004primer}.
To this end, we first specify three properties of functions $f$ such that Eq.~\eqref{eq:FM_kernel_form} guaranteed to be a positive definite kernel and a proper similarity measure at the same time.
Then, we motivate the necessity of each of the properties in the following subsections.
\begin{definition}[Frequency modulating function]\label{def:fm_function}
    A frequency modulating function is a function $f : \bbR^+ \times \bbR \rightarrow \bbR$ satisfying the three properties below.
    \vspace{-4pt}
    \begin{enumerate}[leftmargin=34pt, itemsep=-2pt]
        \item [\textbf{FM-P1}] For a fixed $t \in \bbR$, $f(s, t)$ is a positive and decreasing function with respect to $s$ on $[0,\infty)$.
        \item [\textbf{FM-P2}] For a fixed $s \in \bbR^+$, $f(s, \Vert \bfc - \bfc' \Vert_{\bftheta})$ is a positive definite kernel on $(\bfc, \bfc') \in \bbR^{D_{\calC}} \times \bbR^{D_{\calC}}$.
        \item [\textbf{FM-P3}] For $t_1 < t_2$, $h_{t_1, t_2}(s) = f(s, t_1) - f(s, t_2)$ is positive, strictly decreasing and convex w.r.t $s \in \bbR^+$.
    \end{enumerate}
\end{definition}

\begin{definition}[FM kernel]
    A FM kernel is a function on $(\bbR^{D_{\calC}} \times \calV) \times (\bbR^{D_{\calC}} \times \calV)$ of the form in Eq.~\eqref{eq:FM_kernel_form}, where $f$ is a frequency modulating function on $\bbR^+ \times \bbR$.
\end{definition}

\begin{figure}
    \centering
    \begin{minipage}[c]{0.8\columnwidth}
        \vspace{-4pt}
        \centering
        \includegraphics[width=0.8\columnwidth]{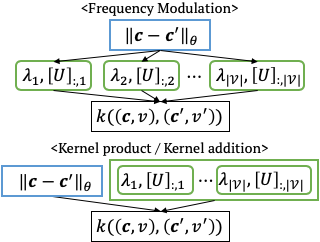}
        \caption{Influence on eigencomponents}\label{fig:FM_GM}
        \vspace{-4pt}
    \end{minipage}
    \vspace{-4pt}
\end{figure}

%--------------------------------------------------------------------------------
%--------------------------------------------------------------------------------
%--------------------------------------------------------------------------------
%--------------------------------------------------------------------------------
%--------------------------------------------------------------------------------
%--------------------------------------------------------------------------------
%--------------------------------------------------------------------------------
\subsection{Frequency Regularization of FM kernels}
\vspace{-2pt}

In~\citep{smola2003kernels}, it is shown that Eq.~\eqref{eq:kernel_graph_laplacian} defines a kernel that regularizes the eigenfunctions with high frequencies when $f$ is positive and decreasing.
It is also shown that the reciprocal of $f$ in Eq.~\eqref{eq:kernel_graph_laplacian} is a corresponding regularization operator.
For example, the diffusion kernel defined with $f(\lambda) = \exp(-\beta \lambda)$ corresponds to the regularization operator $r(\lambda) = \exp(\beta \lambda)$.
The regularized Laplacian kernel defined with $f(\lambda) = 1 / (1 + \beta \lambda)$ corresponds to the regularization operator $r(\lambda) = 1 + \beta \lambda$.
Both regularization operators put more penalty on higher frequencies $\lambda$.

Therefore, the property \textbf{FM-P1} forces FM kernels to have the same regularization effect of promoting a smoother function by penalizing the eigenfunctions with high frequencies.

%--------------------------------------------------------------------------------
%--------------------------------------------------------------------------------
%--------------------------------------------------------------------------------
%--------------------------------------------------------------------------------
%--------------------------------------------------------------------------------
%--------------------------------------------------------------------------------
%--------------------------------------------------------------------------------
\subsection{Positive Definiteness of FM kernels}\label{subsec:pd_fm}
\vspace{-2pt}

Determining whether Eq.\ref{eq:FM_kernel_form} defines a positive definite kernel is not trivial.
The reason is that the gram matrix $[k((\bfc_i, v_i),(\bfc_j, v_j))]_{i,j}$ is not determined only by the entries $v_i$ and $v_j$, but these entries are additionally affected by different distance terms $\Vert \bfc_i - \bfc_j \Vert_{\bftheta}$.
To show that FM kernels are positive definite, it is sufficient to show that $f(\lambda_i, \Vert \bfc - \bfc' \Vert_{\bftheta} \vert ~\beta)$ is positive definite on $(\bfc, \bfc') \in \bbR^{D_{\calC}} \times \bbR^{D_{\calC}}$.
\begin{theorem}\label{thm:positive_definite}
    If $f(\lambda, \Vert \bfc - \bfc' \Vert_{\bftheta} \vert ~\beta)$ defines a positive definite kernel with respect to $\bfc$ and $\bfc'$, then the FM kernel with such $f$ is positive definite \textbf{jointly} on $\bfc$ and $v$.
    That is, the positive definiteness of $f(\lambda, \Vert \bfc - \bfc' \Vert_{\bftheta} \vert ~\beta)$ on $\bbR^{D_{\calC}}$ implies the positive definiteness of the FM kernel on $\bbR^{D_{\calC}} \times \calV$.
\end{theorem}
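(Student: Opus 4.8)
The plan is to verify Definition~\ref{def:pd_kernel} directly, exploiting the fact that the eigenvectors $[U]_{\cdot,i}$ enter Eq.~\eqref{eq:FM_kernel_form} in a separable, rank-one fashion. Fix an arbitrary $n \in \bbN$, points $(\bfc_1,v_1),\dots,(\bfc_n,v_n) \in \bbR^{D_{\calC}} \times \calV$, and coefficients $a_1,\dots,a_n \in \bbR$. Substituting Eq.~\eqref{eq:FM_kernel_form} into the quadratic form of the Gram matrix and interchanging the two finite sums (legitimate since $\vert\calV\vert < \infty$),
\begin{align*}
  \sum_{p,q=1}^{n} a_p a_q\, k\big((\bfc_p,v_p),(\bfc_q,v_q) \vert \beta,\bftheta\big)
  &= \sum_{i=1}^{\vert\calV\vert} \sum_{p,q=1}^{n} \big(a_p [U]_{v_p,i}\big)\big(a_q [U]_{v_q,i}\big)\, f\big(\lambda_i, \Vert \bfc_p - \bfc_q \Vert_{\bftheta} \vert \beta\big).
\end{align*}

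The key step is to treat one eigenindex $i$ at a time. Setting $b^{(i)}_p := a_p [U]_{v_p,i} \in \bbR$, the inner double sum is exactly the quadratic form of the Gram matrix of the kernel $(\bfc,\bfc') \mapsto f(\lambda_i, \Vert \bfc - \bfc' \Vert_{\bftheta} \vert \beta)$ at the points $\bfc_1,\dots,\bfc_n$, evaluated on the coefficients $b^{(i)}_1,\dots,b^{(i)}_n$. By hypothesis (which is precisely property \textbf{FM-P2}) this kernel is positive definite on $\bbR^{D_{\calC}}$, so the inner sum is $\ge 0$ for every $i$. Summing these nonnegative contributions over $i = 1,\dots,\vert\calV\vert$ shows the left-hand side is $\ge 0$, i.e.\ the FM kernel is positive definite jointly on $\bbR^{D_{\calC}} \times \calV$.

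Equivalently, one can phrase the argument structurally: the FM kernel equals $\sum_{i=1}^{\vert\calV\vert} k_i$, where $k_i\big((\bfc,v),(\bfc',v')\big) = \big([U]_{v,i}[U]_{v',i}\big)\cdot f(\lambda_i, \Vert \bfc - \bfc' \Vert_{\bftheta} \vert \beta)$ is the product of the rank-one positive definite kernel $(v,v') \mapsto [U]_{v,i}[U]_{v',i}$ on $\calV$ with the positive definite kernel $(\bfc,\bfc') \mapsto f(\lambda_i, \cdot)$ on $\bbR^{D_{\calC}}$; this product is positive definite on the joint domain by the Schur product theorem (its Gram matrix is a Hadamard product of two positive definite matrices), and a finite sum of positive definite kernels is positive definite.

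I do not anticipate a genuine obstacle. The difficulty flagged just before the theorem statement — that the Gram matrix is not determined by the discrete labels alone but is entangled with the continuous distances $\Vert \bfc_p - \bfc_q \Vert_{\bftheta}$ — dissolves once the sum over eigenindices is pulled to the outside: each eigencomponent then contributes an independent positive definite quadratic form in the reweighted coefficients $b^{(i)}_p$. The only point requiring a little care is that the conclusion must not assume the $\bfc_p$ or the labels $v_p$ are distinct; this is automatic, because positive definiteness of a kernel is demanded for \emph{all} finite tuples of points, repetitions included, so the Gram matrices invoked above are legitimately positive definite even when some coordinates coincide.
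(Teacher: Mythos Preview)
Your proof is correct and essentially identical to the paper's: both split the FM kernel as $\sum_i k_i$ over eigenindices, absorb the eigenvector factors $[U]_{v_p,i}$ into new coefficients $b^{(i)}_p = a_p[U]_{v_p,i}$ (the paper writes this as the Hadamard product $\bfa \circ [U]_{:,i}$), and then invoke positive definiteness of $f(\lambda_i,\cdot)$ on the resulting quadratic form. Your additional Schur-product phrasing is a nice equivalent viewpoint not spelled out in the paper, but the core argument is the same.
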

\vspace{-4pt}
\begin{proof}
    See Supp.~Sec.\ref{supp:sec:thm_definite}, Thm.~\ref{supp:thm:positive_definite}.
\end{proof}
Note that Theorem \ref{thm:positive_definite} shows that the property \textbf{FM-P2} guarantees that FMs kernels are positive definite jointly on $\bfc$ and $v$.

In the current form of Theorem~\ref{thm:positive_definite}, the frequency modulating functions depend on the distance $\Vert \bfc - \bfc' \Vert_{\bftheta}$.
However, the proof does not change for the more general form of $f(\lambda, \bfc, \bfc' \vert \alpha, \beta)$, where $f$ does not depend on $\Vert \bfc - \bfc' \Vert_{\bftheta}$.
Hence, Theorem~\ref{thm:positive_definite} can be extended to the more general case that $f(\lambda, \bfc, \bfc' \vert \alpha, \beta)$ is positive definite on $(\bfc, \bfc') \in \bbR^{D_{\calC}} \times \bbR^{D_{\calC}}$.

%--------------------------------------------------------------------------------
%--------------------------------------------------------------------------------
%--------------------------------------------------------------------------------
%--------------------------------------------------------------------------------
%--------------------------------------------------------------------------------
%--------------------------------------------------------------------------------
%--------------------------------------------------------------------------------
\subsection{Frequency Modulation Principle}\label{subsec:principle}
\vspace{-2pt}

A kernel, as a similarity measure, is expected to return higher values for pairs of more similar points and vice versa~\citep{vert2004primer}.
We call such behavior the \textit{similarity measure behavior}.

In Eq.~\eqref{eq:FM_kernel_form}, the distance $\Vert \bfc - \bfc' \Vert_{\bftheta}$ represents a quantity in the ``spatial'' domain interacting with quantities $\lambda_i$s in the ``frequency'' domain.
Due to the interplay between the two different domains, the kernels of the form Eq.~\eqref{eq:FM_kernel_form} do not exhibit the similarity measure behavior \emph{for an arbitrary function} $f$.
Next, we derive a sufficient condition on $f$ for the similarity measure behavior to hold for FM kernels.

Formally, the similarity measure behavior is stated as 
\vspace{-2pt}
\begin{align}\label{eq:FM_principle_distance}
    &\Vert \bfc - \bfc' \Vert_{\bftheta} \le \Vert \Tilde{\bfc} - \Tilde{\bfc}' \Vert_{\bftheta} \nonumber \\
    &~~\Rightarrow~~
    k((\bfc,v),(\bfc',v')) \ge k((\Tilde{\bfc},v),(\Tilde{\bfc}',v'))
\end{align}
\vspace{-4pt}\\
or equivalently,
\begin{align}\label{eq:similarity_measure}
    &\Vert \bfc - \bfc' \Vert_{\bftheta} \le \Vert \Tilde{\bfc} - \Tilde{\bfc}' \Vert_{\bftheta} \nonumber \\
    &~~\Rightarrow~~
    \sum_{i=1}^{\vert \calV \vert} 
    [U]_{v,i}
    h_{t_1, t_2}(\lambda_i \vert \beta)
    [U]_{v',i} \ge 0
\end{align}
\vspace{-4pt}\\
where $h_{t_1, t_2}(\lambda \vert \beta) = f(\lambda, t_1 \vert \beta) - f(\lambda, t_2 \vert \beta)$, $t_1 = \Vert \bfc - \bfc' \Vert_{\bftheta}$ and $t_2 = \Vert \Tilde{\bfc} - \Tilde{\bfc}' \Vert_{\bftheta}$.

\begin{theorem}\label{thm:nonnegative_valued}
    For a connected and weighted undirected graph $\calG=(\calV,\calE)$ with non-negative weights on edges, define a similarity (or kernel) $a(v,v') = [Uh(\Lambda)U^T]_{v,v'}$, where $U$ and $\Lambda$ are eigenvectors and eigenvalues of the graph Laplacian $L(\calG)=U \Lambda U^T$. 
    If $h$ is any non-negative and strictly decreasing convex function on $[0, \infty)$, then $a(v,v') \ge 0$ for all $v, v' \in \calV$.
\end{theorem}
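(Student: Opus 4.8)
The plan is to use linearity of the spectral calculus $h \mapsto U h(\Lambda) U^T$ to reduce the claim to a single family of "building block'' functions, and then to analyse the building block via the fact that $-L(\calG)$ has non-negative off-diagonal entries. First, only the off-diagonal entries need work: for $v = v'$ we have $a(v,v) = \sum_i [U]_{v,i}^2 h(\lambda_i) \ge 0$ from $h \ge 0$ alone. Next, $a(v,v')$ depends on $h$ only through the finitely many values $h(\lambda_1),\dots,h(\lambda_{|\calV|})$, so I would replace $h$ by its piecewise-linear interpolant through the points $(\lambda_i,h(\lambda_i))$; convexity makes the successive slopes non-decreasing and places the interpolant on or above the graph of $h$, so it is again non-negative, decreasing, convex, and — crucially — eventually constant.

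A non-negative, decreasing, convex, eventually-constant function admits the representation $h(s) = h(\infty) + \int_{(0,\infty)} (u-s)_+\, d\mu(u)$ with $h(\infty)\ge 0$ and $\mu = h''$ a non-negative (finitely supported) measure: convexity gives $\mu \ge 0$, while decrease together with non-negativity rule out a linear term at infinity and force $h(\infty)\ge 0$. By linearity, $a(v,v') = h(\infty)[I]_{v,v'} + \int_{(0,\infty)} \big[U(uI-\Lambda)_+U^T\big]_{v,v'}\,d\mu(u)$; the first term vanishes off the diagonal, so the whole statement reduces to showing that for each fixed $u > 0$ the matrix $M^{(u)} := U(uI-\Lambda)_+U^T$ has non-negative off-diagonal entries.

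This last point is where I expect the real difficulty. Splitting $(u-s)_+ = (u-s) + (s-u)_+$ gives $M^{(u)} = (uI - L(\calG)) + U(\Lambda - uI)_+U^T$, and the off-diagonal entries of $uI - L(\calG)$ are exactly the edge weights $A_{v,v'}\ge 0$ — the easy half. For the rest, $(u-s)_+ = \int_0^u \mathbf 1_{\{s<r\}}\,dr$ exhibits $M^{(u)} = \int_0^u P_{<r}\,dr$ as an average of spectral projectors of $L(\calG)$, which one would like to control through the quantities that are manifestly entrywise non-negative: the heat kernels $e^{-tL(\calG)}\ge 0$ and hence the resolvents $(uI+L(\calG))^{-1} = \int_0^\infty e^{-tu}e^{-tL(\calG)}\,dt\ge 0$ (non-negative because $-L(\calG)$ is Metzler and $\calG$ is connected). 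The obstacle is that $s\mapsto (u-s)_+$ is convex and decreasing but not completely monotone, so $M^{(u)}$ is not a non-negative mixture of heat kernels or resolvents and the M-matrix toolbox does not finish the job; the needed positivity of $[M^{(u)}]_{v,v'}$ would have to be extracted from the low-pass shape of $(u-\cdot)_+$, which makes the $\lambda=0$ eigenprojector $\tfrac1{|\calV|}\mathbf 1\mathbf 1^{T}$ (present because $\calG$ is connected) dominate the sign-indefinite contributions of the higher eigenprojectors. I would try to establish this domination either by induction on $|\calE|$ or by a direct estimate using the moments $[L(\calG)^k]_{v,v'}$ and the chord inequalities granted by convexity; if it does not go through in full generality, the natural remedies are to strengthen the hypothesis on $h$ to complete monotonicity (whence $h(L(\calG)) = \int_0^\infty e^{-tL(\calG)}\,d\nu(t)\ge 0$ is immediate) or to restrict to the graph families actually used to encode the discrete variables.
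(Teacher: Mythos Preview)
Your reduction is clean and correct: writing a non-negative, decreasing, convex, eventually-constant $h$ as $h(\infty)+\int (u-s)_+\,d\mu(u)$ with $\mu\ge 0$ does collapse the whole theorem to the single family $h_u(s)=(u-s)_+$. But notice what that buys you: each $h_u$ is itself non-negative, (weakly) decreasing, and convex, so the claim ``$[M^{(u)}]_{v,v'}\ge 0$ for all $u$'' is exactly the theorem evaluated at the extremal functions. You have identified the extreme rays of the cone of admissible $h$, not simplified the problem. The gap is therefore precisely where you place it, and none of the tools you list closes it. Induction on $|\calE|$ is hopeless because both the eigenvalues and the eigenvectors move when an edge is added, and there is no monotonicity of individual entries of spectral projectors under edge insertion. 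The moment route via $[L(\calG)^k]_{v,v'}$ fails for the reason you already note: $(u-\cdot)_+$ is not a non-negative combination of $e^{-t\cdot}$ or $s^k$, so the signs of the coefficients in any polynomial or exponential expansion will alternate and the entrywise positivity of $e^{-tL}$ does not transfer. Your fallback of strengthening to complete monotonicity works, but it proves a strictly weaker statement than the one asserted.

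The paper's argument is entirely different and does not decompose $h$ at all. It \emph{fixes} the spectrum $\lambda_1=0<\lambda_2\le\cdots\le\lambda_D$ and then, for $p\ne q$, minimises $\tfrac{h(0)}{D}+\sum_{i\ge 2}h(\lambda_i)[U]_{p,i}[U]_{q,i}$ over real variables $[U]_{p,i},[U]_{q,i}$ subject only to the three scalar constraints that actually encode the graph structure: $\sum_{i\ge 2}[U]_{p,i}^2=\sum_{i\ge 2}[U]_{q,i}^2=1-\tfrac1D$, $\sum_{i\ge 2}[U]_{p,i}[U]_{q,i}=-\tfrac1D$ (orthogonality of rows of $U$ together with $[U]_{\cdot,1}=\tfrac{1}{\sqrt D}\mathbf 1$), and $\sum_{i\ge 2}\lambda_i[U]_{p,i}[U]_{q,i}\le 0$ (this is $[L(\calG)]_{p,q}\le 0$). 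The KKT stationarity conditions force, at any minimiser, each index $i\ge 2$ to satisfy $[U]_{p,i}=\pm[U]_{q,i}$, and convexity of $\lambda\mapsto h(\lambda)+\eta\lambda$ limits the active eigenvalue levels to at most two on each sign. The problem then collapses to a two-variable calculus exercise in $\lambda^E,\lambda^N$ whose minimum is shown to be $0$, attained in the limit $\lambda^E\to 0$. Convexity of $h$ enters twice: to bound the number of solutions of $h(\lambda)+\eta\lambda=\text{const}$, and to sign the derivative of the reduced one-variable objective. If you want to push your building-block approach through, the cleanest route is to run this same constrained-minimisation argument directly on $h_u(s)=(u-s)_+$; the piecewise-linear form actually makes the KKT case analysis shorter, but you will not avoid it.
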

Therefore, these conditions on $h(\Lambda)$ result in a similarity measure $a$ with only positive entries, which in turn proves property Eq.~\eqref{eq:similarity_measure}. 
Here, we provide a proof of the theorem for a simpler case with an unweighted complete graph, where Eq.~\eqref{eq:similarity_measure} holds without the convexity condition on $h$. 
\begin{proof}
    For a unweighted complete graph with $n$ vertices, we have  eigenvalues $\lambda_1 = 0$, $\lambda_2 = \cdots = \lambda_n = n$ and eigenvectors such that $[U]_{\cdot1}=1/\sqrt{n}$ and $\sum_{i=1}^n [U]_{v,i}[U]_{v',i} = \delta_{vv'}$.
    For $v \neq v'$, the conclusion in Eq.~\eqref{eq:similarity_measure}, $\sum_{i=1}^n h(\lambda_i)[U]_{v,i}[U]_{v',i}$ becomes $h(0)/n + h(n)\sum_{i=2}^n [U]_{v,i}[U]_{v',i} = (h(0) - h(n)) / n$ in which non-negativity follows with decreasing $h$.
    
    For the complete proof, see
    Thm.~\ref{supp:thm:nonnegative_valued} in Supp.~Sec.~\ref{supp:sec:thm_positivity}.
\end{proof}

Theorem~\ref{thm:nonnegative_valued} thus shows that the property \textbf{FM-P3} is sufficient for Eq.~\eqref{eq:similarity_measure} to hold.
We call the property \textbf{FM-P3} the \textit{frequency modulation principle}.
Theorem \ref{thm:nonnegative_valued} also implies the non-negativity of many kernels derived from graph Laplacian.

\begin{corollary}
    The random walk kernel derived from the symmetric normalized Laplacian~\citep{smola2003kernels}, the diffusion kernels~\citep{kondor2002diffusion,oh2019combinatorial} and the regularized Laplacian kernel~\citep{smola2003kernels} derived from symmetric normalized or unnormalized Laplacian, are all non-negatived valued.
\end{corollary}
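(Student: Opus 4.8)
The strategy is to reduce each case to Theorem~\ref{thm:nonnegative_valued} by exhibiting the spectral filter $h$ defining the kernel and checking that it is non-negative, strictly decreasing and convex on the relevant range of Laplacian eigenvalues. From~\citep{smola2003kernels}, the diffusion kernel is the case $h(\lambda) = \exp(-\beta\lambda)$, the regularized Laplacian kernel the case $h(\lambda) = 1/(1+\beta\lambda)$, and the ($p$-step) random walk kernel from the symmetric normalized Laplacian the case $h(\lambda) = (a-\lambda)^p$ with $a \ge 2$ and $p \in \bbN$; in each case the kernel equals $[U\,h(\Lambda)\,U^T]_{v,v'}$ for the appropriate Laplacian $L(\calG) = U\Lambda U^T$, which is exactly the form to which Theorem~\ref{thm:nonnegative_valued} applies.

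For the diffusion and regularized Laplacian kernels the verification is a one-line calculation: for $\beta > 0$, both $\exp(-\beta\lambda)$ and $1/(1+\beta\lambda)$ are strictly positive on $[0,\infty)$, have strictly negative first derivative there, and have strictly positive second derivative there; hence they are non-negative, strictly decreasing and convex on all of $[0,\infty)$, and Theorem~\ref{thm:nonnegative_valued} applies directly for the unnormalized Laplacian. For the symmetric normalized Laplacian I would observe that $L^{sym}(\calG)$ is still a symmetric positive semidefinite matrix with non-positive off-diagonal entries coming from a connected graph, so the argument behind Theorem~\ref{thm:nonnegative_valued} carries over with $L^{sym}(\calG)$ in place of $L(\calG)$.

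The one case needing an extra step is the random walk kernel, because $h(\lambda) = (a-\lambda)^p$ is neither non-negative nor monotone on all of $[0,\infty)$. Here I would use two facts: the eigenvalues of $L^{sym}(\calG)$ all lie in $[0,2]$, and $a \ge 2$. On $[0,a] \supseteq [0,2]$ the map $\lambda \mapsto (a-\lambda)^p$ is non-negative, strictly decreasing and convex, as one reads off from $h'(\lambda) = -p(a-\lambda)^{p-1}$ and $h''(\lambda) = p(p-1)(a-\lambda)^{p-2}$. Since the conclusion $a(v,v') \ge 0$ depends on $h$ only through its values $h(\lambda_i)$ at the eigenvalues, I would either appeal to the fact that the proof of Theorem~\ref{thm:nonnegative_valued} uses the three properties of $h$ only on $[0,\lambda_{\max}]$, or replace $h$ outside $[0,2]$ by any globally non-negative, strictly decreasing, convex extension agreeing with it on $[0,2]$ (which does not change the kernel), and then invoke Theorem~\ref{thm:nonnegative_valued}.

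I expect the random walk kernel to be the only genuine obstacle, and even it is mild: its content is the spectral bound $\lambda_{\max} \le 2$ for $L^{sym}(\calG)$ together with the observation that only the behaviour of $h$ on the actual spectrum matters. The diffusion and regularized Laplacian cases, for either Laplacian, are immediate consequences of Theorem~\ref{thm:nonnegative_valued}.
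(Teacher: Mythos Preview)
Your proposal is correct and follows the same approach as the paper: reduce each kernel to Theorem~\ref{thm:nonnegative_valued} by identifying the spectral filter and checking the hypotheses, and observe that the proof of the theorem relies only on the non-positivity of the off-diagonal entries of the Laplacian, which holds for $L^{sym}(\calG)$ as well as for $L(\calG)$.

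Your write-up is in fact more careful than the paper's own proof, which simply remarks that the normalized Laplacian has the same sign structure and therefore the theorem carries over, without isolating the random walk case. Your observation that $h(\lambda)=(a-\lambda)^p$ fails the hypotheses of Theorem~\ref{thm:nonnegative_valued} on all of $[0,\infty)$ and must be handled via the spectral bound $\lambda_{\max}(L^{sym})\le 2$ is a genuine detail the paper glosses over. Either of your two fixes works: the proof of Theorem~\ref{thm:nonnegative_valued} indeed only uses the values and convexity of $h$ on $[0,\lambda_D]$, and a smooth convex, strictly decreasing, non-negative extension beyond $\lambda=2$ can be manufactured (e.g.\ an exponential tail matched in value and slope at $\lambda=2$) when $a>2$; for $a=2$ one should rely on the first fix, since any non-negative convex extension must flatten out and cannot remain strictly decreasing on all of $[0,\infty)$.
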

\begin{proof}
    See Cor.~\ref{supp:coro:nonnegative_valued} in Supp.~Sec.~\ref{supp:sec:thm_positivity}.
\end{proof}

%--------------------------------------------------------------------------------
%--------------------------------------------------------------------------------
%--------------------------------------------------------------------------------
%--------------------------------------------------------------------------------
%--------------------------------------------------------------------------------
%--------------------------------------------------------------------------------
\subsection{FM kernels in practice}\label{subsec:FM_in_practice}
\vspace{-2pt}

\paragraph{Scalability} Since the (graph Fourier) frequencies and basis functions are computed by the eigendecomposition of cubic computational complexity, a plain application of frequency modulation makes the computation of FM kernels prohibitive for a large number of discrete variables.
Given $P$ discrete variables where each variable can be individually represented by a graph $\calG_p$, the discrete part of the search space can be represented as a product space, $\calV = \calV_1 \times \cdots \times \calV_P$.

In this case, we define FM kernels on $\bbR^{D_{\calC}} \times \calV = \bbR^{D_{\calC}} \times (\calV_1 \times \cdots \times \calV_P)$ as
\begin{align}\label{eq:scalable_FM}
    &k((\bfc,\bfv),(\bfc',\bfv') \vert \bfalpha, \bfbeta, \bftheta) = \prod_{p=1}^P k_p((\bfc, v_p),(\bfc', v_p') \vert \beta_p, \bftheta) \nonumber \\
    &= \prod_{p=1}^P \sum_{i=1}^{\vert \calV_p \vert} [U^p]_{v_p,i}f(\lambda_i^p, \alpha_p \Vert \bfc - \bfc' \Vert_{\bftheta} \vert \beta_p)[U^p]_{v_p',i}
\end{align} \par
where $\bfv = (v_1, \cdots, v_P$, $\bfv' = (v_1', \cdots, v_P'$, $\bfalpha = (\alpha_1, \cdots, \alpha_P)$ $\bfbeta = (\beta_1, \cdots, \beta_P)$ and the graph Laplacian is given as $L(\calG_p)$ with the eigendecomposition $U_p \text{diag}[\lambda_1^p, \cdots, \lambda_{\Vert \calV_p \Vert}^p]U_p^T$.

Eq.\ref{eq:scalable_FM} should not be confused with the kernel product of kernels on each $\calV_p$.
Note that the distance $\Vert \bfc - \bfc' \Vert_{\bftheta}$ is shared, which introduces the coupling among discrete variables and thus allows more modeling freedom than a product kernel.
In addition to the coupling, the kernel parameter $\alpha_p$s lets us individually determine the strength of the frequency modulation.
%Eq.\ref{eq:scalable_FM} not only allows for scalability but also retains the high modeling capacity.

% In FM kernels with a frequency modulating function $f$,
% \begin{equation}
%     \sum_{i=1}^{\vert \calV \vert} [U]_{v,i}f(\lambda_i, \Vert \bfc - \bfc' \Vert_{\bftheta} \vert \alpha, \beta)[U]_{v',i} \nonumber
% \end{equation}
% $\bftheta$ is similar to the lengthscales of the RBF kernel.
% $\beta$ controls frequencies irrespective of the distance in continuous variables as in Eq.~\eqref{eq:kernel_graph_laplacian}.
% $\alpha$ modulates frequencies according to the distance in continuous variables.

%--------------------------------------------------------------------------------
%--------------------------------------------------------------------------------
%--------------------------------------------------------------------------------
\paragraph{Examples} Defining a FM kernel amounts to constructing a frequency modulating function.
We introduce examples of flexible families of frequency modulating functions.

\begin{proposition}\label{prop:FM_family}
    For $S \in (0, \infty)$, a finite measure $\mu$ on $[0, S]$, $\mu$-measurable $\tau : [0, S] \Rightarrow [0, 2]$ and $\mu$-measurable $\rho : [0, S] \Rightarrow \bbN$, the function of the form below is a frequency modulating function.
    \vspace{-2pt}
    \begin{align}\label{eq:FM_family}
        &f(\lambda, \alpha \Vert \bfc - \bfc' \Vert_{\bftheta} \vert \beta) \nonumber \\
        &= \int_0^S \frac{1}{(1 + \beta \lambda + \alpha \Vert \bfc - \bfc' \Vert_{\bftheta}^{\tau(s)})^{\rho(s)}} \mu(ds)
    \end{align}
    \vspace{-4pt}
\end{proposition}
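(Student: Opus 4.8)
The plan is to verify that the function $f$ of Eq.~\eqref{eq:FM_family} satisfies the three properties \textbf{FM-P1}, \textbf{FM-P2}, \textbf{FM-P3} of Definition~\ref{def:fm_function}, which is exactly what it means to be a frequency modulating function. The device that drives everything is the Gamma integral representation $x^{-\rho} = \frac{1}{\Gamma(\rho)}\int_0^\infty u^{\rho-1}e^{-ux}\,du$, valid for every $x>0$. Since the base $1+\beta\lambda+\alpha\Vert\bfc-\bfc'\Vert_{\bftheta}^{\tau(s)}\ge 1$, the integrand of Eq.~\eqref{eq:FM_family} equals, for each fixed $s$,
\[
  (1+\beta\lambda+\alpha\Vert\bfc-\bfc'\Vert_{\bftheta}^{\tau(s)})^{-\rho(s)} = \frac{1}{\Gamma(\rho(s))}\int_0^\infty u^{\rho(s)-1}\,e^{-u(1+\beta\lambda)}\,e^{-u\alpha\Vert\bfc-\bfc'\Vert_{\bftheta}^{\tau(s)}}\,du ,
\]
so $f$ is a superposition, with the nonnegative weights $\tfrac{u^{\rho(s)-1}}{\Gamma(\rho(s))}\,du\,\mu(ds)$, of the elementary kernels $(\lambda,\bfc,\bfc')\mapsto e^{-u(1+\beta\lambda)}\,e^{-u\alpha\Vert\bfc-\bfc'\Vert_{\bftheta}^{\tau(s)}}$. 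I will check each of the three properties on these elementary pieces and then use that the relevant classes (positive definite kernels; positive, decreasing, convex functions of one variable) are cones closed under integration against a positive measure; since every integrand is nonnegative and dominated, all the bookkeeping interchanges are immediate.

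For \textbf{FM-P2}, fix the frequency $\lambda\ge 0$. Write $\Vert\bfc-\bfc'\Vert_{\bftheta}=\Vert A\bfc-A\bfc'\Vert_2$ with $A=\mathrm{diag}(1/\theta_1,\dots,1/\theta_{D_{\calC}})$. By Schoenberg's theorem, $(\bfx,\bfy)\mapsto\Vert\bfx-\bfy\Vert_2^{\gamma}$ is conditionally negative definite exactly for $\gamma\in[0,2]$, and conditional negative definiteness is preserved by precomposition with the linear map $\bfc\mapsto A\bfc$; hence $(\bfc,\bfc')\mapsto\Vert\bfc-\bfc'\Vert_{\bftheta}^{\tau(s)}$ is conditionally negative definite for every $s$, using $0\le\tau(s)\le2$. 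The Laplace-transform half of Schoenberg's theorem then gives that $(\bfc,\bfc')\mapsto e^{-u\alpha\Vert\bfc-\bfc'\Vert_{\bftheta}^{\tau(s)}}$ is positive definite for every $u\ge 0$ and $\alpha\ge 0$, and multiplying by the positive constant $e^{-u(1+\beta\lambda)}$ keeps it positive definite. Integrating over $u$ and then over $s$ against nonnegative measures preserves positive definiteness (the quadratic form is an integral of nonnegative numbers), so $f(\lambda,\cdot)$ is a positive definite kernel in $(\bfc,\bfc')$, which is \textbf{FM-P2}. Combined with Theorem~\ref{thm:positive_definite}, this already shows the FM kernel built from $f$ is positive definite.

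For \textbf{FM-P1} and \textbf{FM-P3}, regard $f$ as a function of $\lambda$ with the distance fixed. For every $u\ge 0$ and $\beta\ge 0$, the map $\lambda\mapsto e^{-u(1+\beta\lambda)}$ is positive, decreasing and convex on $[0,\infty)$; integrating against the nonnegative weights above — and using only the defining inequalities of positivity/monotonicity/convexity, so that no differentiation under the integral is needed — shows $f(\cdot\,,t)$ is positive and decreasing in $\lambda$, i.e.\ \textbf{FM-P1}. For \textbf{FM-P3}, let $t_1<t_2$ correspond to distances $r_1<r_2$; since $\tau(s)\ge 0$ we have $r_1^{\tau(s)}\le r_2^{\tau(s)}$, hence $e^{-u\alpha r_1^{\tau(s)}}-e^{-u\alpha r_2^{\tau(s)}}\ge 0$, and
\[
  h_{t_1,t_2}(\lambda)=\int_0^S\!\!\int_0^\infty \frac{u^{\rho(s)-1}}{\Gamma(\rho(s))}\bigl(e^{-u\alpha r_1^{\tau(s)}}-e^{-u\alpha r_2^{\tau(s)}}\bigr)\,e^{-u}\,e^{-u\beta\lambda}\,du\,\mu(ds)
\]
is again a nonnegatively weighted superposition of the functions $\lambda\mapsto e^{-u\beta\lambda}$, each positive, strictly decreasing (for $u>0$, $\beta>0$) and convex; hence $h_{t_1,t_2}$ is positive, strictly decreasing and convex, which is \textbf{FM-P3}. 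As an alternative that avoids the integral representation, one can check termwise that $(1+\beta\lambda+\alpha r_1^{\tau(s)})^{-\rho(s)}-(1+\beta\lambda+\alpha r_2^{\tau(s)})^{-\rho(s)}$ is positive, decreasing and convex in $\lambda$, the convexity following from $\partial_\lambda^2\bigl[\,\cdot\,\bigr]=\rho(\rho+1)\beta^2\bigl[(1+\beta\lambda+\alpha r_1^{\tau})^{-\rho-2}-(1+\beta\lambda+\alpha r_2^{\tau})^{-\rho-2}\bigr]\ge 0$.

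The main obstacle is purely conceptual: invoking Schoenberg's two theorems in the right form for \textbf{FM-P2} — that $\Vert\bfx-\bfy\Vert_2^{\gamma}$ is conditionally negative definite precisely on $\gamma\in[0,2]$, which is exactly why the range $[0,2]$ for $\tau$ enters the statement, and that conditional negative definiteness is equivalent (via Laplace transforms) to positive definiteness of all the $e^{-u\alpha(\cdot)}$ — together with the fact that conditional negative definiteness survives the anisotropic rescaling $\bfc\mapsto A\bfc$. The remaining points need only a sentence: the measure-theoretic interchanges are trivial because every integrand is nonnegative and bounded by $1$ (so $f\le\mu([0,S])<\infty$); and the word ``strictly'' in \textbf{FM-P3} requires the mild nondegeneracy $\beta>0$ and $\mu(\{s:\tau(s)>0\})>0$ — otherwise $f$ would not depend on $\lambda$, respectively on the distance, at all — which, along with $\alpha,\beta\ge 0$, we treat as standing assumptions on the parameters.
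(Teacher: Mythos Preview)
Your proof is correct and rests on the same Schoenberg/conditionally-negative-definite foundation as the paper's, but the execution is genuinely different. The paper first treats the single term $f^{p,t}=(1+\beta\lambda+\alpha\Vert\bfc-\bfc'\Vert_{\bftheta}^{t})^{-p}$: for \textbf{FM-P2} it invokes the standard theorem that $(t+K)^{-1}$ is positive definite whenever $K$ is conditionally negative definite and nonnegative, and then uses closure under products (this is where $\rho\in\bbN$ enters); for \textbf{FM-P3} it differentiates twice explicitly. It then passes to the $\mu$-integral by approximating with simple functions and taking pointwise limits (for \textbf{FM-P2}) and by justifying differentiation under $\int\mu(ds)$ via dominated convergence (for \textbf{FM-P3}). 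Your Gamma-integral device $x^{-\rho}=\tfrac{1}{\Gamma(\rho)}\int_0^\infty u^{\rho-1}e^{-ux}\,du$ collapses all of this into one step: it reduces both the power $\rho(s)$ and the $\mu$-integral to a single nonnegative superposition of exponentials, for which positive definiteness, monotonicity and convexity are immediate and propagate by Tonelli without any limit argument or dominated convergence. This buys you a cleaner, more uniform proof that would also cover real $\rho(s)>0$, whereas the paper's product argument genuinely needs $\rho\in\bbN$. Your caveat on the strictness in \textbf{FM-P3} --- requiring $\beta>0$ and $\mu(\{\tau>0\})>0$ --- is well spotted; the paper's own derivative computation tacitly assumes the same, since its $h_\lambda=-p\beta[\cdots]$ vanishes identically when $\beta=0$.
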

\vspace{-4pt}
\begin{proof}
    See Supp.~Sec.\ref{supp:sec:FM_examples}, Prop.\ref{supp:prop:FM_family}. 
\end{proof}

Assuming $S=1$ and $\tau(s) = 2$, Prop.~\ref{prop:FM_family} gives $(1 + \beta \lambda + \alpha \Vert \bfc - \bfc' \Vert_{\bftheta}^2)^{-1}$ with $\rho(s) = 1$ and $\mu(ds) = ds$, and $\sum_{n=1}^N a_n(1 + \beta \lambda + \alpha \Vert \bfc - \bfc' \Vert_{\bftheta}^2)^{-n}$ with $\rho(s) = \lfloor Ns \rfloor$ and $\mu(\{ n/N \}) = a_n \ge 0$ and $\mu([\{n/N\}_{n=1,\cdots,N}^c) = 0$.

\subsection{Extension of the Frequency Modulation}

Frequency modulation is not restricted to distances on Euclidean spaces but it is applicable to any arbitrary space with a kernel defined on it.
As a concrete example of frequency modulation by kernels, we show a non-stationary extension where $f$ does not depend on $\Vert \bfc - \bfc' \Vert_{\bftheta}$ but on the neural network kernel $k_{NN}$~\citep{rasmussen2003gaussian}.
Consider Eq.~\eqref{eq:FM_kernel_form} with $f = f_{NN}$ as follows.
\begin{equation}\label{eq:FM_extension}
    f_{NN}(\lambda, k_{NN}(\bfc, \bfc' \vert \Sigma) \vert \beta) = \frac{1}{2 + \beta \lambda - k_{NN}(\bfc, \bfc' \vert \Sigma)}
\end{equation} \par
where $k_{NN}(\bfc, \bfc' \vert \Sigma) = \frac{2}{\pi} \arcsin{\Big( \frac{2 \bfc^T \Sigma \bfc'}{(1 + \bfc^T \Sigma \bfc)(1 + \bfc'^T \Sigma \bfc')} \Big)}$ is the neural network kernel~\citep{rasmussen2003gaussian}.

Since the range of $k_{NN}$ is $[-1, 1]$, $f_{NN}$ is positive and thus satisfies \textbf{FM-P1}.
Through Eq.\ref{eq:FM_extension}, Eq.\ref{eq:FM_kernel_form} is positive definite~(Supp. Sec.\ref{supp:sec:FM_examples}, Prop.\ref{supp:prop:fm_kernel_extension_PD}) and thus property \textbf{FM-P2} is satisfied.
If the premise $t_1 < t_2$ of the property \textbf{FM-P3} is replaced by $t_1 > t_2$, then \textbf{FM-P3} is also satisfied.
% \comment{The following sentence is also quite weird. What is the ``one with''?}
% \mw{The next sentence is strange:}\ocy{fixed}
In contrast to the frequency modulation principle with distances in Eq.~\eqref{eq:FM_principle_distance}, the frequency modulation principle with a kernel is formalized as 
\begin{align}\label{eq:FM_principle_innerprod}
    &k_{NN}(\bfc, \bfc' \vert \Sigma) \ge k_{NN}(\Tilde{\bfc}, \Tilde{\bfc}' \vert \Sigma) \nonumber \\
    &\Rightarrow k((\bfc,v),(\bfc',v')) \ge k((\Tilde{\bfc},v),(\Tilde{\bfc}',v'))
\end{align} \par
Note that $k_{NN}(\bfc, \bfc' \vert \Sigma)$ is a similarity measure and thus the inequality is not reversed unlike Eq.~\eqref{eq:FM_principle_distance}.

All above arguments on the extension of the frequency modulation using a nonstationary kernel hold also when the $k_{NN}$ is replaced by an arbitrary positive definite kernel.
The only required condition is that a kernel has to be upper bounded, \emph{i.e.}, $k_{NN}(\bfc, \bfc') \leq C$, needed for \textbf{FM-P1} and \textbf{FM-P2}.

% ----- SECTION -------
\section{Related work}
\label{sec:related_work}
\vspace{-4pt}
\vspace{-4pt}

On continuous variables, many sophisticated kernels have been proposed~\citep{wilson2015kernel,samo2015generalized,remes2017non,oh2018bock}.
In contrast, kernels on discrete variables have been studied less~\citep{haussler1999convolution,kondor2002diffusion,smola2003kernels}.
To our best knowledge, most of existing kernels on mixed variables are constructed by a kernel product~\cite{swersky2013multi,li2016contextual} with some exceptions~\citep{krause2011contextual,swersky2013multi,fiducioso2019safe}, which rely on kernel addition.
% \comment{In my opinion you can say a few more things, 3-4 more lines, for each of the methods. Or highlight some.}

% This improved modeling capacity may have a stronger impact in Bayesian optimization tasks than GP prediction tasks because of the following reasons.
% Firstly, the sequential nature of decisions made by Bayesian optimization makes the decisions depends on earlier decisions. Therefore, small differences in early decisions are likely to large differences in latter consequences.\ocy{credit assignment ref?}
% Secondly, in typical Bayesian optimization scenarios, the number of evaluations is small enough so that the influence of a prior, which is specified by kernels in GPs, is not wiped out data in the posterior.

In mixed variable BO, non-GP surrogate models are more prevalent, including SMAC~\citep{hutter2011sequential} using random forest and TPE~\citep{bergstra2011algorithms} using a tree structured density estimator.
Recently, by extending the approach of using Bayesian linear regression for discrete variables~\citep{baptista2018bayesian}, \citet{daxberger2019mixed} proposes Bayesian linear regression with manually chosen basis functions on mixed variables, providing a regret analysis using Thompson sampling as an acquisition function.
Another family of approaches utilizes a bandit framework to handle the acquisition function optimization on mixed variables with theoretical analysis~\citep{gopakumar2018algorithmic,nguyen2019bayesian,ru2020bayesian}.
\citet{nguyen2019bayesian} use GP in combination with multi-armed bandit to model category-specific continuous variables and provide regret analysis using GP-UCB.
Among these approaches, \citet{ru2020bayesian} also utilize information across different categorical values, which --in combination with the bandit framework-- makes itself the most competitive method in the family.

Our focus is to extend the modelling prowess and flexibility of pure GPs for surrogate models on problems with mixed variables.
% \str{heterogeneous or mixed? Better use consistent terminology}
We propose frequency modulated kernels, which are kernels that are specifically designed to model the complex interactions between continuous and discrete variables.

In architecture search, approaches using weight sharing such as DARTS~\citep{liu2018darts} and ENAS~\citep{pham2018efficient} are gaining popularity.
In spite of their efficiency, methods training neural networks from scratch for given architectures outperform approaches based on weight sharing~\citep{dong2021nats}.
Moreover, the joint optimization of learning hyperparameters and architectures is under-explored with a few exceptions such as BOHB~\citep{falkner2018bohb} and autoHAS~\citep{dong2020autohas}.
Our approach proposes a competitive option to this challenging optimization of mixed variable functions with expensive evaluation cost.

% In the surrogate model fitting step using MCMC, the slice sampling~\citep{neal2003slice} is more prevalent~\citep{murray2010slice,snoek2012practical,snoek2014input,oh2018bock,oh2019combinatorial}.
% However, in multivariate cases, it is typical to use one dimensional slice sampler in Gibbs sampler,
% hence the slice sampling scales poorly to a large number of hyperparameters.
% In the sampling step of BO, we use NUTS~\citep{hoffman2014no}.
% The sampling step may be further improved by using the recent progress in the accelerated implementation of NUTS~\citep{phan2019composable}.

\vspace{-4pt}

% ----- SECTION -------
\section{Experiments}
\label{sec:experiment}
\vspace{-4pt}
\vspace{-4pt}

\begin{figure*}[!t]
    \vspace{-6pt}
    \centering
    \includegraphics[width=\linewidth]{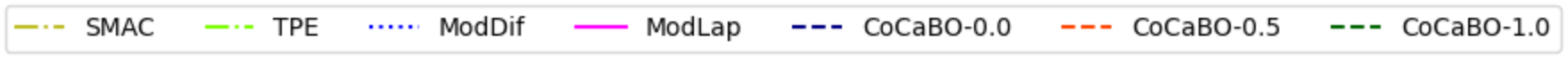}\\
    \vspace{-3pt}
    \minipage{0.33\textwidth}
    	\centering
        \includegraphics[width=\linewidth]{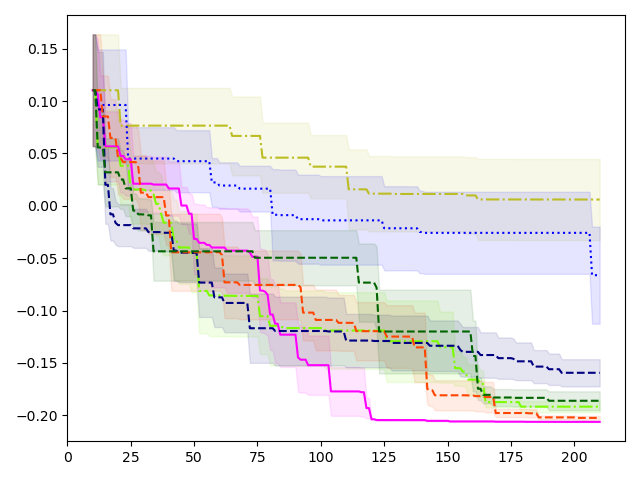}
    \endminipage\hfill
    \minipage{0.33\textwidth}
        \centering
        \includegraphics[width=\linewidth]{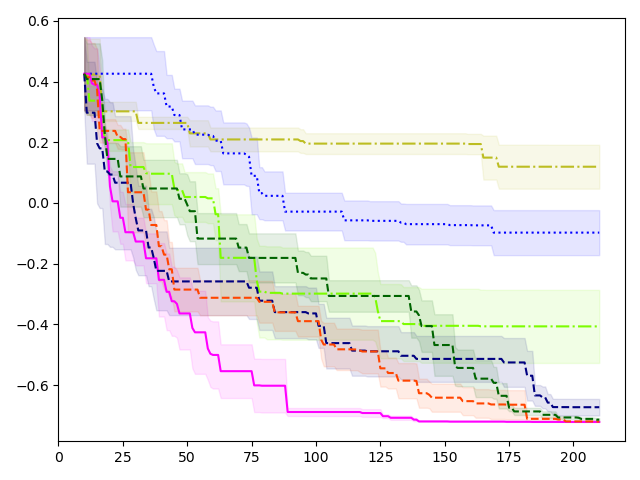}
    \endminipage\hfill
    \minipage{0.33\textwidth}%
        \centering
        \includegraphics[width=\linewidth]{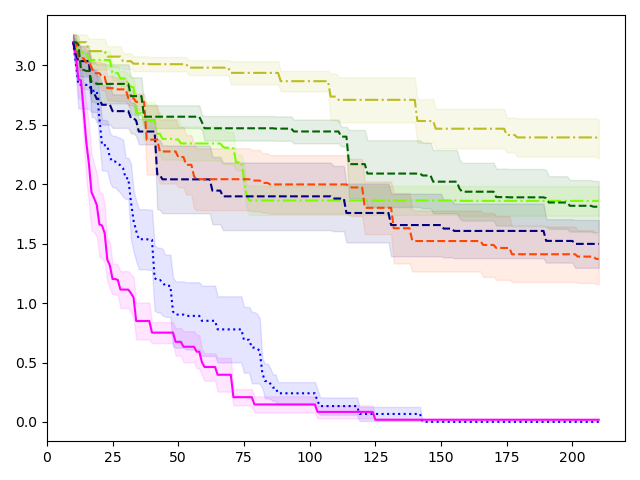}
    \endminipage\hfill
    
    \begin{footnotesize}
        \renewcommand{\arraystretch}{1.2}
        \setlength{\tabcolsep}{2pt}
        \begin{tabular}{c|c|c|c|c|c|c|c}\hline
                 & SMAC & TPE & ModDif & ModLap & CoCaBO-0.0 & CoCaBO-0.5 & CoCaBO-1.0 \\ \hline
        Func2C   & $+0.006\pm0.039$ & $-0.192\pm0.005$ & $-0.066\pm0.046$ & $\mathbf{-0.206\pm0.000}$ & $-0.159\pm0.013$ & $\mathit{-0.202\pm0.002}$ & $-0.186\pm0.009$ \\
        Func3C   & $+0.119\pm0.072$ & $-0.407\pm0.120$ & $-0.098\pm0.074$ & $\mathbf{-0.722\pm0.000}$ & $-0.673\pm0.027$ & $\mathit{-0.720\pm0.002}$ & $-0.714\pm0.005$ \\
        Ackley5C & $+2.381\pm0.165$ & $+1.860\pm0.125$ & $\mathbf{+0.001\pm0.000}$ & $\mathit{+0.019\pm0.006}$ & $+1.499\pm0.201$ & $+1.372\pm0.211$ & $+1.811\pm0.217$ \\ \hline
        \end{tabular}
    \end{footnotesize}
    \vspace{-8pt}
    \caption{Func2C(left), Func3C(middle), Ackley5C(right)~(Mean$\pm$Std.Err. of 5 runs)}\label{fig:exp_synthetic}
    \vspace{-16pt}
\end{figure*}

To demonstrate the improved sample efficiency of GP BO using FM kernels~(BO-FM) we study various mixed variable black-box function optimization tasks, including 3 synthetic problems from~\citet{ru2020bayesian}, 2 hyperparameter optimization problems~(SVM~\citep{smola2003kernels} and XGBoost~\citep{chen2016xgboost}) and the joint optimization of neural architecture and SGD hyperparameters.

As per our method, we consider \textsc{ModLap} which is of the form Eq.~\ref{eq:scalable_FM} with the following frequency modulating function.
\vspace{-10pt}
\begin{equation}\label{eq:fmf_lap}
    f_{Lap}(\lambda, \Vert \bfc - \bfc' \Vert_{\bftheta} \vert \alpha, \beta) = \frac{1}{1 + \beta \lambda + \alpha \Vert \bfc - \bfc' \Vert_{\bftheta}^2}
\end{equation} \vspace{-10pt}\\
Moreover, to empirically demonstrate the importance of the similarity measure behavior, we consider another kernel following the form of Eq.~\ref{eq:scalable_FM} but disrespecting the frequency modulation principle with the function
\vspace{-6pt}
\begin{align}\label{eq:fmf_dif}
    f_{Dif}(\lambda, \Vert \bfc - \bfc' \Vert_{\bftheta} \vert \alpha, \beta) = \exp{(-( 1 + \alpha \Vert \bfc - \bfc' \Vert_{\bftheta}^2) \beta \lambda)}
\end{align} \vspace{-14pt}\\
We call the kernel constructed with this function \textsc{ModDif}.
The implementation of these kernels is publicly available.\footnote{\url{https://github.com/ChangYong-Oh/FrequencyModulatedKernelBO}}

In each round, after updating with an evaluation, we fit a GP surrogate model using marginal likelihood maximization with 10 random initialization until convergence~\citep{rasmussen2003gaussian}.
We use the expected improvement~(EI) acquisition function~\citep{donald1998efficient} and optimize it by repeated alternation of L-BFGS-B~\citep{zhu1997algorithm} and hill climbing~\citep{skiena1998algorithm} until convergence.
More details on the experiments are provided in Supp.~Sec.~\ref{supp:sec:exp_details}.

\vspace{-4pt}
\paragraph{Baselines}
For synthetic problems and hyperparameter optimization problems below, baselines we consider\footnote{The methods~\citep{daxberger2019mixed,nguyen2019bayesian} whose code has not been released are excluded.} are SMAC\footnote{\url{https://github.com/automl/SMAC3}}~\citep{hutter2011sequential}, TPE\footnote{\url{http://hyperopt.github.io/hyperopt/}}~\citep{bergstra2011algorithms}, and CoCaBO\footnote{\url{https://github.com/rubinxin/CoCaBO_code}}~\citep{ru2020bayesian} which consistently outperforms One-hot BO~\citep{gpyopt2016} and EXP3BO~\citep{gopakumar2018algorithmic}.
For CoCaBO, we consider 3 variants using different mixture weights.\footnote{Learning the mixture weight is not supported in the implementation, we did not include it. Moreover, as shown in~\citet{ru2020bayesian}, at least one of 3 variants usually performs better than learning the mixture weight.}

% ----------------------------------------------------------------------
% ----------------------------------------------------------------------
% ----------------------------------------------------------------------
% ----------------------------------------------------------------------
% ----------------------------------------------------------------------
% ----------------------------------------------------------------------
\subsection{Synthetic problems}
\vspace{-4pt}

We test on 3 synthetic problems proposed in~\citet{ru2020bayesian}\footnote{In the implementation provided by the authors, only Func2C and Func3C are supported. We implemented Ackley5C.}.
Each of the synthetic problems has the search space as in Tab.~\ref{tab:synthetic_search_space}.
Details of synthetic problems can be found in~\citet{ru2020bayesian}.
\vspace{-6pt}
\begin{table}[!ht]
    \centering
    \begin{tabular}{c|c|c} \hline
        & Conti. Space & Num. of Cats. \\ \hline
        Func2C & $[-1, 1]^2$ & 3, 5\\
        Func3C & $[-1, 1]^2$ & 3, 5,4\\
        Ackley5C & $[-1, 1]$ & 17, 17, 17, 17, 17 \\ \hline
    \end{tabular}
    \vspace{-6pt}
    \caption{Synthetic Problem Search Spaces}
    \label{tab:synthetic_search_space}
    \vspace{-14pt}
\end{table}

On all 3 synthetic benchmarks, \textsc{ModLap} shows competitive performance~(Fig.~\ref{fig:exp_synthetic}).
On Func2C and Func3C, \textsc{ModLap} performs the best, while on Ackley5C \textsc{ModLap} is at the second place, marginally further from the first.
Notably, even on Func2C and Func3C, where \textsc{ModDif} underperforms significantly, \textsc{ModLap} exhibits its competitiveness, which empirically supports that the similarity measure behavior plays an important role in the surrogate modeling in Bayesian optimization.
Note that TPE and CoCaBO have much shorter wall-clock runtime.

% --------------------------------------------------
% --------------------------------------------------
% --------------------------------------------------
% --------------------------------------------------
% --------------------------------------------------
\subsection{Hyperparameter optimization problems}

\begin{figure*}[!th]
    \vspace{-4pt}
    \centering
    \includegraphics[width=\linewidth]{figures/main-legend.png}\\
    \vspace{-3pt}
    \minipage{0.33\textwidth}
    	\centering
        \includegraphics[width=\linewidth]{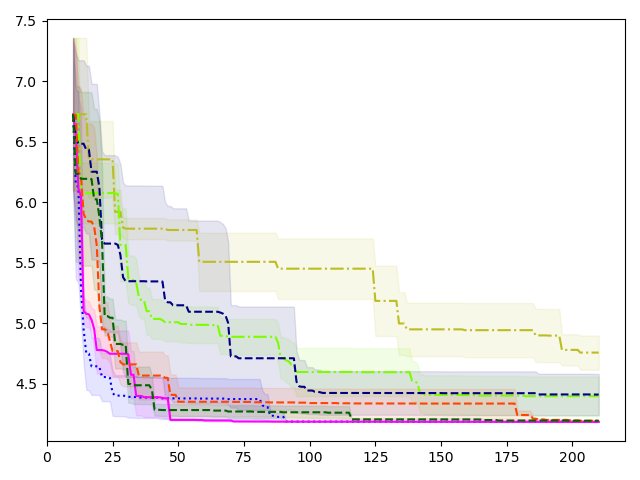}
    \endminipage\hfill
    \minipage{0.33\textwidth}
        \renewcommand{\arraystretch}{1.2}
        \centering
        \footnotesize
        \setlength{\tabcolsep}{2pt}
        \begin{tabular}{c|c|c} \hline
        SVM & Method & XGBoost \\ \hline
        $4.759\pm.141$ & SMAC       & $.1215\pm .0045$ \\
        $4.399\pm.163$ & TPE        & $.1084\pm .0007$\\
        $\mathit{4.188\pm.001}$ & ModDif     & $\mathit{.1071\pm .0013}$\\
        $\mathbf{4.186\pm.002}$ & ModLap     & $\mathbf{.1038\pm .0003}$\\
        $4.412\pm.170$ & CoCaBO-0.0 & $.1184\pm .0062$\\
        $4.196\pm.004$ & CoCaBO-0.5 & $.1079\pm .0010$\\
        $4.196\pm.004$ & CoCaBO-1.0 & $.1086\pm .0008$
        \end{tabular}
    \endminipage\hfill
    \minipage{0.33\textwidth}%
        \centering
        \includegraphics[width=\linewidth]{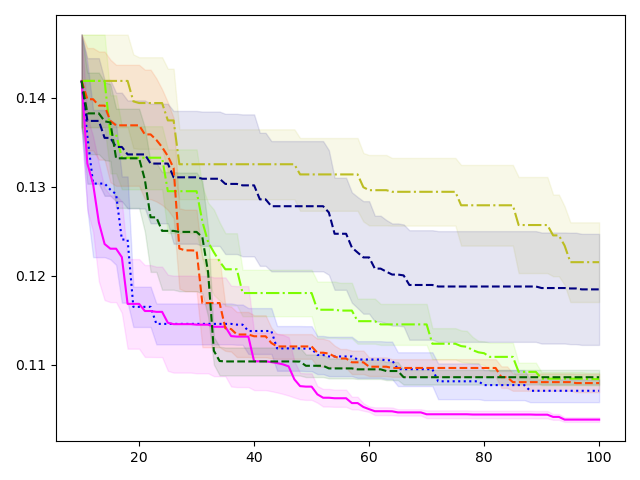}
    \endminipage\hfill
    \vspace{-8pt}
    \caption{SVM(left), XGBoost(right)~(Mean$\pm$Std.Err. of 5 runs)}\label{fig:exp_hyper}
    \vspace{-12pt}
\end{figure*}

Now we consider a practical application of Bayesian optimization over mixed variables.
We take two machine learning algorithms, SVM~\citep{smola2003kernels} and XGBoost~\citep{chen2016xgboost} and optimize their hyperparameters.

\paragraph{SVM} We optimize hyperparameters of NuSVR in scikit-learn~\citep{pedregosa2011scikit}.
We consider 3 categorical hyperparameters and 3 continuous hyperparameters~(Tab.~\ref{tab:svm_hyper}) and for continuous hyperparameters we search over $\log_{10}$ transformed space of the range.
\vspace{-6pt}
\begin{table}[!ht]
    \centering
    \begin{tabular}{c|c} \hline
        NuSVR param.\tablefootnote{\url{https://scikit-learn.org/stable/modules/generated/sklearn.svm.NuSVR.html}} & Range \\ \hline
        kernel & $\{$linear, poly, RBF, sigmoid $\}$  \\
        gamma & $\{$scale, auto $\}$ \\
        shrinking & $\{$on, off $\}$ \\ \hline
        C & $[10^{-4}, 10]$ \\
        tol & $[10^{-6}, 1]$ \\
        nu & $[10^{-6}, 1]$ \\ \hline
        \end{tabular}
    \vspace{-4pt}
    \caption{NuSVR hyperparameters}\label{tab:svm_hyper}
    \vspace{-8pt}
\end{table}\\
For each of 5 split of Boston housing dataset with train:test(7:3) ratio, NuSVR is fitted on the train set and RMSE on the test set is computed.
The average of 5 test RMSE is the objective.

\paragraph{XGBoost} We consider 1 ordinal, 3 categorical and 4 continuous hyperparameters~(Tab.~\ref{tab:xgboost_hyper}).
\vspace{-6pt}
\begin{table}[!ht]
    \centering
    \begin{tabular}{c|c} \hline
        XGBoost param.\tablefootnote{\url{https://xgboost.readthedocs.io/en/latest/parameter.html}} & Range \\ \hline
        max\_depth & $\{1,\cdots,10\}$  \\ \hline
        booster & $\{$gbtree, dart$\}$ \\
        grow\_policy & $\{$depthwise, lossguide$\}$ \\
        objective & $\{$multi:softmax, multi:softprob$\}$ \\ \hline
        eta & $[10^{-6}, 1]$ \\
        gamma & $[10^{-4}, 10]$ \\
        subsample & $[10^{-3}, 1]$ \\
        lambda & $[0, 5]$ \\ \hline
    \end{tabular}
    \vspace{-4pt}
    \caption{XGBoost hyperparameters}\label{tab:xgboost_hyper}
    \vspace{-8pt}
\end{table}\\
For 3 continuous hyperparameters, eta, gamma and subsample, we search over the $\log_{10}$ transformed space of the range.
With a stratified train:test(7:3) split, the model is trained with 50 rounds and the best test error over 50 rounds is the objective of SVM hyperparameter optimization.
% \str{What do you mean the best error is optimized?}

In Fig.~\ref{fig:exp_hyper}, \textsc{ModLap} performs the best.
On XGBoost hyperparameter optimization, \textsc{ModLap} exhibits clear benefit compared to the baselines.
Here, \textsc{ModDif} wins the second place in both problems.

\paragraph{Comparison to different kernel combinations}
In Supp.~Sec.~\ref{supp:sec:exp_results}, we also report the comparison with different kernel combinations on all 3 synthetic problems and 2 hyperparameter parameter optimization problems.
We make two observations.
First, \textsc{ModDif}, which does not respect the similarity measure behavior, sometimes severely degrades BO performance.
Second, \textsc{ModLap} obtains equally good final results and consistently finds the better solutions faster than the kernel product. 
This can be clearly shown by comparing the area above the mean curve of BO runs using different kernels.
The area above the mean curve of BO using \textsc{ModLap} is larger than the are above the mean curve of BO using the kernel product.
Moreover, the gap between the area from \textsc{ModLap} and the area from kernel product increases in problems with larger search spaces. 
%
% \str{I find that the reference to Func2C makes this unclear. Can you make this a new sentence and explain more clearly why having a smaller search space is relevant. I suppose you mean to say that it is easier to find a good solution.}
%
Even on the smallest search space, Func2C, \textsc{ModLap} lags behind the kernel product up to around 90th evaluation and outperforms after it.
The benefit of \textsc{ModLap} modeling complex dependency among mixed variables is more prominent in higher dimension problems.
% In all figures provided in Supp.~Sec.~\ref{supp:sec:exp_results} except for Func2C which features the smallest search space, we observe that \textsc{ModLap} has a larger area above the mean curve than the corresponding kernel product. 
% \str{I think the above sentence can be rewritten.} \ocy{rewritten}

\paragraph{Ablation study on regression tasks}
In addition to the results on BO experiments, we compare FM kernels with kernel addiition and kernel product on three regression tasks from UCI datasets~(Supp.~Sec.~\ref{supp:sec:ablation_study}).
In terms of negative log-likelihood~(NLL), which takes into account uncertainty, ModLap performs the best in two out of three tasks.
Even on the task which is conjectured to have a structure suitable to kernel product, ModLap shows competitive performance.
Moreover, on regression tasks, the importance of the frequency modulation principle is further reinforced.
For full NLL and RMSE comparison and detailed discussion, see Supp.~Sec.~\ref{supp:sec:ablation_study}

\begin{figure*}[!ht]
    \vspace{-6pt}
    \centering
    \minipage{0.5\textwidth}
    	\centering
        \includegraphics[width=\linewidth]{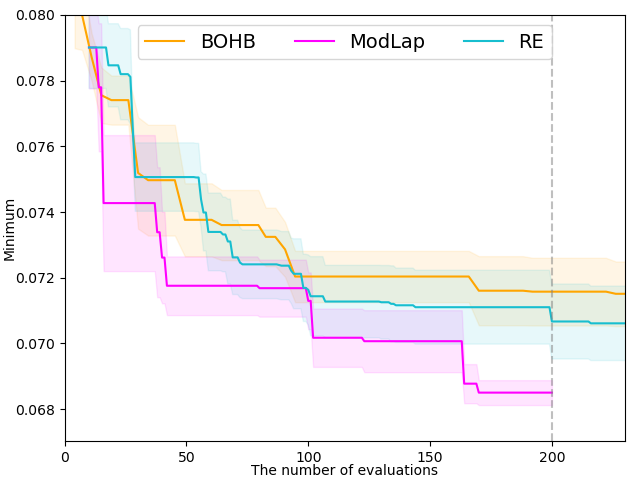}
    \endminipage\hfill
    \minipage{0.5\textwidth}
        \renewcommand{\arraystretch}{1.3}
        \begin{center}
            \setlength{\tabcolsep}{4pt}
            \begin{tabular}{|c|c|c|}
                \hline
                Method & \#Eval. & Mean$\pm$Std.Err. \\
                \hline
                BOHB & 200 & \num{7.158E-02}$\pm$\num{1.0303E-03} \\
                BOHB & 230 & \num{7.151E-02}$\pm$\num{9.8367E-04} \\ 
                BOHB & 600 & \num{6.941E-02}$\pm$\num{4.4320E-04} \\ \hline
                RE   & 200 & \num{7.067E-02}$\pm$\num{1.1417E-03} \\
                RE   & 230 & \num{7.061E-02}$\pm$\num{1.1329E-03} \\
                RE   & 400 & \num{6.929E-02}$\pm$\num{6.4804E-04} \\
                RE   & 600 & \num{6.879E-02}$\pm$\num{1.0039E-03} \\ \hline
                \textsc{ModLap} & 200 & \num[math-rm=\mathbf]{6.850E-02}$\pm$\num[math-rm=\mathbf]{3.7914E-04} \\ \hline
            \end{tabular}\\
            \vspace{2pt}
            \footnotesize For the figure with all numbers above, see Supp.~Sec.~\ref{supp:sec:exp_results}.
        \end{center}
    \endminipage\hfill
    \vspace{-6pt}
    \caption{Joint optimization of the architecture and SGD hyperparameters~(Mean$\pm$Std.Err. of 4 runs)}\label{fig:exp_joint_nas}
    \vspace{-12pt}
\end{figure*}

% --------------------------------------------------
% --------------------------------------------------
% --------------------------------------------------
% --------------------------------------------------
% --------------------------------------------------
\subsection{Joint optimization of neural architecture and SGD hyperparameters}

Next, we experiment with BO on mixed variables by optimizing continuous and discrete hyperparameters of neural networks.
The space of discrete hyperparameters $\mathcal{A}$ is modified from the NASNet search space~\citep{zoph2016neural}, which consists of 8,153,726,976 choices.
%
% \str{I am not sure I understand what you want to say above? That you change the original setup to make it smaller?} \ocy{removing that line is fine}
%
The space of continuous hyperparameters $\mathcal{H}$ comprises 6 continuous hyperparameters of the SGD with a learning rate scheduler: learning rate, momentum, weight decay, learning rate reduction factor, 1st reduction point ratio and 2nd reduction point ratio. 
A good neural architecture should both achieve low errors and be computationally modest.
Thus, we optimize the objective $f(a,h) = err_{valid}(a,h) + 0.02 \times FLOP(a)/\max_{a' \in \mathcal{A}}FLOP(a')$.
To increase the separability among smaller values, we  use $\log f(a,h)$ transformed values whenever model fitting is performed on evaluation data.
The reported results are still the original non-transformed $f(a,h)$.
% \str{The above sentence is not very clear. Maybe break it into two.}

We compare with two strong baselines.
One is BOHB~\citep{falkner2018bohb} which is an evaluation-cost-aware algorithm augmenting unstructured bandit approach~\citep{li2017hyperband} with model-based guidance.
Another is RE~\citep{real2019regularized} based on a genetic algorithm with a novel population selection strategy.
In~\citet{dong2021nats}, on discrete-only spaces, these two outperform competitors including weight sharing approaches such as DARTS~\citep{liu2018darts}, SETN~\citep{dong2019one}, ENAS~\citep{pham2018efficient} and etc.
% Even though we search over a mixed variable space with a slightly different objective, BOHB and RE are potentially state-of-the-art architecture search methods on mixed variables among existing ones.
% \str{What do you mean 'potentially'? Is this sentence really needed?}
In the experiment, for BOHB, we use the public implementation\footnote{\url{https://github.com/automl/HpBandSter}} and for RE, we use our own implementation.

For a given set of hyperparameters, with \textsc{ModLap} or RE, the neural network is trained on FashionMNIST for 25 epochs while BOHB adaptively chooses the number of epochs.
For further details on the setup and the baselines we refer the reader to Supp.~Sec.~\ref{supp:sec:exp_details} and~\ref{supp:sec:exp_results}.

We present the results in Fig.~\ref{fig:exp_joint_nas}.
Since BOHB adaptively chooses the budget~(the number of epochs), BOHB is plotted according to the budget consumption.
For example, the y-axis value of BOHB on 100-th evaluation is the result of BOHB having consumed 2,500 epochs~(25 epochs $\times$ 100).

% NASNet is a well-designed search space, where random search can yield reasonably~\citep{elsken2018neural,real2019regularized,yang2019evaluation} good results.
We observe that \textsc{ModLap} finds the best architecture in terms of accuracy and computational cost.
What is more, we observe that \textsc{ModLap} reaches the better solutions faster in terms of numbers of evaluations.
% In this experiment the main computational bottleneck is training the neural networks.
Even though the time to evaluate a new hyperparameter is dominant, the time to suggest a new hyperparameter in \textsc{ModLap} is not negligible in this case.
Therefore, we also provide the comparison with respect the wall-clock time.
It is estimated that RE and BOHB evaluate 230 hyperparameters while \textsc{ModLap} evaluate 200 hyperparameters~(Supp.~Sec.~\ref{supp:sec:exp_details}).
For the same estimated wall-clock time, \textsc{ModLap}(200) outperforms competitors(RE(230), BOHB(230)).

In order to see how beneficial the sample efficiency of BO-FM is in comparison to the baselines, we perform a stress test in which more evaluations are allowed for RE and BOHB.
We leave RE and BOHB for 600 evaluations.
Notably, \textsc{ModLap} with 200 evaluations outperforms both competitors with 600 evaluations~(Fig.~\ref{fig:exp_joint_nas} and Supp.Sec.~\ref{supp:sec:exp_results}).
We conclude that \textsc{ModLap} exhibits higher sample efficiency than the baselines.

% ----- SECTION -------
\section{Conclusion}

We propose FM kernels to improve the sample efficiency of mixed variable Bayesian optimization.

On the theoretical side, we provide and prove conditions for FM kernels to be positive definite and to satisfy the similarity measure behavior.
Both conditions are not trivial due to the interactions between quantities on two disparate domains, the spatial domain and the frequency domain.

On the empirical side, we validate the effect of the conditions for FM kernels on multiple synthetic problems and realistic hyperparameter optimization problems.
Further, we successfully demonstrate the benefits of FM kernels compared to non-GP based Bayesian Optimization on a challenging joint optimization of neural architectures and SGD hyperparameters.
BO-FM outperforms its competitors, including Regularized evolution, which requires three times as many evaluations. 

We conclude that an effective modeling of dependencies between different types of variables improves the sample efficiency of BO.
We believe the generality of the approach can have a wider impact on modeling dependencies between discrete variables and variables of arbitrary other types, including continuous variables.

% \comment{I would leave the following out. At the best case others are not interested in your desires necessarily. At the worse case they may ask you why you already do these things.}
% We expect that the benefit and the implication of the frequency modulation can be more than the experiments given in the paper.
% The extension of the frequency modulation is fairly general and it can model dependencies between discrete variables and variables with an arbitrary type.
% Even though we provide a flexible family of frequency modulating functions which ease the construction of FM kernels, this cannot be exhaustive.
% Beginning with the successful showcase of the frequency modulation with an example of the FM kernel, we hope that the frequency modulation is adopted for many applications requiring complex dependency modeling which has not been effectively tackled thus far, due to the lack of such flexible kernels.

\bibliographystyle{plainnat}
\bibliography{oh_359}

\onecolumn

\title{Mixed Variable Bayesian Optimization with Frequency Modulated Kernels\\Supplementary Material}

\maketitle

\setcounter{section}{0}

% ----- SECTION -------
\section{Positive definite FM kernels}
\label{supp:sec:thm_definite}
For a weighted undirected graph $\calG=(\calV,\calE)$ with graph Laplacian $L(\calG) = U \Lambda U^T$.
Frequency modulating kernels are defined as
\begin{equation}\label{supp:eq:frequency_modulating}
    k((\bfc, v),(\bfc', v') \Vert \bftheta, \beta) 
    = [\sum_{i=1}^{\Vert \calV \Vert} 
    [U]_{:,i} 
    f(\lambda_i, \Vert \bfc - \bfc' \Vert_{\bftheta} \Vert \beta) [U]_{:,i}]_{v, v'} 
\end{equation}
where $[U]_{:,i}$ are eigenvectors of $L(\calG)$ which are columns of $U$ and $\lambda_i = [\Lambda]_{ii}$ are corresponding eigenvalues.
$\bfc$ and $\bfc'$ are continuous variables in $\bbR^{D_{\calC}}$, $\bftheta \in \bbR^{D_{\calC}}$ is a kernel parameter similar to the lengthscales in the RBF kernel.
$\beta \in \bbR$ is a kernel parameter from kernels derived from the graph Laplacian.

\begin{theorem} \label{supp:thm:positive_definite}
    If $f(\lambda, \Vert \bfc - \bfc' \Vert_{\bftheta} \Vert \beta)$ defines a positive definite kernel on $(\bfc, \bfc') \in \bbR^{D_{\calC}} \times \bbR^{D_{\calC}}$, then a FreMod kernel defined with such $f$ is positive definite jointly on $(\bfc, v)$.
\end{theorem}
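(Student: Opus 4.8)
The plan is to show that for \emph{every} finite set of points the Gram matrix of the FM kernel decomposes as a finite sum of Hadamard products of positive semidefinite matrices, and then invoke the Schur product theorem. Concretely, fix $n \in \bbN$ and points $(\bfc_1, v_1), \dots, (\bfc_n, v_n) \in \bbR^{D_\calC} \times \calV$, and let $K$ be the $n \times n$ matrix with $[K]_{jk} = \sum_{i=1}^{\vert \calV \vert} [U]_{v_j, i}\, f(\lambda_i, \Vert \bfc_j - \bfc_k \Vert_{\bftheta} \mid \beta)\, [U]_{v_k, i}$. For each eigen-index $i$, introduce the two $n \times n$ matrices $[A^{(i)}]_{jk} = [U]_{v_j, i}[U]_{v_k, i}$ and $[B^{(i)}]_{jk} = f(\lambda_i, \Vert \bfc_j - \bfc_k \Vert_{\bftheta} \mid \beta)$, so that $K = \sum_{i=1}^{\vert \calV \vert} A^{(i)} \circ B^{(i)}$, where $\circ$ denotes the entrywise (Hadamard) product. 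This decomposition is the only real content of the proof; everything after it is standard.

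Next I would check each summand is positive semidefinite. Since $L(\calG)$ is a real symmetric matrix, $U$ is real orthogonal, so $A^{(i)}$ is the outer product of the real vector $([U]_{v_1,i}, \dots, [U]_{v_n,i})$ with itself — a rank-one, hence positive semidefinite, matrix (it is the Gram matrix of the rank-one kernel $(v,v') \mapsto [U]_{v,i}[U]_{v',i}$ on $\calV$). By property \textbf{FM-P2}, the map $(\bfc, \bfc') \mapsto f(\lambda_i, \Vert \bfc - \bfc' \Vert_{\bftheta} \mid \beta)$ is a positive definite kernel on $\bbR^{D_\calC}$, so $B^{(i)}$ is positive semidefinite. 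By the Schur product theorem the Hadamard product $A^{(i)} \circ B^{(i)}$ of two positive semidefinite matrices is positive semidefinite, and a finite sum of such matrices is again positive semidefinite; hence $K \succeq 0$. As $n$ and the points were arbitrary, Def.~\ref{def:pd_kernel} gives that the FM kernel is positive definite jointly on $\bbR^{D_\calC} \times \calV$.

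Equivalently — and this is the form I would actually write, since it makes the ``joint'' nature transparent — one can argue at the level of feature maps: by \textbf{FM-P2} write $f(\lambda_i, \Vert \bfc - \bfc' \Vert_{\bftheta} \mid \beta) = \langle \psi_i(\bfc), \psi_i(\bfc') \rangle_{\calH_i}$ for some feature map $\psi_i$ into a Hilbert space $\calH_i$, and define $\Phi(\bfc, v) = \bigoplus_{i=1}^{\vert \calV \vert} [U]_{v,i}\, \psi_i(\bfc)$ in $\bigoplus_i \calH_i$. Then $\langle \Phi(\bfc, v), \Phi(\bfc', v') \rangle = \sum_i [U]_{v,i}[U]_{v',i} \langle \psi_i(\bfc), \psi_i(\bfc') \rangle = k((\bfc, v),(\bfc', v'))$, exhibiting $k$ as an inner product of features, which immediately yields positive definiteness.

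I do not expect a genuine obstacle here; the only points requiring care are (i) that $U$ is real, so the coupling coefficients $[U]_{v,i}[U]_{v',i}$ really do form a \emph{real} rank-one positive semidefinite matrix rather than merely a symmetric one, and (ii) noting explicitly that the argument never uses the specific form $\Vert \bfc - \bfc' \Vert_{\bftheta}$ of the second argument of $f$ — only that $(\bfc, \bfc') \mapsto f(\lambda_i, \cdot\, , \cdot \mid \beta)$ is positive definite for each fixed $\lambda_i$ — so the same proof covers the general non-stationary case $f(\lambda, \bfc, \bfc' \mid \alpha, \beta)$ mentioned after Thm.~\ref{thm:positive_definite} in the main text.
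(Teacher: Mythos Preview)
Your proposal is correct and is essentially the paper's own argument: the paper also splits $k=\sum_i k_i$ with $k_i((\bfc,v),(\bfc',v'))=[U]_{v,i}\,f(\lambda_i,\Vert\bfc-\bfc'\Vert_\bftheta\mid\beta)\,[U]_{v',i}$ and shows each summand is PD by the direct quadratic-form identity $\bfa^T K_i\,\bfa=(\bfa\circ u_i)^T B^{(i)} (\bfa\circ u_i)\ge 0$, which is exactly the rank-one case of the Schur product theorem you invoke. Your feature-map reformulation and the remark that the argument extends verbatim to the non-stationary $f(\lambda,\bfc,\bfc'\mid\alpha,\beta)$ case are both fine additions, the latter being precisely what the paper notes after Thm.~\ref{thm:positive_definite}.
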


\begin{proof}
    \begin{equation}
        \small
        k((\bfc, v),(\bfc', v') \Vert \bftheta, \beta) 
        = \Big[\sum_{i=1}^{\Vert \calV \Vert} [U]_{:,i} f(\lambda_i, \Vert \bfc - \bfc' \Vert_{\bftheta} \Vert \beta) [U]_{:,i}\Big]_{v, v'} 
        = \sum_{i=1}^{\Vert \calV \Vert} [U]_{v,i} f(\lambda_i, \Vert \bfc - \bfc' \Vert_{\bftheta} \Vert \beta) [U]_{v',i}
    \end{equation}
    Since a sum of positive definite(PD) kernels is PD, we prove PD of frequency modulating kernels by showing that $k_i((\bfc, v),(\bfc', v') \Vert \bftheta, \beta) = [U]_{v,i} f(\lambda_i, \Vert \bfc - \bfc' \Vert_{\bftheta} \Vert \beta) [U]_{v',i}$ is PD.
    
    Let us consider $\bfa \in \bbR^S$, $\mathcal{D} = \{(\bfc_1,v_1), \cdots, (\bfc_S, v_S)\}$, then
    \begin{align}
        \label{supp:eq:psd_coard}
        &\bfa^T
        \begin{bmatrix}
        [U]_{v_1,i} f(\lambda_i, \Vert \bfc_1 - \bfc_1 \Vert_{\bftheta} \vert \beta) [U]_{v_1,i} & \cdots &
        [U]_{v_1,i} f(\lambda_i, \Vert \bfc_1 - \bfc_S \Vert_{\bftheta} \vert \beta) [U]_{v_S,i} \\
        [U]_{v_2,i} f(\lambda_i, \Vert \bfc_2 - \bfc_1 \Vert_{\bftheta} \vert \beta) [U]_{v_1,i} & \cdots &
        [U]_{v_2,i} f(\lambda_i, \Vert \bfc_2 - \bfc_S \Vert_{\bftheta} \vert \beta) [U]_{v_S,i} \\
        \vdots & \cdots & \vdots \\
        [U]_{v_S,i} f(\lambda_i, \Vert \bfc_S - \bfc_1 \Vert_{\bftheta} \vert \beta) [U]_{v_1,i} & \cdots &
        [U]_{v_S,i} f(\lambda_i, \Vert \bfc_S - \bfc_S \Vert_{\bftheta} \vert \beta) [U]_{v_S,i}
        \end{bmatrix}
        \bfa \nonumber \\
        & = (\bfa \circ [U]_{:,i})^T
        \begin{bmatrix}
        f(\lambda_i, \Vert \bfc_1 - \bfc_1 \Vert_{\bftheta} \Vert \beta) & \cdots & f(\beta \lambda_i, \Vert \bfc_1 - \bfc_S \Vert_{\bftheta} \vert \beta) \\
        f(\lambda_i, \Vert \bfc_2 - \bfc_1 \Vert_{\bftheta} \Vert \beta) & \cdots & f(\beta \lambda_i, \Vert \bfc_2 - \bfc_S \Vert_{\bftheta} \vert \beta) \\
        \vdots & \cdots & \vdots \\
        f(\lambda_i, \Vert \bfc_S - \bfc_1 \Vert_{\bftheta} \Vert \beta) & \cdots & f(\beta \lambda_i, \Vert \bfc_S - \bfc_S \Vert_{\bftheta} \vert \beta)
        \end{bmatrix}
        (\bfa \circ [U]_{:,i})
    \end{align}
    where $\circ$ is Hadamard(elementwise) product and $[U]_{:,i} = [[U]_{v_1,i}, \cdots, [U]_{v_S,i}]^T$.
    
    By letting $\bfa' = \bfa \circ [U_i]_{\pi_i(v_:),n}$, since $f(\lambda_i, \Vert \bfc - \bfc' \Vert_{\bftheta} \Vert \beta)$ is PD, we show that $k_i((\bfc, v),(\bfc', v') \Vert \bftheta, \beta) = u_{i,v} f(\lambda_i, \Vert \bfc - \bfc' \Vert_{\bftheta} \Vert \beta) u_{i,v'}$ is PD.
\end{proof}

% ----- SECTION -------
\section{Nonnegative valued FM kernels}
\label{supp:sec:thm_positivity}
\begin{theorem} \label{supp:thm:nonnegative_valued}
    For a connected and undirected graph $\calG=(\calV,\calE)$ with non-negative weights on edges, define a kernel $k(v,v') = [Uf(\Lambda)U^T]_{v,v'}$ where $U$ and $\Lambda$ are eigenvectors and eigenvalues of the graph Laplacian $L(\calG)=U \Lambda U^T$. 
    If $f$ is any non-negative and strictly decreasing convex function on $[0, \infty)$, then $K(v,v') \ge 0$ for all $v, v' \in \calV$.
\end{theorem}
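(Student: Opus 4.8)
The plan is to reduce the statement, via a classical representation of convex functions, to a single fact about ``spectral truncations'' of $L$, and then to attack that fact through the $M$-matrix structure of the graph Laplacian. The diagonal case is immediate: $a(v,v) = \sum_i h(\lambda_i)[U]_{v,i}^{2}\ge 0$ because $h\ge 0$, so only $v\ne v'$ needs work. Since $\calG$ is connected, $\lambda_1=0$ is simple with unit eigenvector $[U]_{\cdot,1}=\mathbf 1/\sqrt{|\calV|}$, so the first term contributes the strictly positive ``floor'' $h(0)/|\calV|$ and everything reduces to bounding $\sum_{i\ge 2}h(\lambda_i)[U]_{v,i}[U]_{v',i}$ from below. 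Here use the structure of $h$: a non-negative, decreasing, convex $h$ on $[0,\infty)$ has a finite limit $h(\infty)\ge 0$, and $-h'$ is non-negative and non-increasing, so $-h'(s)=\nu([s,\infty))$ for a non-negative measure $\nu$; integrating twice gives $h(\lambda)=h(\infty)+\int_{[0,\infty)}(t-\lambda)_+\,d\nu(t)$ with $(t-\lambda)_+=\max(t-\lambda,0)$. By linearity of the spectral calculus, $[Uh(\Lambda)U^{T}]_{v,v'} = h(\infty)\,\delta_{v,v'} + \int_{[0,\infty)}\big[U(tI-\Lambda)_+U^{T}\big]_{v,v'}\,d\nu(t)$, so it suffices to prove the theorem for the hinge functions $h_t(\lambda)=(t-\lambda)_+$, i.e.\ to show the ``truncated'' matrix $(tI-L)_+:=U(tI-\Lambda)_+U^{T}$ has only non-negative entries, for every $t\ge 0$.

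Two boundary cases of this truncation claim are free: $(tI-L)_+[v,v]=\sum_{\lambda_i\le t}(t-\lambda_i)[U]_{v,i}^{2}\ge 0$, and for $t\ge\lambda_{\max}$ one has $(tI-L)_+ = tI-L$, whose off-diagonal entries are the non-negative edge weights and whose diagonal entries are $t-D_{v,v}\ge 0$ (using $\lambda_{\max}\ge\max_v D_{v,v}$). For $0\le t<\lambda_{\max}$ I would pass to $W:=I-L/c$ with $c>\lambda_{\max}$: this $W$ is symmetric, entrywise non-negative, doubly stochastic, and (being aperiodic and irreducible) primitive, with spectral radius $1$ attained at $\mathbf 1$; moreover $(tI-L)_+ = c\,(W-\theta I)_+$ with $\theta=1-t/c\in(0,1]$. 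Since every power $W^{k}$ is entrywise non-negative and $W^{k}\to \frac{1}{|\calV|}\mathbf 1\mathbf 1^{T}$, the plan is to realise $(W-\theta I)_+$ as a limit of non-negative combinations of $\{W^{k}\}$ together with the rank-one Perron projection --- equivalently, to approximate $\mu\mapsto(\mu-\theta)_+$ uniformly on the spectrum of $W$ by polynomials with non-negative coefficients and controlled value at $\mu=1$ --- which would exhibit $(W-\theta I)_+$, hence $(tI-L)_+$, hence $a$, as entrywise non-negative.

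The main obstacle is precisely this last step. The off-diagonal entries of $(tI-L)_+$ receive mutually cancelling contributions from the intermediate-frequency eigenvectors, so no crude triangle-inequality bound suffices: one must use convexity --- exactly what the hinge decomposition packages --- together with the graph-Laplacian structure ($L\mathbf 1=0$, and the limited number of sign changes of low-frequency eigenvectors supplied by nodal-domain/interlacing facts) to guarantee that these contributions never swamp the $h(0)/|\calV|$ floor. Concretely I would try a double summation-by-parts over the eigenvalue index, rewriting $\sum_{i\ge 2}h(\lambda_i)[U]_{v,i}[U]_{v',i}$ as a sum weighted by divided differences of $h$ (whose signs are governed by convexity) against partial spectral sums $\sum_{j\ge i}[U]_{v,j}[U]_{v',j}$, and then controlling those partial sums using $UU^{T}=I$ and the Laplacian structure; making this estimate tight enough for all connected weighted graphs is where the real difficulty lies.
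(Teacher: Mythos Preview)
Your hinge-function reduction is natural, but the intermediate claim it rests on --- that $(tI-L)_+:=U(tI-\Lambda)_+U^T$ is entrywise non-negative for every $t\ge 0$ --- is \emph{false}, so the plan collapses at precisely the step you flagged as the main obstacle. Take the unweighted path $P_4$, with Laplacian eigenvalues $0,\,2-\sqrt{2},\,2,\,2+\sqrt{2}$; the first two normalized eigenvectors are $u_0=\tfrac{1}{2}(1,1,1,1)$ and $u_1=\tfrac{1}{\sqrt 2}(\cos\tfrac{\pi}{8},\cos\tfrac{3\pi}{8},-\cos\tfrac{3\pi}{8},-\cos\tfrac{\pi}{8})$. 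For $t=2$ only these two eigenvalues contribute, and the $(1,4)$ entry of $(2I-L)_+$ is
\[
2\cdot\tfrac{1}{4}\;+\;\sqrt{2}\cdot\bigl(-\tfrac{1}{2}\cos^2\tfrac{\pi}{8}\bigr)
\;=\;\tfrac{1}{2}-\tfrac{\sqrt{2}}{2}\cdot\tfrac{2+\sqrt{2}}{4}
\;=\;\tfrac{1-\sqrt{2}}{4}\;<\;0.
\]
So no approximation of $(\mu-\theta)_+$ by non-negative-coefficient polynomials in $W$ can succeed: the target matrix itself has a negative entry. The theorem's \emph{strict}-decrease hypothesis is doing real work; the hinge $h_t(\lambda)=(t-\lambda)_+$ is only weakly decreasing. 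Your integral representation $h(\lambda)=h(\infty)+\int(t-\lambda)_+\,d\nu(t)$ is correct, but individual integrand matrices may dip negative while the $\nu$-integral remains non-negative --- the required cancellation is \emph{across} $t$, not within each $t$, and that is exactly what the pointwise-in-$t$ reduction throws away.

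The paper's route is entirely different and never passes through a pointwise-in-$t$ claim. It fixes the spectrum $\Lambda$ and treats the two rows $[U]_{v,\cdot},[U]_{v',\cdot}$ as free variables, minimising $\sum_i h(\lambda_i)[U]_{v,i}[U]_{v',i}$ subject to $\sum_{i\ge 2}[U]_{v,i}^2=\sum_{i\ge 2}[U]_{v',i}^2=1-\tfrac{1}{D}$, $\sum_{i\ge 2}[U]_{v,i}[U]_{v',i}=-\tfrac{1}{D}$ (from $UU^T=I$ and $[U]_{\cdot,1}=\mathbf 1/\sqrt D$), and the single Laplacian sign constraint $\sum_{i\ge 2}\lambda_i[U]_{v,i}[U]_{v',i}\le 0$. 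The KKT analysis forces the minimiser to concentrate on at most two distinct eigenvalues in each of two index sets, which collapses the problem to a one-variable inequality where convexity of $h$ enters directly. Your closing summation-by-parts idea is closer in spirit, but the paper shows that the operative piece of graph structure is that one off-diagonal sign constraint, not nodal-domain counts.
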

\begin{proof}
    For a connected and weighted undirected graph $\calG = (\calV, \calE)$, the graph Laplacian $L(G)$ has exactly one $0$ eigenvalue and the corresponding eigenvector $1/\sqrt{D}[1, \cdots, 1]^T$ when $\vert \calV \vert = D$.
    
    We show that
    \begin{equation}
        \min_{v, v'} k_{\calG}(v,v') = \min_{p,q = 1,\cdots,D} [U f(\Lambda) U^T]_{p,q} \ge 0
    \end{equation}
    for an arbitrary connected and weighted undirected graph $\calG = (\calV, \calE)$ where $\vert \calV \vert = D$ and $L(\calG) = U \Lambda U^T$.
    
    For a connected graph, there is only one zero eigenvalue
    \begin{equation}
        0 = \lambda_1 < \lambda_2 \le \cdots \le \lambda_D \quad \text{where} \quad \lambda_i = [\Lambda]_{i,i}
    \end{equation}
    and the corresponding eigenvector is given as
    \begin{equation}
        U_{1,q} = \frac{1}{\sqrt{D}}(q=1,\cdots,D).
    \end{equation}
    
    From the definition of eigendecomposition, we have
    \begin{equation}
        U^T U = U U^T = I.
    \end{equation}
    
    Importantly, from the definition of the graph Laplacian
    \begin{equation}
        [U \Lambda U^T]_{p,q} \le 0 \quad \text{when} \quad p \neq q.
    \end{equation}
        
    For a given diagonal matrix $\Lambda$ such that $0 = \lambda_1 < \lambda_2 \le \cdots \le \lambda_D$ where $\lambda_i = [\Lambda]_{i,i}$, 
    we solve the following minimization problem 
    \begin{equation}\label{supp:eq:nonnegativity_objective}
        \min_{[U]_{p,i}, [U]_{q,i}} \frac{f(0)}{D} + \sum_{i=2}^D f(\lambda_i) [U]_{p,i} [U]_{q,i}
    \end{equation}
    with the constraints
    \begin{equation}
        \sum_{i=2}^D \lambda_i [U]_{p,i} [U]_{q,i} \le 0 (p \neq q), \quad
        \sum_{i=2}^D [U]_{p,i}^2 = \sum_{i=2}^D [U]_{q,i}^2 = 1 - \frac{1}{D}, \quad \sum_{i=2}^D [U]_{p,i} [U]_{q,i} = - \frac{1}{D} (p \neq q)
    \end{equation}
    When $p = q$, eq.\ref{supp:eq:nonnegativity_objective} is nonnegative because $f$ is nonnegative valued.
    From now on, we consider the case $p \neq q$.
    
    Lagrange multiplier is given as
    \begin{align}
        L_{KKT}([U]_{p,i}, [U]_{q,i}, \eta, a, b, c) & = \frac{f(0)}{D} + \sum_{i=2}^D f(\lambda_i) [U]_{p,i} [U]_{q,i} + \eta \Big( \sum_{i=2}^D \lambda_i [U]_{p,i} [U]_{q,i} \Big) \nonumber \\
        + a \Big( \sum_{i=2}^D [U]_{p,i}^2 - (1 - \frac{1}{D}) \Big) & + b \Big( \sum_{i=2}^D [U]_{q,i}^2 - (1 - \frac{1}{D}) \Big) + c \Big( \sum_{i=2}^D [U]_{p,i} [U]_{q,i} + \frac{1}{D} \Big)
    \end{align}
    with $\eta \ge 0$.
    
    The stationary conditions are given as
    \begin{gather}
        \frac{\partial L_{KKT}}{\partial [U]_{p,i}} = f(\lambda_i) [U]_{q,i} + \eta \lambda_i [U]_{q,i} + c [U]_{q,i} + 2 a [U]_{p,i} = 0 \\
        \frac{\partial L_{KKT}}{\partial [U]_{q,i}} = f(\lambda_i) [U]_{p,i} + \eta \lambda_i [U]_{p,i} + c [U]_{p,i} + 2 b [U]_{q,i} = 0
    \end{gather}
    from which, we have
    \begin{gather}
        (f(\lambda_i) + \eta \lambda_i + c) [U]_{q,i} = - 2 a [U]_{p,i} \label{supp:eq:stationary_p} \\
        (f(\lambda_i) + \eta \lambda_i + c) [U]_{p,i} = - 2 b [U]_{q,i} \label{supp:eq:stationary_q}
    \end{gather}
    
    By using 
    \begin{equation}
        \sum_{i=2}^D \frac{\partial L_{KKT}}{\partial [U]_{p,i}} [U]_{p,i} = \sum_{i=2}^D \frac{\partial L_{KKT}}{\partial [U]_{q,i}} [U]_{q,i} = 0
    \end{equation}
    we have $a = b$.
    
    From eq.\eqref{supp:eq:stationary_p} and eq.\eqref{supp:eq:stationary_q},
    we get
    \begin{gather}
        ((f(\lambda_i) + \eta \lambda_i + c)^2 - 4ab) [U]_{q,i} = 0 \\
        ((f(\lambda_i) + \eta \lambda_i + c)^2 - 4ab) [U]_{p,i} = 0
    \end{gather}
    
    If $i \in \{i \vert (f(\lambda_i) + \eta \lambda_i + c)^2 - 4ab \neq 0 \}$, we have $[U]_{p,i} = [U]_{q,i} = 0$.
    On the other hand, if $(f(\lambda_i) + \eta \lambda_i + c)^2 - 4ab = 0$, then we have $f(\lambda_i) + \eta \lambda_i + c = -2a$ or $f(\lambda_i) + \eta \lambda_i + c = 2a$ because $a = b$.
    
    We define three index sets
    \begin{align}
        I_0 & = \{i \vert (f(\lambda_i) + \eta \lambda_i + c)^2 - 4a^2 \neq 0\} \\
        I_+ & = \{i \vert f(\lambda_i) + \eta \lambda_i + c + 2 a = 0\} - \{1\} \\
        I_- & = \{i \vert f(\lambda_i) + \eta \lambda_i + c - 2 a = 0\}
    \end{align}
    from eq.\eqref{supp:eq:stationary_p} and eq.\eqref{supp:eq:stationary_q}, we have
    \begin{align}
        i \in I_0 & \Rightarrow [U]_{p,i} = [U]_{q,i} = 0 \\
        i \in I_+ & \Rightarrow [U]_{p,i} = [U]_{q,i} \\
        i \in I_- & \Rightarrow [U]_{p,i} = -[U]_{q,i}
    \end{align}
    
    With these conditions, the constraints can be expressed as
    \begin{equation}
        \sum_{i_+ \in I_+} \lambda_{i_+} [U]_{p,i}^2 - \sum_{i_- \in I_-} \lambda_{i_-} [U]_{p,i}^2 \le 0, \quad \sum_{i_+ \in I_+} [U]_{p,{i_+}}^2 = \frac{1}{2} - \frac{1}{D}, \quad \sum_{i_- \in I_-} [U]_{p,{i_-}}^2 = \frac{1}{2}
    \end{equation}
    
    We divide cases according to the number of solutions $g(\lambda) = f(\lambda) + \eta \lambda$ can have. \textit{i)} $f(\lambda) + \eta \lambda$ can have at most one solution, \textit{ii)} $f(\lambda) + \eta \lambda$ may have two solutions. 
    Note that $g(\lambda)$ is convex as sum of two convex functions.
    Since a convex function can have at most two zeros unless it is constantly zero, these two cases are exhaustive.
    When $\eta = 0$, $f(\lambda)$ is strictly decreasing function and, thus $g(\lambda)$ has at most one solution. 
    Also, when $\eta \ge -f'(0) = \max_{\lambda} -f'(\lambda)$, $f'(\lambda) + \eta$ is positive except for $\lambda = 0$ and $g(\lambda)$ has at most one solution.
    
    \paragraph{Case \textit{i)}} $f(\lambda) + \eta \lambda$ can have at most one solution. ($\eta = 0$ or $\eta \ge -f'(0) = \max_{\lambda} -f'(\lambda)$)
    
    Let us denote $\lambda^E$ the unique solution of $f(\lambda_i) + \eta \lambda_i + c + 2 a = 0$ and $\lambda^N$ the unique of $f(\lambda_i) + \eta \lambda_i + c - 2 a = 0$.
    
    Therefore $\lambda_{i_+} = \lambda^E$, $\forall i_+ \in I_+$ and $\lambda_{i_-} = \lambda^N$, $\forall i_- \in I_-$.
    The minimization objective becomes
    \begin{align}
        \frac{f(0)}{D} + \sum_{i=2}^D f(\lambda_i) [U]_{p,i} [U]_{q,i} &= \frac{f(0)}{D} + f(\lambda^E) \sum_{i_+ \in I_+} [U]_{p,i}^2 - f(\lambda_N) \sum_{i_- \in I_-} [U]_{p,i}^2 \nonumber \\
        &= \frac{f(0)}{D} + \Big( \frac{1}{2} - \frac{1}{D} \Big) f(\lambda^E) - \frac{1}{2} f(\lambda^N)
    \end{align}
    The inequality constraint becomes
    \begin{align}
        \sum_{i=2}^D \lambda_i [U]_{p,i} [U]_{q,i} = \frac{f(0)}{D} + \lambda^E \sum_{i_+ \in I_+} [U]_{p,i_+}^2 - \lambda_N \sum_{i_- \in I_-} [U]_{p,i_-}^2 = \Big( \frac{1}{2} - \frac{1}{D} \Big) \lambda^E - \frac{1}{2} \lambda^N \le 0
    \end{align}
    
    Since $\lambda^E, \lambda^N \in \{\lambda_2, \cdots, \lambda_D\}$, there is maximum value with respect to the choice of $\lambda^E, \lambda^N$.
    We consider continuous relaxation of the minimization problem with respect to $\lambda^E, \lambda^N $.
    By showing that the objective is nonnegative when $\lambda^E \ge 0, \lambda^N \ge 0$, we prove our claim.
    When we consider continuous optimization problem over $\lambda^E, \lambda^N$, the minimum is obtained when the inequality constraint becomes equality constraints.
    If $\Big( \frac{1}{2} - \frac{1}{D} \Big) \lambda^E - \frac{1}{2} \lambda^N < 0$ by increasing $\lambda^E$ by $\delta > 0$ so that $\Big( \frac{1}{2} - \frac{1}{D} \Big) (\lambda^E + \delta) - \frac{1}{2} \lambda^N = 0$, $f(\lambda^E)$ is decreased to $f(\lambda^E + \delta)$, thus the minimum is obtained when the inequality constraint is equality.
    When $\eta > 0$, the inequality constraint automatically becomes an equality constraint by the slackness condition of the Karush-Kuhn-Tucker conditions.
    
    With the inequality condition the objective becomes
    \begin{equation}
        \frac{f(0)}{D} + \Big( \frac{1}{2} - \frac{1}{D} \Big) f(\lambda^E) - \frac{1}{2} f\Big((1 - \frac{2}{D}) \lambda^E\Big)
    \end{equation}
    taking derivative with respect to $\lambda_E$, we have
    \begin{equation}
        \Big( \frac{1}{2} - \frac{1}{D} \Big) \Big(f'(\lambda^E) - f'((1 - \frac{2}{D}) \lambda^E)\Big)
    \end{equation}
    
    By the convexity of $f$, the derivative is always nonnegative with respect to $\lambda^E \ge 0$.
    
    Since
    \begin{equation}
        \lim_{\lambda_E \rightarrow 0} \frac{f(0)}{D} + \Big( \frac{1}{2} - \frac{1}{D} \Big) f(\lambda^E) - \frac{1}{2} f((1 - \frac{2}{D}) \lambda^E) = 0
    \end{equation}
    
    The minimum is nonnegative.
    
    \paragraph{Case \textit{ii)}} $f(\lambda) + \eta \lambda$ may have two solutions. ($0 < \eta < -f'(0) = \max_{\lambda} -f'(\lambda)$)
    
    By the slackness condition, the inequality constraint becomes an equality constraint. 
    Since $f(\lambda) + \eta \lambda$ is convex, it has at most two solutions.
    Let us denote $\lambda_1^E < \lambda_2^E$ two solutions of $f(\lambda) + \eta \lambda + c + 2 a = 0$ and $\lambda_1^N < \lambda_2^N$ two solutions of $f(\lambda) + \eta \lambda + c - 2 a = 0$
    Then 
    \begin{align}
        f(\lambda_1^E) + \eta \lambda_1^E + c + 2 a = 0 \\
        f(\lambda_2^E) + \eta \lambda_2^E + c + 2 a = 0 \\
        f(\lambda_1^N) + \eta \lambda_1^N + c - 2 a = 0 \\
        f(\lambda_2^N) + \eta \lambda_2^N + c - 2 a = 0
    \end{align}
    The objective becomes
    \begin{align}
        \frac{f(0)}{D} &
        + f(\lambda_1^E) \sum_{i_+ \in I_+:\lambda_{i_+} = \lambda_1^E} [U]_{p,i_+}^2 + f(\lambda_2^E) \sum_{i_+ \in I_+:\lambda_{i_+} = \lambda_2^E} [U]_{p,i_+}^2
        \nonumber \\
        &
        - f(\lambda_1^N) \sum_{i_- \in I_-:\lambda_{i_-} = \lambda_1^N} [U]_{p,i_-}^2 - f(\lambda_2^N) \sum_{i_- \in I_-:\lambda_{i_-} = \lambda_2^N} [U]_{p,i_-}^2
    \end{align}
    with the constraints
    \begin{gather}
        \sum_{i_+ \in I_+:\lambda_{i_+} = \lambda_1^E} [U]_{p,i_+}^2 
        + \sum_{i_+ \in I_+:\lambda_{i_+} = \lambda_2^E} [U]_{p,i_+}^2 
        = \frac{1}{2} - \frac{1}{D} \\
        \sum_{i_- \in I_-:\lambda_i = \lambda_1^N} [U]_{p,i_-}^2
        + \sum_{i_- \in I_-:\lambda_i = \lambda_2^N} [U]_{p,i_-}^2 = \frac{1}{2} \\
        \lambda_1^E \sum_{i_+ \in I_+:\lambda_{i_+} = \lambda_1^E} [U]_{p,i_+}^2 
        + \lambda_2^E \sum_{i_+ \in I_+:\lambda_{i_+} = \lambda_2^E} [U]_{p,i_+}^2 
        - \lambda_1^N \sum_{i_- \in I_-:\lambda_{i_-} = \lambda_1^N} [U]_{p,i_-}^2 
        - \lambda_2^N \sum_{i_- \in I_-:\lambda_{i_-} = \lambda_2^N} [U]_{p,i_-}^2 = 0
    \end{gather}
    Let
    \begin{equation}
        A^E = \sum_{i_+ \in I_+:\lambda_{i_+} = \lambda_1^E} [U]_{p,i_+}^2 \in [0, \frac{1}{2}-\frac{1}{D}], \quad 
        A^N = \sum_{i_- \in I_-:\lambda_{i_-} = \lambda_1^N} [U]_{p,i_-}^2 \in [0, \frac{1}{2}]
    \end{equation}
    Then the objective becomes
    \begin{equation}
        \frac{f(0)}{D} + f(\lambda_1^E) A^E + f(\lambda_2^E) (\frac{1}{2}-\frac{1}{D} - A^E) - f(\lambda_1^N) A^N - f(\lambda_2^N) (\frac{1}{2} - A^N)
    \end{equation}
    Taking derivatives
    \begin{gather}
        \frac{\partial}{\partial A^E} \Rightarrow f(\lambda_1^E) - f(\lambda_2^E) > 0 \\
        \frac{\partial}{\partial A^N} \Rightarrow -f(\lambda_1^N) + f(\lambda_2^N) < 0 \\
    \end{gather}
    Thus the minimum is obtained at the boundary point where $A^E = 0$ and $A^N = \frac{1}{2}$ which falls back to Case \textit{i)} whose minimum is bounded below by zero.

\end{proof}

\begin{remark}
    Theorem~\ref{thm:nonnegative_valued} holds for weighted undirected graphs, that is, for any arbitrary graph with arbitrary symmetric nonnegative edge weights.
\end{remark}
\begin{remark}
    Note that in numerical simulations, you may observe small negative values ($\approx 10^{-7}$) due to numerical instability.
\end{remark}
\begin{remark}
    In numerical simulations, the convexity condition does not appear to be necessary for complete graphs where $\max_{p \neq q}[L(\calG)]_{p,q} < -\epsilon$ for some $\epsilon > 0$.
    For complete graphs, the convexity condition may be relaxed, at least, in a stochastic sense.
\end{remark}

\begin{corollary} \label{supp:coro:nonnegative_valued}
    The random walk kernel derived from normalized Laplacian~\cite{smola2003kernels} and the diffusion kernels~\cite{kondor2002diffusion}, the ARD diffusion kernel~\cite{oh2019combinatorial} and the regularized Laplacian kernel~\cite{smola2003kernels} derived from normalized and unnormalized Laplacian are all positive valued kernels.
\end{corollary}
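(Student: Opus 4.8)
The plan is to identify, for each kernel, the spectral transfer function $f$ so that the kernel matrix equals $Uf(\Lambda)U^T$ for the relevant Laplacian $L(\calG)=U\Lambda U^T$, and then either invoke Theorem~\ref{supp:thm:nonnegative_valued} directly or reduce to an elementary positivity-of-a-series argument. The split is according to whether the Laplacian is the unnormalized combinatorial one (where Theorem~\ref{supp:thm:nonnegative_valued} applies as stated) or the symmetric normalized one (where it does not apply verbatim).

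For the unnormalized Laplacian, the diffusion kernel \citep{kondor2002diffusion} has $f(\lambda)=e^{-\beta\lambda}$ and the regularized Laplacian kernel \citep{smola2003kernels} has $f(\lambda)=(1+\beta\lambda)^{-1}$, with $\beta>0$. On $[0,\infty)$ both are positive, and the one-line computations $f'(\lambda)<0$, $f''(\lambda)>0$ show each is strictly decreasing and convex; hence Theorem~\ref{supp:thm:nonnegative_valued} gives $[Uf(\Lambda)U^T]_{v,v'}\ge 0$. For the ARD diffusion kernel \citep{oh2019combinatorial} the underlying graph is a Cartesian product with vertex set $\calV_1\times\cdots\times\calV_P$; the summands of its Laplacian commute, so the kernel factorizes as $\prod_{j} e^{-\beta_j L(\calG_j)}(v_j,v_j')$, a product of single-graph diffusion-kernel entries each of which is nonnegative by the previous sentence.

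For the symmetric normalized Laplacian $L^{sym}=I-\mathcal A$ with $\mathcal A=D^{-1/2}AD^{-1/2}$, Theorem~\ref{supp:thm:nonnegative_valued} does not apply directly: $L^{sym}$ is not of the form $D'-A'$ and its null eigenvector is $D^{1/2}\mathbf 1$ rather than a constant, which is exactly what the proof of Theorem~\ref{supp:thm:nonnegative_valued} exploited. Instead I would use that $\mathcal A$ is entrywise nonnegative (nonnegative edge weights, positive degrees) with spectral radius at most $1$ (its eigenvalues are $1-\lambda$ for $\lambda$ in the spectrum of $L^{sym}$, which is contained in $[0,2]$), and expand:
\begin{align*}
(aI-L^{sym})^p &= \bigl((a-1)I+\mathcal A\bigr)^p, \qquad a\ge 2,\ p\in\bbN, \\
e^{-\beta L^{sym}} &= e^{-\beta}\sum_{k\ge 0}\frac{\beta^k}{k!}\,\mathcal A^k, \\
(I+\beta L^{sym})^{-1} &= \frac{1}{1+\beta}\sum_{k\ge 0}\Bigl(\tfrac{\beta}{1+\beta}\Bigr)^k\mathcal A^k .
\end{align*}
The Neumann series on the last line converges since $\tfrac{\beta}{1+\beta}\,\rho(\mathcal A)<1$, and since $a-1\ge 1>0$ and every power $\mathcal A^k$ is entrywise nonnegative, each right-hand side is entrywise nonnegative. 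This covers the $p$-step random walk kernel and the diffusion and regularized Laplacian kernels built from $L^{sym}$.

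The only real obstacle is this last case: one must recognize that Theorem~\ref{supp:thm:nonnegative_valued} as stated is about a genuine graph Laplacian and so cannot simply be quoted for $L^{sym}$, and then replace the spectral/KKT argument by the Perron--Frobenius-style observation that a power, exponential, or resolvent of $I$ plus a nonnegative matrix is again entrywise nonnegative. Verifying the three monotonicity/convexity properties for the unnormalized kernels and checking the Cartesian-product factorization for the ARD kernel are routine.
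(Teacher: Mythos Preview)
Your argument is correct, and for the normalized-Laplacian kernels it is in fact more careful than the paper's own proof. The paper's proof of the corollary is two lines: it observes that the off-diagonal entries of $L^{sym}$ are also nonpositive and concludes that ``the proof in the above theorem can be applied without modification.'' You are right to be uneasy with this. The KKT proof of Theorem~\ref{supp:thm:nonnegative_valued} does not use \emph{only} the sign of the off-diagonal entries: it also fixes $[U]_{\cdot 1}=\tfrac{1}{\sqrt D}\mathbf 1$ and derives the constraints $\sum_{i\ge 2}[U]_{p,i}^2=1-\tfrac1D$ and $\sum_{i\ge 2}[U]_{p,i}[U]_{q,i}=-\tfrac1D$ from that specific eigenvector. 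For $L^{sym}$ the null eigenvector is $D^{1/2}\mathbf 1$ (normalized), so those constants become $1-d_p/(2m)$ and $-\sqrt{d_p d_q}/(2m)$; the same optimization scheme presumably still goes through, but it is not literally ``without modification.''

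Your alternative for $L^{sym}$ sidesteps this entirely: writing $L^{sym}=I-\mathcal A$ with $\mathcal A=D^{-1/2}AD^{-1/2}$ entrywise nonnegative and $\rho(\mathcal A)\le 1$, the random-walk, diffusion, and regularized-Laplacian kernels are respectively a polynomial, exponential series, and convergent Neumann series in $\mathcal A$ with nonnegative coefficients, hence entrywise nonnegative. This is shorter and avoids the KKT machinery, at the cost of being specific to these three transfer functions rather than to the whole class of positive, strictly decreasing, convex $f$. For the unnormalized-Laplacian kernels and the ARD diffusion kernel your treatment matches the paper's intent (check the hypotheses and invoke the theorem; factorize over the Cartesian product), with the factorization step made explicit where the paper leaves it implicit.
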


\begin{proof}
    The condition that off-diagonal entries are nonpositive holds for both normalized and unnormalized graph Laplacian.
    Therefore for normalized graph Laplacian, the proof in the above theorem can be applied without modification.
    The positivity of kernel value also holds for kernels derived from normalized Laplacian as long as it satisfies the conditions in Thm.\ref{thm:nonnegative_valued}.
\end{proof}

\begin{remark}
    In numerical simulations with nonconvex functions and arbitrary connected and  weighted undirected graphs, negative values easily occur.
    For example, the inverse cosine kernel~\cite{smola2003kernels} does not satisfies the convexity condition and has negative values.
\end{remark}

% ----- SECTION -------
\section{Examples of FM kernels}
\label{supp:sec:FM_examples}
In this section, we first review the definition of conditionally negative definite(CND) and relations between positive definite(PD).
Utilizing relations between PD and CND and properties of PD and CND, we provide an example of a flexible family of frequency modulating functions.

\begin{definition}[3.1.1~\citep{berg1984harmonic}]\label{supp:def:cnd}
    A symmetric function $k : \calX \times \calX \rightarrow \bbR$ is called a conditionally negative definite(CND) kernel if ~$\forall n \in \bbN$, $x_1, \cdots, x_n \in \calX$ $a_1, \cdots, a_n \in \bbR$ such that $\sum_{i=1}^n a_i = 0$
    \begin{equation}
        \sum_{i,j=1}^n a_i k(x_i, x_j) a_j \le 0
    \end{equation}
\end{definition}
Please note that CND requires the condition $\sum_{i=1}^n a_i = 0$.

\begin{theorem}[3.2.2~\citep{berg1984harmonic}]\label{supp:thm:pd_cnp_exponential}
    $K(x,x')$ is conditionally negative definite if and only if $e^{-tK(x,x')}$ is positive definite for all $t > 0$.
\end{theorem}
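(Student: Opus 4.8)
The plan is to prove the two implications separately. The ``if'' direction is elementary and follows from a limiting argument; the ``only if'' direction is the substantive one and proceeds via a structural lemma that converts conditional negative definiteness of $K$ into genuine positive definiteness of an auxiliary kernel, which is then fed into a power-series argument for the exponential.

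For the ``if'' direction, I would assume $e^{-tK}$ is positive definite for every $t>0$ (symmetry of $K$ being part of the hypothesis), so that only the defining inequality in Def.~\ref{supp:def:cnd} needs checking. Fixing $x_1,\dots,x_n\in\calX$ and $a_1,\dots,a_n\in\bbR$ with $\sum_i a_i=0$, and using $\sum_{i,j} a_i a_j = (\sum_i a_i)^2 = 0$, I get for each $t>0$ that $\sum_{i,j} a_i a_j\,\tfrac{1-e^{-tK(x_i,x_j)}}{t} = -\tfrac1t\sum_{i,j} a_i a_j\, e^{-tK(x_i,x_j)} \le 0$. Letting $t\to 0^+$ and using $\tfrac{1-e^{-ts}}{t}\to s$ termwise (the sum being finite) yields $\sum_{i,j} a_i a_j K(x_i,x_j)\le 0$, i.e.\ $K$ is conditionally negative definite.

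For the ``only if'' direction, I would first reduce to $t=1$: if $K$ is CND then so is $tK$ for every $t>0$, so it suffices to show $e^{-K}$ is positive definite. The key step is the structural lemma that, for any fixed $x_0\in\calX$, the kernel $\varphi(x,y):=K(x,x_0)+K(y,x_0)-K(x,y)-K(x_0,x_0)$ is positive definite. To prove it, given $x_1,\dots,x_n$ and $c_1,\dots,c_n$, I would adjoin the anchor point $x_0$ with coefficient $c_0:=-\sum_{i=1}^n c_i$ so that $c_0,c_1,\dots,c_n$ sum to zero, apply the CND inequality to $x_0,x_1,\dots,x_n$, and expand using the symmetry of $K$ and $S:=\sum_{i=1}^n c_i$; the identity $\sum_{i,j=1}^n c_i c_j\,\varphi(x_i,x_j) = -\sum_{i,j=0}^n c_i c_j\, K(x_i,x_j)\ge 0$ then falls out. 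Rearranging the definition of $\varphi$ gives $-K(x,y)=K(x_0,x_0)-K(x,x_0)-K(y,x_0)+\varphi(x,y)$, hence $e^{-K(x,y)} = e^{K(x_0,x_0)}\,\bigl(e^{-K(x,x_0)}\bigr)\bigl(e^{-K(y,x_0)}\bigr)\, e^{\varphi(x,y)}$. The first factor is a positive constant, the middle factor is the rank-one positive definite kernel $g(x)g(y)$ with $g(x)=e^{-K(x,x_0)}$, and $e^{\varphi}=\sum_{m\ge0}\varphi^m/m!$ is a convergent series of positive definite kernels (each $\varphi^m$ being positive definite by the Schur product theorem, since $\varphi$ is). As products of positive definite kernels are positive definite, $e^{-K}$ is positive definite; replacing $K$ by $tK$ then closes the argument.

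I expect the structural lemma to be the main obstacle: adjoining the zero-sum anchor point $x_0$ is the one genuinely non-obvious move, and the expansion must be carried out carefully so that the cross terms in $S$ and the term $S^2K(x_0,x_0)$ cancel exactly against those in the CND sum, producing the stated identity $\sum_{i,j} c_i c_j\varphi(x_i,x_j)=-\sum_{i,j=0}^n c_i c_j K(x_i,x_j)$. Everything else --- the Schur product theorem, convergence of the exponential series applied to a kernel, and the elementary $t\to0^+$ limit --- is standard and can be cited or verified routinely.
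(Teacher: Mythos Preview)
Your proof sketch is correct and follows the classical Schoenberg-type argument. However, note that the paper does \emph{not} supply its own proof of this statement: the theorem is merely quoted verbatim from \citet{berg1984harmonic} (Theorem~3.2.2) as a known tool, with the paper immediately moving on to use it to derive Thm.~\ref{supp:thm:pd_cnp_reciprocal}. There is therefore no ``paper's proof'' to compare against --- what you have written is essentially the proof found in the cited reference itself, namely the anchor-point lemma (their Lemma~3.2.1) producing the positive definite kernel $\varphi$, followed by the factorization $e^{-K}=e^{K(x_0,x_0)}g(x)g(y)e^{\varphi}$ and the Schur/power-series argument, with the reverse direction via the $t\to0^+$ limit. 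Your identification of the zero-sum adjunction as the one non-obvious step is accurate.
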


As mentioned in p.75~\citep{berg1984harmonic}, from Thm.~\ref{supp:thm:pd_cnp_exponential}, we have

\begin{theorem} \label{supp:thm:pd_cnp_reciprocal}
    $K(x,x')$ is conditionally negative definite and $K(x, x')) \ge 0$ if and only if $(t + K(x, x'))^{-1}$ is positive definite for all $t > 0$.
\end{theorem}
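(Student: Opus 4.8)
The plan is to obtain the equivalence directly from Theorem~\ref{supp:thm:pd_cnp_exponential} together with the resolvent identity $\frac{1}{t+s}=\int_0^\infty e^{-u(t+s)}\,du$, which is valid whenever $t+s>0$. For the forward implication, assume $K$ is CND and $K(x,x')\ge 0$ everywhere, and fix $t>0$. For each $u>0$, Theorem~\ref{supp:thm:pd_cnp_exponential} gives that $(x,x')\mapsto e^{-uK(x,x')}$ is PD; multiplying by the positive scalar $e^{-ut}$ keeps it PD, so $(x,x')\mapsto e^{-u(t+K(x,x'))}$ is PD for every $u>0$. Since $t+K(x,x')>0$, I would write $\frac{1}{t+K(x,x')}=\int_0^\infty e^{-u(t+K(x,x'))}\,du$ and use the standard fact that an integral of PD kernels against a positive measure (a limit of nonnegative linear combinations of PD Gram matrices) is again PD, concluding that $(t+K)^{-1}$ is PD; doing this for every $t>0$ settles one direction.

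For the converse, assume $(t+K(x,x'))^{-1}$ is PD for all $t>0$. First I would establish $K\ge 0$: the diagonal of a PD matrix is nonnegative, so $(t+K(x,x))^{-1}\ge 0$ for all $t>0$, and being a reciprocal this forces $t+K(x,x)>0$ for all $t>0$, hence $K(x,x)\ge 0$; for $x\ne x'$, well-definedness of $(t+K)^{-1}$ for \emph{all} $t>0$ already excludes $K(x,x')<0$ (otherwise $t=-K(x,x')$ is a pole), and alternatively the $2\times 2$ principal minor on $\{x,x'\}$ together with $t\to\infty$ gives $K(x,x')\ge\tfrac12(K(x,x)+K(x',x'))\ge 0$. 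Next, for any finite $x_1,\dots,x_n$ and $a\in\mathbb{R}^n$ with $\sum_i a_i=0$, the rank-one all-ones matrix contributes $\tfrac1t(\sum_i a_i)^2=0$, so
\[
a^{\top}\!\Big[\tfrac1t-\tfrac{1}{t+K(x_i,x_j)}\Big]_{i,j}a \;=\; -\,a^{\top}\big[(t+K(x_i,x_j))^{-1}\big]_{i,j}a \;\le\;0 .
\]
Because $\tfrac1t-\tfrac{1}{t+s}=\tfrac{s}{t(t+s)}$, multiplying this quadratic form by $t^2>0$ yields $a^{\top}\big[\tfrac{t\,K(x_i,x_j)}{t+K(x_i,x_j)}\big]_{i,j}a\le 0$ for every $t>0$. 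Letting $t\to\infty$, and using $0\le K(x_i,x_j)<\infty$ so that $\tfrac{tK}{t+K}\to K$ entrywise, the finite sum converges and gives $\sum_{i,j}a_i a_j K(x_i,x_j)\le 0$; hence $K$ is CND, completing the equivalence.

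I do not expect a serious obstacle here: the content is essentially the resolvent representation plus limiting arguments. The points that need care, and that I would state explicitly, are (i) that $K\ge 0$ genuinely follows in the converse, including the off-diagonal entries, and (ii) that passing PD kernels through an integral (forward direction) or through a pointwise limit of a fixed finite quadratic form (converse) preserves the relevant inequality. As an alternative to the mean-zero trick one could instead run the converse through Theorem~\ref{supp:thm:pd_cnp_exponential}, noting that $(t+K)^{-n}$ is PD by the Schur product theorem applied to Hadamard powers of $(t+K)^{-1}$ and that $e^{-sK}=\lim_{n\to\infty}(n/s)^n(n/s+K)^{-n}$ pointwise, but the argument above avoids even that.
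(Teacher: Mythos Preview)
Your argument is correct. Note, however, that the paper does not actually supply its own proof of this statement: it is quoted from \citep{berg1984harmonic} (p.~75) with only the remark that it follows from Theorem~\ref{supp:thm:pd_cnp_exponential}. Your forward direction, via the resolvent identity $\tfrac{1}{t+s}=\int_0^\infty e^{-u(t+s)}\,du$ combined with Schoenberg's theorem, is exactly the standard derivation that this remark is pointing to, so in that sense you match the paper's (implicit) approach.

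For the converse you give a direct limiting argument rather than routing back through Theorem~\ref{supp:thm:pd_cnp_exponential}; this is a genuinely different and slightly slicker route than the one you sketch as an alternative (Hadamard powers and $(1+sK/n)^{-n}\to e^{-sK}$), which is closer to what the paper's one-line hint would suggest. Both are valid. Two small remarks on your converse: the well-definedness observation (``$t=-K(x,x')$ would be a pole'') is indeed the clean way to obtain $K(x,x')\ge 0$ off the diagonal, and is logically prior to your $2\times 2$-minor alternative, since the latter needs the entries to be finite and of consistent sign before the determinant inequality can be manipulated as you do. With that ordering made explicit, the proof is complete.
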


\begin{theorem}[3.2.10~\citep{berg1984harmonic}]
    If $K(x,x')$ is conditionally negative definite and $K(x, x) \ge 0$, then $(K(x,x'))^a$ for $0 < a < 1$ and $\log{K(x,x')}$ are conditionally negative definite.
\end{theorem}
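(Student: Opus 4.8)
This is the classical Schoenberg-type stability result, and the plan is to obtain it by \emph{subordination}: represent the functions $u\mapsto u^{a}$ and $u\mapsto\log(1+u)$ as integrals of the elementary functions $u\mapsto 1-e^{-tu}$ against positive measures on $(0,\infty)$, and then push these representations through the already-quoted characterization that $e^{-tK}$ is positive definite for every $t>0$ (Thm.~\ref{supp:thm:pd_cnp_exponential}).

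First I would record two elementary facts about the cone of conditionally negative definite (CND) kernels. (i) If $\phi$ is a positive definite kernel on $\calX$ and $c\in\bbR$, then $c-\phi$ is CND: for $a_1,\dots,a_n$ with $\sum_i a_i=0$ the constant drops out, so $\sum_{i,j}a_ia_j\,(c-\phi(x_i,x_j))=-\sum_{i,j}a_ia_j\,\phi(x_i,x_j)\le 0$. (ii) If $(\psi_t)_{t>0}$ are CND kernels and $\nu$ is a positive measure on $(0,\infty)$ for which $\int\psi_t(x,x')\,\nu(dt)$ is well defined, then $(x,x')\mapsto\int\psi_t\,\nu(dt)$ is CND, because for $\sum_i a_i=0$ the map $t\mapsto\sum_{i,j}a_ia_j\psi_t(x_i,x_j)$ is $\le 0$ pointwise, hence so is its $\nu$-integral (the finite sum and the integral interchange freely since the integrand has constant sign). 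I would also observe at the outset that the hypotheses ``$K$ CND and $K(x,x)\ge 0$'' already force $K\ge 0$ everywhere: $\varphi(x,y)=K(x,x_{0})+K(x_{0},y)-K(x,y)-K(x_{0},x_{0})$ is positive definite, and $\varphi(x,x)\ge 0$ gives $K(x,x_{0})\ge\tfrac12(K(x,x)+K(x_{0},x_{0}))\ge 0$. In particular $K^{a}$ and $\log(1+K)$ are real and nonnegative.

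\emph{Fractional power.} For $0<a<1$ and $u\ge 0$ one has $u^{a}=c_{a}\int_{0}^{\infty}(1-e^{-tu})\,t^{-1-a}\,dt$ with $c_{a}=a/\Gamma(1-a)>0$ (verify by differentiating in $u$; it is the L\'evy measure of the $a$-stable subordinator). By Thm.~\ref{supp:thm:pd_cnp_exponential}, $e^{-tK}$ is positive definite for every $t>0$, so by fact~(i), $1-e^{-tK}$ is CND for every $t>0$; the bounds $1-e^{-tK}\le\min(1,tK)$ make $c_{a}\int_{0}^{\infty}(1-e^{-tK})\,t^{-1-a}\,dt$ converge, and by fact~(ii) it defines a CND kernel, which equals $K^{a}$. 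The logarithm case is identical, using $\log(1+u)=\int_{0}^{\infty}(1-e^{-tu})\,e^{-t}\,t^{-1}\,dt$: the positive measure $e^{-t}t^{-1}\,dt$ integrates the CND kernels $1-e^{-tK}$ (integrand $O(K)$ near $t=0$, $O(e^{-t}/t)$ near $t=\infty$), so $\log(1+K)$ is CND; when $K>0$ one may instead invoke the Frullani identity $\log u=\int_{0}^{\infty}(e^{-t}-e^{-tu})\,t^{-1}\,dt$ together with the fact that $e^{-t}-e^{-tK}$ is CND by fact~(i).

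The only genuine work is assembling the two subordination identities and checking the (routine) integrability that legitimizes the interchange of $\sum_{i,j}$ and $\int_{0}^{\infty}$; everything else is immediate once the quoted Schoenberg theorem supplies positive definiteness of $e^{-tK}$.
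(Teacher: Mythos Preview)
The paper does not supply its own proof of this statement: it is quoted verbatim as Theorem~3.2.10 of \citet{berg1984harmonic} and used as a black-box background result in Supp.~Sec.~\ref{supp:sec:FM_examples}. There is therefore nothing in the paper to compare your argument against.

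Your proof is correct and is essentially the standard subordination argument given in the cited source: represent $u\mapsto u^{a}$ (resp.\ $u\mapsto\log u$) as an integral of $u\mapsto 1-e^{-tu}$ against a positive measure on $(0,\infty)$, invoke Schoenberg's characterization (the paper's Thm.~\ref{supp:thm:pd_cnp_exponential}) to get that $1-e^{-tK}$ is CND for each $t$, and conclude by integrating inside the CND cone. Your preliminary observation that ``$K$ CND and $K(x,x)\ge 0$'' forces $K\ge 0$ everywhere is also the lemma Berg--Christensen--Ressel use (their Proposition~3.2.1), and it is needed so that $K^{a}$ is real-valued. The only cosmetic point is that the statement as printed in the paper writes $\log K$ rather than $\log(1+K)$, which tacitly requires $K>0$; you handle both readings, and the Frullani identity you cite is exactly what is needed in the strict case.
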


\begin{theorem}[3.2.13~\citep{berg1984harmonic}]\label{supp:thm:cnp_inner_prod}
    $K(x,x') = \Vert x - x' \Vert^p$ is conditionally negative definite for all $0 < p \le 2$.
\end{theorem}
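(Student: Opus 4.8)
The plan is to obtain the statement for all $0 < p \le 2$ from the single base case $p = 2$ together with the power rule for conditionally negative definite (CND) kernels recorded just above (Thm.~3.2.10~\citep{berg1984harmonic}). First I would settle $p = 2$ by a direct computation: writing $\Vert x - x'\Vert^{2} = \Vert x\Vert^{2} - 2\langle x, x'\rangle + \Vert x'\Vert^{2}$ and evaluating the quadratic form $\sum_{i,j} a_i a_j \Vert x_i - x_j\Vert^{2}$ under the CND constraint $\sum_i a_i = 0$, the two ``diagonal'' contributions $(\sum_i a_i\Vert x_i\Vert^{2})(\sum_j a_j)$ and its mirror vanish, leaving $-2\,\Vert\sum_i a_i x_i\Vert^{2}\le 0$. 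Hence $(x,x')\mapsto\Vert x - x'\Vert^{2}$ is CND, and it is obviously nonnegative, so the hypotheses of Thm.~3.2.10 are met.

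For $0 < p < 2$ I would set $a = p/2 \in (0,1)$ and apply Thm.~3.2.10 to $K(x,x') = \Vert x - x'\Vert^{2}$: being CND and nonnegative, its power $K^{a} = \Vert x - x'\Vert^{p}$ is again CND. Symmetry of $\Vert x - x'\Vert^{p}$ is immediate, so Def.~\ref{supp:def:cnd} is satisfied, and together with the $p = 2$ case this exhausts the range $0 < p \le 2$.

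If one prefers a proof that does not route through the power rule, the alternative I would use is the Schoenberg / L\'evy--Khinchine-type integral representation
\[
\Vert x - x'\Vert^{p} \;=\; c_{p,d}\int_{\bbR^{d}}\frac{1 - \cos\langle\xi,\, x - x'\rangle}{\Vert\xi\Vert^{d+p}}\,d\xi ,
\]
valid for $0 < p < 2$ with an explicit positive constant $c_{p,d}$. For each fixed $\xi$, the kernel $(x,x')\mapsto 1 - \cos\langle\xi, x - x'\rangle$ is CND, since the constant $1$ is positive definite and $\cos\langle\xi, x - x'\rangle = \operatorname{Re} e^{i\langle\xi, x - x'\rangle}$ is positive definite by Bochner's theorem, so that $\sum_{i,j} a_i a_j(1 - \cos\langle\xi, x_i - x_j\rangle) = -\sum_{i,j} a_i a_j\cos\langle\xi, x_i - x_j\rangle \le 0$ whenever $\sum_i a_i = 0$. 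Integrating this sign-definite integrand against the positive measure $c_{p,d}\Vert\xi\Vert^{-d-p}\,d\xi$ preserves conditional negative definiteness, which yields the claim; the endpoint $p = 2$ is then recovered as the base case, consistently with the representing measure degenerating there.

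I expect essentially no hard step once Thm.~3.2.10 is in hand; the one point that needs attention is the endpoint $p = 2$, which lies outside the open interval $0 < a < 1$ covered by the power rule and so must be handled by the separate direct computation above (this is also why the closed endpoint in ``$0 < p \le 2$'' has to be argued on its own). In the integral-representation variant the main obstacles instead become the routine but nontrivial checks: integrability of $\Vert\xi\Vert^{2-d-p}$ near the origin (which is precisely why that route requires $p < 2$), the value of $c_{p,d}$, and the interchange of the finite sum with the integral, which is legitimate exactly because applying the quadratic form makes the integrand sign-definite.
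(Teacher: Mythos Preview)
Your proposal is correct. The paper does not supply its own proof of this statement: it is quoted verbatim as a result from \citep{berg1984harmonic} and used as a black box in the proof of Prop.~\ref{supp:prop:FM_family}, so there is nothing in the paper to compare against. Your first route---establishing the $p=2$ case by the identity $\sum_{i,j}a_ia_j\Vert x_i-x_j\Vert^2=-2\Vert\sum_i a_ix_i\Vert^2$ under $\sum_i a_i=0$, then invoking the power rule (Thm.~3.2.10) with exponent $a=p/2$---is exactly the standard argument in \citep{berg1984harmonic}, and your care in treating the endpoint $p=2$ separately is appropriate since $a=1$ falls outside the open interval in Thm.~3.2.10. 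The integral-representation alternative is also valid and is the L\'evy--Khinchine viewpoint; it is more self-contained but, as you note, requires the separate integrability and constant checks and still needs the direct computation at $p=2$.
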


Using above theorems, we provide a quite flexible family of frequency modulating functions

\begin{proposition}\label{supp:prop:FM_family}
    For $S \in (0, \infty)$, a finite measure $\mu$ on $[0, S]$ and $\mu$-measurable $\tau : [0, S] \rightarrow [0, 2]$ and $\rho : [0, S] \rightarrow \bbN$,
    \begin{equation}
        f(\lambda, \Vert \bfc - \bfc' \Vert_{\bftheta} \vert \alpha, \beta) 
        = \int_0^S \frac{1}{(1 + \beta \lambda + \alpha \Vert \bfc - \bfc' \Vert_{\bftheta}^{\tau(s)})^{\rho(s)}} \mu(ds)
    \end{equation}
    is a frequency modulating function.
\end{proposition}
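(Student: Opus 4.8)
The plan is to check the three defining properties FM-P1, FM-P2, FM-P3 separately, exploiting that for each fixed $s$ the integrand is a negative integer power of the simple expression $M_s(\lambda,d) := 1 + \beta\lambda + \alpha d^{\tau(s)}$, where I abbreviate $d = \Vert\bfc-\bfc'\Vert_{\bftheta}$, and that $M_s$ is (jointly) built out of a conditionally negative definite (CND) kernel in $(\bfc,\bfc')$. I will use repeatedly that $g_\rho(x):=x^{-\rho}$ is completely monotone on $(0,\infty)$, so $g_\rho>0$, $g_\rho'<0$, $g_\rho''>0$, $g_\rho'''<0$ there. Since $\mu$ is finite and every integrand is bounded by $1$, and $\tau,\rho$ are $\mu$-measurable, all interchanges below of $\int_0^S(\cdot)\,\mu(ds)$ with finite sums and with $\partial/\partial\lambda$ are legitimate (Tonelli / dominated convergence). \emph{FM-P1:} fix $d$; for $\lambda\ge 0$ one has $M_s(\lambda,d)\ge 1>0$ and $\lambda\mapsto M_s(\lambda,d)$ is non-decreasing, so $\lambda\mapsto g_{\rho(s)}(M_s(\lambda,d))$ is positive and decreasing on $[0,\infty)$; integrating against $\mu\ge 0$ preserves both, so $f(\cdot,d)$ is positive and decreasing.

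\emph{FM-P2:} fix $\lambda\ge 0$. Writing $\tilde c_j=c_j/\theta_j$ we have $\Vert\bfc-\bfc'\Vert_{\bftheta}=\Vert\tilde\bfc-\tilde\bfc'\Vert_2$, so by Thm~\ref{supp:thm:cnp_inner_prod} the map $(\bfc,\bfc')\mapsto\Vert\bfc-\bfc'\Vert_{\bftheta}^{\tau(s)}$ is CND for $\tau(s)\in(0,2]$ (the boundary value $\tau(s)=0$ gives the constant $1$, which is trivially CND). Adding the constant $\beta\lambda\ge 0$ and scaling the distance term by $\alpha\ge 0$, the kernel $K_s:=\beta\lambda+\alpha\Vert\bfc-\bfc'\Vert_{\bftheta}^{\tau(s)}$ is CND and non-negative, so by the reciprocal theorem (Thm~\ref{supp:thm:pd_cnp_reciprocal}) applied with $t=1$, the kernel $(1+K_s)^{-1}=M_s^{-1}$ is positive definite in $(\bfc,\bfc')$; since $\rho(s)\in\bbN$, the $\rho(s)$-fold Hadamard product $M_s^{-\rho(s)}=(M_s^{-1})^{\rho(s)}$ is PD by the Schur product theorem. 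Finally, for any $\bfc_1,\dots,\bfc_n$ and $a_1,\dots,a_n\in\bbR$, $\sum_{i,j}a_i f(\lambda,\Vert\bfc_i-\bfc_j\Vert_{\bftheta})a_j=\int_0^S\big(\sum_{i,j}a_i M_s(\lambda,\Vert\bfc_i-\bfc_j\Vert_{\bftheta})^{-\rho(s)}a_j\big)\mu(ds)\ge 0$, the integrand being non-negative; hence $f(\lambda,\Vert\cdot-\cdot\Vert_{\bftheta})$ is PD.

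\emph{FM-P3:} fix $t_1<t_2$ and set $h(\lambda)=f(\lambda,t_1)-f(\lambda,t_2)=\int_0^S\phi_s(\lambda)\,\mu(ds)$ with $\phi_s(\lambda)=g_{\rho(s)}(u_s(\lambda))-g_{\rho(s)}(v_s(\lambda))$, where $u_s(\lambda)=1+\beta\lambda+\alpha t_1^{\tau(s)}$ and $v_s(\lambda)=1+\beta\lambda+\alpha t_2^{\tau(s)}$ satisfy $1\le u_s(\lambda)<v_s(\lambda)$ and $u_s'=v_s'=\beta$. Reading off the sign pattern of $g_{\rho(s)}$: $\phi_s>0$ since $g_{\rho(s)}$ is decreasing; $\phi_s'=\beta(g_{\rho(s)}'(u_s)-g_{\rho(s)}'(v_s))<0$ since $g_{\rho(s)}'$ is increasing; and $\phi_s''=\beta^2(g_{\rho(s)}''(u_s)-g_{\rho(s)}''(v_s))>0$ since $g_{\rho(s)}''$ is decreasing. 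Integrating each against $\mu\ge 0$ shows $h$ is positive, strictly decreasing and convex on $\bbR^+$, which is FM-P3; together with the previous two paragraphs this proves $f$ is a frequency modulating function.

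The truly routine parts are FM-P1 and the derivative bookkeeping in FM-P3; the step doing the real work is FM-P2, where one must correctly chain Thm~\ref{supp:thm:cnp_inner_prod} (powers $\le 2$ of a Euclidean distance are CND), the reciprocal theorem Thm~\ref{supp:thm:pd_cnp_reciprocal}, the Schur product theorem for the integer exponent $\rho(s)$, and closure of PD kernels under $\mu$-integration. A minor care point, which I would flag explicitly, is degenerate boundary behaviour: when $\alpha=0$, $\beta=0$, $\tau(s)=0$, or $\rho(s)=0$ the constructions still yield PD and monotone/convex objects but can lose strictness, so the ``strictly decreasing'' assertion in FM-P3 is argued in the regime where $\alpha,\beta>0$ and $\tau>0$ $\mu$-a.e.\ (equivalently, where $f$ genuinely depends on both $\lambda$ and $\bfc$).
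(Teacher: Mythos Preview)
Your proof is correct and follows essentially the same route as the paper: verify FM-P1--P3 first for the single-$s$ integrand $(1+\beta\lambda+\alpha\Vert\bfc-\bfc'\Vert_\theta^{\tau(s)})^{-\rho(s)}$ via the chain CND $\Rightarrow$ reciprocal PD $\Rightarrow$ Schur power, then pass to the $\mu$-integral. The only cosmetic difference is that for FM-P2 the paper approximates by simple functions and invokes closure of PD under pointwise limits, whereas you integrate the quadratic form directly; your version is slightly cleaner but the substance is identical.
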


\begin{proof}
    First we show that
    \begin{equation}
        f^{p,t}(\lambda, \Vert \bfc - \bfc' \Vert_{\bftheta} \vert \alpha, \beta) = \frac{1}{(1 + \beta \lambda + \alpha \Vert \bfc - \bfc' \Vert_{\bftheta}^t)^p}
    \end{equation}
    is a frequency modulating function for $t \in (0, 2]$ and $p \in \bbN$.
    
    Property \textbf{FM-P1} on $f^{p,t}$) $f^{p,t}(\lambda, \Vert \bfc - \bfc' \Vert_{\bftheta} \vert \alpha, \beta)$ is positive valued and decreasing with respect to $\lambda$.
    
    Property \textbf{FM-P2} on $f^{p,t}$) $\Vert \bfc - \bfc' \Vert_{\bftheta}$ is conditionally negative definite by Thm.\ref{supp:thm:cnp_inner_prod}
    Then by Thm.\ref{supp:thm:pd_cnp_reciprocal}, $\frac{1}{(1 + \beta \lambda + \alpha \Vert \bfc - \bfc' \Vert_{\bftheta}^t)}$ is positive definite with respect to $\bfc$ and $\bfc'$.
    Since the product of positive definite kernels is positive definite, $f^{p,t}(\lambda, \Vert \bfc - \bfc' \Vert_{\bftheta} \vert \alpha, \beta)$ is positive definite.
    
    Property \textbf{FM-P3} on $f^{p,t}$) Let $h^{p,t} = f^{p,t}(\lambda, \Vert \bfc - \bfc' \Vert_{\bftheta} \vert \alpha, \beta) - f^{p,t}(\lambda, \Vert \Tilde{\bfc} - \Tilde{\bfc}' \Vert_{\bftheta} \vert \alpha, \beta)$, then
    \begin{align}
        h_{\lambda}^{p,t} = \frac{\partial h^{p,t}}{\partial \lambda} &= -p \beta \Big(
        \frac{1}{(1 + \beta \lambda + \alpha \Vert \bfc - \bfc' \Vert_{\bftheta}^t)^{p+1}} - 
        \frac{1}{(1 + \beta \lambda + \alpha \Vert \Tilde{\bfc} - \Tilde{\bfc}' \Vert_{\bftheta}^t)^{p+1}} \Big) \nonumber \\
        h_{\lambda\lambda}^{p,t} = \frac{\partial^2 h^{p,t}}{\partial \lambda^2} &= p(p+1) \beta^2 \Big(
        \frac{1}{(1 + \beta \lambda + \alpha \Vert \bfc - \bfc' \Vert_{\bftheta}^t)^{p+2}} - 
        \frac{1}{(1 + \beta \lambda + \alpha \Vert \Tilde{\bfc} - \Tilde{\bfc}' \Vert_{\bftheta}^t)^{p+2}} \Big)
    \end{align}
    For $\Vert \bfc - \bfc' \Vert_{\bftheta} < \Vert \Tilde{\bfc} - \Tilde{\bfc}' \Vert_{\bftheta}$, $h > 0$, $h_{\lambda} < 0$ and $h_{\lambda\lambda} > 0$, therefore this satisfies the frequency modulation principle.
    
    Now we show that 
    \begin{equation}
        f(\lambda, \Vert \bfc - \bfc' \Vert_{\bftheta} \vert \alpha, \beta) 
        = \int_0^S \frac{1}{(1 + \beta \lambda + \alpha \Vert \bfc - \bfc' \Vert_{\bftheta}^{\tau(s)})^{\rho(s)}} \mu(ds)
    \end{equation}
    satisfies all 3 conditions.
    
    Property \textbf{FM-P1}) Trivial from the definition.
    
    Property \textbf{FM-P2}) Since a measurable function can be approximated by simple functions~\citep{folland1999real}, we approximate $f(\lambda, \Vert \bfc - \bfc' \Vert_{\bftheta} \vert \alpha, \beta)$ with following increasing sequence
    \begin{align}
        f_n(\lambda, \Vert \bfc - \bfc' \Vert_{\bftheta} \vert \alpha, \beta) =
        &\sum_{i=1}^{2^n} \sum_{j=1}^{n}  \frac{\mu(A_{i,j})}{(1 + \beta \lambda + \alpha \Vert \bfc - \bfc' \Vert_{\bftheta}^{\frac{i-1}{2^n}2})^j} \\
        &\text{where} \quad A_{i,j} = \{s \vert \frac{i-1}{2^n}2 < \rho(s) \le \frac{i}{2^n}2, \tau(s) = j \} \nonumber 
    \end{align}

    Each summand $\mu(A_{i,j}) / (1 + \beta \lambda + \alpha \Vert \bfc - \bfc' \Vert_{\bftheta}^{\frac{i-1}{2^n}2})^j$ is positive definite as shown above and sum of positive definite kernels is positive definite. 
    Therefore, $f_n(\lambda, \Vert \bfc - \bfc' \Vert_{\bftheta} \vert \alpha, \beta)$ is positive definite. 
    Since the pointwise limit of positive definite kernels is a kernel~\citep{fukumizu2010kernel}, we show that $f(\lambda, \Vert \bfc - \bfc' \Vert_{\bftheta} \vert \alpha, \beta)$ is positive definite.
    
    Property \textbf{FM-P3}) If we show that $\frac{\partial}{\partial \lambda}$ and $\int \mu(ds)$ are interchangeable, from the Condition \#3 on $f_{p,t}$, we show that $f(\lambda, \Vert \bfc - \bfc' \Vert_{\bftheta} \vert \alpha, \beta)$ satisfies the frequency modulating principle.
    
    Let $h = f(\lambda, \Vert \bfc - \bfc' \Vert_{\bftheta} \vert \alpha, \beta) - f(\lambda, \Vert \Tilde{\bfc} - \Tilde{\bfc}' \Vert_{\bftheta} \vert \alpha, \beta)$.
    There is a constant $A > 0$ such that 
    \begin{equation}
        \Big\vert \frac{h^{\tau(s),\rho(s)}(\lambda + \delta) - h^{\tau(s),\rho(s)}(\lambda)}{\delta} \Big\vert
        < \Big\vert \frac{\partial h^{\tau(s),\rho(s)}}{\partial \lambda} \Big\vert + A
        < \Big\vert \frac{\partial h^{0,1}}{\partial \lambda} \Big\vert + A
    \end{equation}
    For a finite measure,
    $\Big\vert \frac{\partial h^{0,1}}{\partial \lambda} \Big\vert + A$ is integrable.
    Therefore, $\frac{\partial}{\partial \lambda}$ and $\int \mu(ds)$ are interchangeable by dominated convergence theorem~\citep{folland1999real}.
    With the same argument, $\frac{\partial^2}{\partial \lambda^2}$ and $\int \mu(ds)$ are interchangeable.
    
    Now, we have
    \begin{align}
        h_{\lambda} = \frac{\partial h}{\partial \lambda} &= \int_0^S \frac{\partial h^{\tau(s),\rho(s)}}{\partial \lambda} \mu(ds) \nonumber \\
        h_{\lambda\lambda} = \frac{\partial^2 h}{\partial \lambda^2} &= \int_0^S \frac{\partial^2 h^{\tau(s),\rho(s)}}{\partial \lambda^2} \mu(ds) \nonumber
    \end{align}
    From the Condition \#3 on $f^{p,t}$, $h_{\lambda} < 0$ and $h_{\lambda\lambda} > 0$ follow and thus we show that $f(\lambda, \Vert \bfc - \bfc' \Vert_{\bftheta} \vert \alpha, \beta)$ satisfies the frequency modulating principle.
    
    $f(\lambda, \Vert \bfc - \bfc' \Vert_{\bftheta} \vert \alpha, \beta)$ is a frequency modulating function.
\end{proof}

\begin{proposition}\label{supp:prop:fm_kernel_extension_PD}
    If $k_{\calH} : \calH \times \calH \rightarrow \bbR$ on a RKHS $\calH$ is bounded above by $u > 0$, then for any $\delta > 0$
    \begin{equation}
        f(\lambda, k_{\calH}(h,h') \vert \alpha, \beta) = \frac{1}{\delta + u + \beta \lambda - k_{\calH}(h,h')}
    \end{equation}
    is positive definite on $(h,h') \in \calH \times \calH$.
\end{proposition}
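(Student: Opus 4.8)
The plan is to reduce the claim to Theorem~\ref{supp:thm:pd_cnp_reciprocal}, the reciprocal characterization of conditionally negative definite kernels. Fix $\lambda \ge 0$ and set $C := \delta + u + \beta\lambda$ and $K(h,h') := (u + \beta\lambda) - k_{\calH}(h,h')$, so that $f(\lambda, k_{\calH}(h,h') \mid \alpha,\beta) = (\delta + K(h,h'))^{-1} = (C - k_{\calH}(h,h'))^{-1}$. First I would check well-definedness and nonnegativity: since $k_{\calH} \le u$ and $\beta\lambda \ge 0$, we get $K(h,h') \ge \beta\lambda \ge 0$, hence $\delta + K(h,h') \ge \delta > 0$ and the reciprocal is finite. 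This is precisely the step that consumes the boundedness hypothesis $k_{\calH} \le u$, matching the remark in the main text that upper boundedness is the only condition needed.

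Next I would verify that $K$ is conditionally negative definite. It is the sum of the constant kernel $(h,h') \mapsto u + \beta\lambda$, which is CND because for $a_1,\dots,a_n$ with $\sum_i a_i = 0$ one has $\sum_{i,j} a_i (u+\beta\lambda) a_j = (u+\beta\lambda)\big(\sum_i a_i\big)^2 = 0$, and the kernel $-k_{\calH}$, which is CND because $k_{\calH}$ is positive definite, so $\sum_{i,j} a_i(-k_{\calH}(h_i,h_j))a_j = -\sum_{i,j}a_i k_{\calH}(h_i,h_j)a_j \le 0$ even without the constraint $\sum_i a_i = 0$. A sum of CND kernels is CND (Def.~\ref{supp:def:cnd}), so $K$ is CND, and we have already shown $K \ge 0$.

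With $K$ conditionally negative definite, $K \ge 0$, and $\delta > 0$, Theorem~\ref{supp:thm:pd_cnp_reciprocal} yields at once that $(h,h') \mapsto (\delta + K(h,h'))^{-1} = (\delta + u + \beta\lambda - k_{\calH}(h,h'))^{-1}$ is positive definite on $\calH \times \calH$, which is the assertion.

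There is no genuine obstacle; the only delicate points are the nonnegativity $K \ge 0$ (which is exactly where the upper bound on $k_{\calH}$ enters) and correctly recalling that constant kernels and negatives of positive definite kernels are CND. If one prefers an argument that does not invoke Theorem~\ref{supp:thm:pd_cnp_reciprocal}, an alternative is the Neumann-series route: by Cauchy--Schwarz in the RKHS $\calH$ one has $|k_{\calH}(h,h')| \le \sqrt{k_{\calH}(h,h)\,k_{\calH}(h',h')} \le u < C$, so $\frac{1}{C - k_{\calH}(h,h')} = \frac{1}{C}\sum_{n=0}^{\infty} \big(k_{\calH}(h,h')/C\big)^{n}$ converges uniformly since $u/C < 1$; each term $C^{-(n+1)} k_{\calH}^{\,n}$ is positive definite by the Schur product theorem together with nonnegative scaling, and positive definiteness is preserved under pointwise limits~\citep{fukumizu2010kernel}, giving the claim.
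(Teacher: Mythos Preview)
Your proof is correct and follows essentially the same route as the paper: show that a constant minus $k_{\calH}$ is conditionally negative definite and nonnegative, then invoke Theorem~\ref{supp:thm:pd_cnp_reciprocal} on the reciprocal (the paper splits it as $K = u - k_{\calH}$ with $t = \delta + \beta\lambda$, while you take $K = u + \beta\lambda - k_{\calH}$ with $t = \delta$, but this is immaterial). Your Neumann-series alternative is a nice self-contained bonus not present in the paper.
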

\begin{proof}
    The negation of a positive definite kernel is conditionally negative definite by Supp.~Def.~\ref{supp:def:cnd}.
    Also, by definition, a constant plus a conditionally negative definite kernel is conditionally negative definite.
    Therefore, $u - k_{\calH}(h,h')$ is conditionally negative definite.
    
    Using Supp.~Thm.\ref{supp:thm:pd_cnp_reciprocal}, we show that $1/(\delta + u + \beta \lambda - k_{\calH}(h,h'))$ is positive definite on $(h,h') \in \calH \times \calH$.
\end{proof}

% ----- SECTION -------
\section{Experimental Details}
\label{supp:sec:exp_details}
In this section, we provide the details of each component of BO pipeline, the surrogate model and how it is fitted to evaluation data, the acquisition function and how it is optimized.
We also provide each experiment specific details including the search spaces, evaluation detail, run time analysis and etc.
The code used for the experiments will be released upon acceptance.

% ----------------------------------------------------------------------
% ----------------------------------------------------------------------
% ----------------------------------------------------------------------
% ----------------------------------------------------------------------
% ----------------------------------------------------------------------

% ----------------------------------------------------------------------
% ----------------------------------------------------------------------
% ----------------------------------------------------------------------
% ----------------------------------------------------------------------
% ----------------------------------------------------------------------
\subsection{Acquisition Function Optimization}\label{supp:subsec:acquisition}
We use Expected Improvement~(EI) acquisition function~\citep{donald1998efficient}.
Since, in mixed variable BO, acquisition function optimization is another mixed variable optimization task, we need a procedure to perform an optimization of acquisition functions on mixed variables.

\paragraph{Acquisition Function Optimization}\label{supp:subsec:par:acquisition_optimization}
Similar to~\citep{daxberger2019mixed}, we alternatively call continuous optimizer and discrete optimizer, which is similar to coordinate-wise ascent, and, in this case, it is so-called type-wise ascent.
For continuous variables, we use L-BFGS-B~\citep{zhu1997algorithm} and for discrete variables, we use hill climbing~\citep{skiena1998algorithm}.
Since the discrete part of the search space is represented by graphs, hill climbing is amount to greedy ascent in neighborhood.
We alternate one discrete update using hill climbing call and one continuous update by calling $scipy.optimize.minimize(method=\text{"L-BFGS-B"},maxiter=1)$.

\paragraph{Spray Points}\label{supp:subsec:par:spray_points}
Acquisition functions are highly multi-modal and thus initial points with which the optimization of acquisition functions starts have an impact on exploration-exploitation trade-off.
In order to encourage exploitation, spray points~\citep{snoek2012practical,garnett2010bayesian,oh2018bock}, which are points in the neighborhood of the current optimum~(e.g, optimum among the collected evaluations), has been widely used.

\paragraph{Initial points for acquisition function optimization}
On 50 spray points and 100000 randomly sampled points, acquisition values are computed, and the highest 40 are used as initial points to start acquisition function optimization.

% ----------------------------------------------------------------------
% ----------------------------------------------------------------------
% ----------------------------------------------------------------------
% ----------------------------------------------------------------------
% ----------------------------------------------------------------------
\subsection{Joint optimization of neural architecture and SGD hyperparameter}\label{supp:subsec:joint_NAS}

\paragraph{Discrete Part of the Search Space}
The discrete part of the search space, $\mathcal{A}$, is modified from the NASNet search space~\citep{zoph2016neural}. 
Each block consists of 4 states $S_1, S_2, S_3, S_4$ and takes two inputs $S_{-1}, S_0$ from a previous block. 
For each state, two inputs are chosen from the previous states, 
Then two operations are chosen and the state finishes its process by summing up two results of the chosen operation
For example, if two inputs $S_{-1}$, $S_2$ and two operations $OP_3^{(1)}$, $OP_3^{(2)}$ are chosen for $S_3$, we have $(S_{-1},S_2) \xrightarrow{S_3} OP_3^{(1)}(S_{-1}) + OP_3^{(2)}(S_2)$.

Operations are chosen from 8 types below
\vspace{-4pt}
\begin{multicols}{4}
\begin{itemize}
    \item ID
    \item Conv$1\times1$
    \item Conv$3\times3$
    \item Conv$5\times5$
    \item Separable Conv$3\times3$
    \item Separable Conv$5\times5$
    \item Max Pooling$3\times3$
    \item Max Pooling$5\times5$
\end{itemize}
\end{multicols}
Two inputs for each state are chosen from states with smaller subscript(e.g $S_i$ is allowed to have $S_j$ as an input if $j < i$).
By choosing $S_4$ and one of $S_1, S_2, S_3$ as outputs of the block, the configuration of a block is completed.

In \textsc{ModLap}, it is required to specify graphs for discrete variables.
For graphs representing operation types, we use complete graphs.
For graphs representing inputs of each states, we use graphs which reflect the ordering structure.
In a graph representing inputs of each state, each vertex is represented by a tuple, for the graph representing inputs of $S_3$, it has a vertex set of $\{(-1, 0), (-1, 1), (-1, 2), (0, 1), (0, 2), (1, 2)\}$.
For example, choosing $(-1, 0)$ means $S_3$ takes $S_{-1}$(input 1 of the block) and $S_0$(input 2 of the block) as inputs of the cell and choosing $(0, 2)$ means $S_3$ takes $S_0$(input 2 of the block) and $S_2$(cell 2) as inputs.
There exists an edge between vertices as long as one input is shared and two distinct inputs differ by one.
For example, there is an edge between $(-1, 0)$ and $(-1, 1)$ because $-1$ is shared and $\vert 0 - 1 \vert = 1$ and there is no edge between $(-1, 0)$ and $(-1, 2)$ because $\vert 0 - 1 \vert \neq 1$ even though $-1$ is shared.
Note that in the graph representing inputs for $S_4$, we exclude the vertex $(-1, 0)$ to avoid the identity block.
For graphs representing outputs of the block, we use the path graph with 3 vertices since we restrict the output is one of $(1, 4), (2, 4), (3, 4)$.
By defining graphs corresponding variables in this way, a prior knowledge about the search space can be infused and be of help to Bayesian optimization.

\paragraph{Continuous Part of the Search Space}
The space of continuous hyperparameters $\mathcal{H}$ comprises 6 continuous hyperparameters of the SGD with a learning rate scheduler: learning rate, momentum, weight decay, learning rate reduction factor, 1st reduction point ratio1 and 2nd reduction point ratio. 
The ranges for each hyperparameter are given in Supp. Table~\ref{supp:tab:sgd_hp_range}.

\begin{table}[ht]
    \caption{SGD Hyperparameter Range}
    \label{supp:tab:sgd_hp_range}
    \vspace{-4pt}
    \centering
    \begin{tabular}{|c|c|c|c|c|}
        \toprule
        SGD hyperparameter & Transformation & Range \\
        \midrule
        Learning Rate & $\log$ & $[\log(0.001), \log(0.1)]$ \\
        Momentum & $\cdot$ & $[0.8, 1.0]$ \\
        Weight Decay & $\log$ & $[\log(10^{-6}), \log(10^{-2})]$ \\
        Learning Rate Reduction Factor & $\cdot$ & $[0.1, 0.9]$ \\
        1st Reduction Point Ratio & $\cdot$ & $[0, 1]$ \\
        2nd Reduction Point Ratio & $\cdot$ & $[0, 1]$ \\
        \bottomrule
    \end{tabular}
\end{table}

For a given learning rate $l$, learning rate reduction factor $\gamma$, 1st reduction point ratio $r_1$ and 2nd reduction point ratio $r_2$, then learning rate scheduling is given in Supp.~Table~\ref{supp:tab:lr_scheduling}.
\begin{table}[ht]
    \caption{Learning Rate Scheduling. In the experiment, the number of epochs $E$ is set to 25.}
    \label{supp:tab:lr_scheduling}
    \vspace{-4pt}
    \centering
    \begin{tabular}{|c|c|c|c|c|}
        \toprule
        Begin Epoch($<$) & ($\le$)End Epoch & Learning Rate \\
        \midrule
        0 & $E \times r_1$ & $l$ \\
        $E \times r_1$ & $E \times (r_1 + (1 - r_1) r_2)$ & $l \cdot \gamma$ \\
        $E \times (r_1 + (1 - r_1) r_2)$ & $E$ & $l \cdot \gamma^2$ \\
        \bottomrule
    \end{tabular}
    \vspace{-8pt}
\end{table}

\paragraph{Evaluation}
For a given block configuration $a \in \mathcal{A}$, the model is built by stacking 3 blocks with downsampling between blocks.
Note that there are two inputs and two outputs of the blocks.
Therefore, the downsampling is applied separately to each output.
The two outputs of the last block are concatenated after max pooling and then fed to the fully connected layer.

The model is trained with the hyperparameter $h \in \mathcal{H}$ on a half of FashionMNIST~\citep{xiao2017fashion} training data for 25 epochs and the validation error is computed on the rest half of training data.
To reduce the high noise in validation error, the validation error is averaged over 4 validation errors from models trained with different random initialization.
With the batch size of 32, each evaluation takes 12$\sim$21 minutes on a single GTX 1080 Ti depending on architectures

\paragraph{Regularized Evolution Hyperparameters}
RE has hyperparameters, the population size and the sample size.
We set to 50 and 15, respectively, to make those similar to the optimal choice in~\citep{real2019regularized,oh2019combinatorial}.
Accordingly, RE starts with a population with 50 random initial points.
In each run of 4 runs, the first 10 initial points of 50 random initial points are shared with 10 initial points used in GP-BO.

Another hyperparameter is the mutation rule.
In addition to the mutation of architectures used in~\citep{real2019regularized}, for continuous variables, a randomly chosen single continuous variable is mutated by Gaussian noise with small variance.
In each round, one continuous variable and one discrete variable are altered.

\paragraph{Wall-clock Run Time}
The total run time of \textsc{ModLap}(200), $61.44 \pm 4.09$ hours, is sum of $9.27 \pm 2.60$ hours for BO suggestions and $52.16 \pm 1.79$ hours for evaluations.
BO suggestions were run on Intel Xeon Processor E5-2630 v3 and evaluations were run on GTX 1080 Ti.

In the actual execution of RE, two different types of GPUs were used, GTX 1080 Ti(fast) and GTX 980(slow).
Therefore, the evaluation time for RE is estimated by assuming that RE were also run on GTX 1080 Ti(fast) only.
During the total run time of \textsc{ModLap}(200), $61.44 \pm 4.09$ hours, RE is estimated to collects 230 evaluations.
$230 \approx 61.44 / 52.16 \times (200 - 10) + 10$ where 10 is adjusted because the evaluation time for 10 random initial points was not measured.

Since in both RE and BOHB, we assume zero seconds to acquire new hyperparameters and only consider times spent for evaluations, the wall-clock runtime of BOHB is estimated to be equal to wall-clock runtime of RE.
% Therefore, RE is estimated to collects 460 evaluations for the same amount of computation time.\footnote{The actual run time of RE using both GTX 1080 Ti(fast) and GTX 980(slow) for 460 evaluations is $148.66 \pm 1.68$ hours. To be fair, we use an estimation assuming that RE were also run on GTX 1080 Ti(fast) only. $460 \approx 112.39 / 72.36 \times (290 - 10) + 10$. The evaluation time for 10 random initial points was not measured.}
% In addition to the comparison with RE(460), we even push further with the comparison to RE(600).
% As reported in Table~\ref{tab:joint_NAS}, \textsc{ModLap} outperforms RE(460), and to our surprise, in spite of being marginal, the mean of \textsc{ModLap} is still better than the mean of RE(600).

% ----- SECTION -------
\section{Experiment: Results}
\label{supp:sec:exp_results}

In this section, in addition to the results reported in Sec.~\ref{sec:experiment}, we provide additional results.

On 3 synthetic problems and 2 hyperparameter optimization problems, along with the frequency modulation, we also compare other kernel combinations such as the kernel addition and the kernel product as follows.
\begin{table}[h]
    \vspace{-6pt}
    \label{supp:tab:kernels_in_comparison}
    \footnotesize
    \begin{center}
        \begin{tabular}{|c|c|c|}
            \toprule
            \textsc{ProdLap} : $k_{RBF} \times k_{Lap}$ & 
            \textsc{AddLap} : $k_{RBF} + k_{Lap}$ & 
            \textsc{ModLap} : Eq.\ref{eq:scalable_FM} with $f = f_{Lap}$\\
            \midrule
            \textsc{ProdDif} : $k_{RBF} \times k_{Dif}$ & 
            \textsc{AddDif} : $k_{RBF} + k_{Dif}$ & 
            \textsc{ModDif} : Eq.\ref{eq:scalable_FM} with $f = f_{Dif}$\\
            \bottomrule
        \end{tabular}
    \end{center}
    where $k_{RBF}$ is the RBF kernel and
    \vspace{-10pt}
    \begin{center}
        \vspace{-10pt}
        \begin{equation}\label{supp:eq:discrete_kernel}
            k_{Lap}(\bfv, \bfv')
            = \prod_{p=1}^P \sum_{i=1}^{\vert \calV_p \vert} [U^p]_{v_p,i} \frac{1}{1 + \beta_p \lambda_i^p} [U^p]_{v_p',i}
            \quad
            k_{Dif}(\bfv, \bfv')
            = \prod_{p=1}^P \sum_{i=1}^{\vert \calV_p \vert} [U^p]_{v_p,i} \exp(-\beta_p \lambda_i^p) [U^p]_{v_p',i}
        \end{equation}
    \end{center}
    \vspace{-14pt}
\end{table}

We make following observations with this additional comparison.
Firstly, \textsc{ModDif} which does not respect the similarity measure behavior, sometimes severely degrades BO performance.
Secondly, the kernel product often performs better than the kernel addition.
Thirdly, \textsc{ModLap} shows the equally good final results as the kernel product and finds the better solution faster than the kernel product consistently. 
This can be clearly shown by comparing the area above the mean curve of BO runs using different kernels.
The area above the mean curve of BO using \textsc{ModLap} is larger than the are above the mean curve of BO using the kernel product.
Moreover, the gap between the area from \textsc{ModLap} and the area from kernel product increases in problems with larger search spaces. 
Even on the smallest search space, Func2C, \textsc{ModLap} lags behind the kernel product up to around 90th evaluation and outperforms after it.
The benefit of \textsc{ModLap} modeling complex dependency among mixed variables is more prominent in higher dimension problems.

On the joint optimization of SGD hyperparameters and architecture, we show the additional result where RE and BOHB are continued 600 evaluations.

\newpage

\subsection{Func2C}

\begin{table}[!h]
    \begin{minipage}{0.5\linewidth}
        \centering
        \includegraphics[width=0.9\columnwidth]{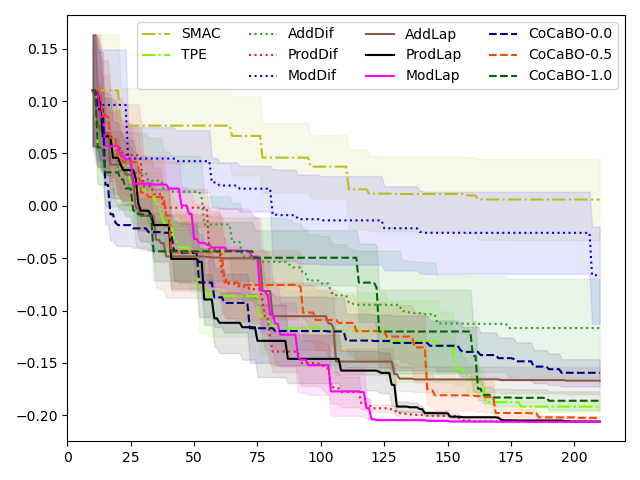}
    \end{minipage}
    \hfill
    \begin{minipage}{0.5\linewidth}
        \centering
        \begin{tabular}{lc}\hline
            Method             & Mean$\pm$Std.Err.     \\ \hline
            SMAC               & $  +0.0060\pm 0.0387$ \\
            TPE                & $  -0.1917\pm 0.0053$ \\ \hline
            AddDif             & $  -0.1167\pm 0.0472$ \\
            ProdDif            & $  -0.2060\pm 0.0002$ \\
            ModDif             & $  -0.0662\pm 0.0463$ \\ \hline
            AddLap             & $  -0.1669\pm 0.0127$ \\
            ProdLap            & $  -0.2060\pm 0.0001$ \\
            ModLap             & $  -0.2063\pm 0.0000$ \\ \hline
            CoCaBO-0.0         & $  -0.1594\pm 0.0130$ \\
            CoCaBO-0.5         & $  -0.2025\pm 0.0018$ \\
            CoCaBO-1.0         & $  -0.1861\pm 0.0090$ \\
            \hline
        \end{tabular}
    \end{minipage}
    \label{supp:fig:exp-func2c}
\end{table}

\subsection{Func3C}

\begin{table}[!h]
    \begin{minipage}{0.5\linewidth}
        \centering
        \includegraphics[width=0.9\columnwidth]{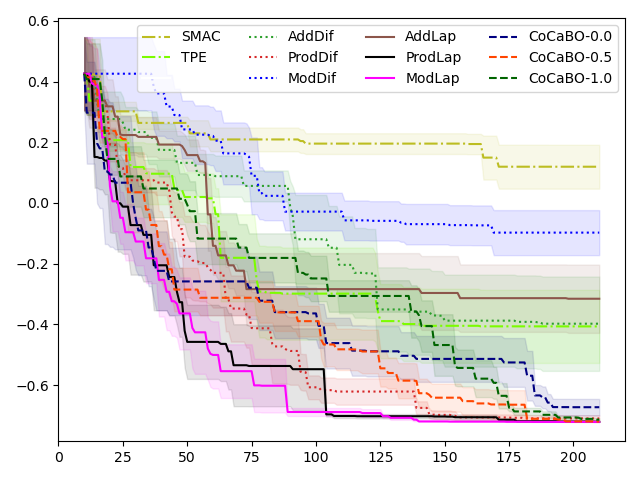}
    \end{minipage}
    \hfill
    \begin{minipage}{0.5\linewidth}
        \centering
        \begin{tabular}{lc}\hline
            Method             & Mean$\pm$Std.Err.     \\ \hline
            SMAC               & $  +0.1194\pm 0.0723$ \\
            TPE                & $  -0.4068\pm 0.1204$ \\ \hline
            AddDif             & $  -0.3979\pm 0.1555$ \\
            ProdDif            & $  -0.7100\pm 0.0106$ \\
            ModDif             & $  -0.0977\pm 0.0742$ \\ \hline
            AddLap             & $  -0.3156\pm 0.1125$ \\
            ProdLap            & $  -0.7213\pm 0.0005$ \\
            ModLap             & $  -0.7215\pm 0.0004$ \\ \hline
            CoCaBO-0.0         & $  -0.6730\pm 0.0274$ \\
            CoCaBO-0.5         & $  -0.7202\pm 0.0016$ \\
            CoCaBO-1.0         & $  -0.7139\pm 0.0051$ \\
            \hline
        \end{tabular}
    \end{minipage}
    \label{supp:fig:exp-func3c}
\end{table}

\subsection{Ackley5C}

\begin{table}[!h]
    \begin{minipage}{0.5\linewidth}
        \centering
        \includegraphics[width=0.9\columnwidth]{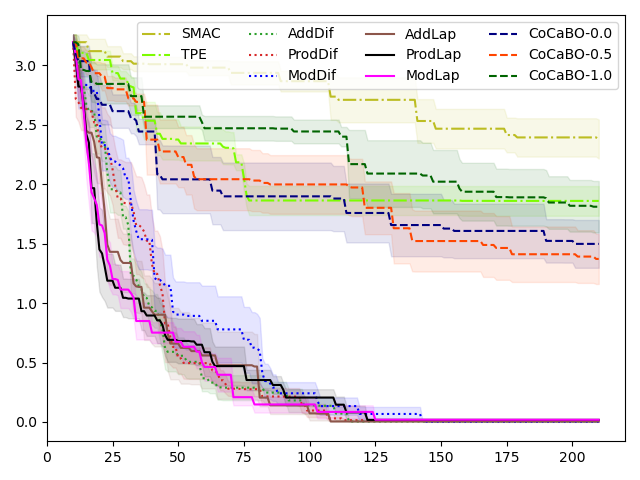}
    \end{minipage}
    \hfill
    \begin{minipage}{0.5\linewidth}
        \centering
        \begin{tabular}{lc}\hline
            Method             & Mean$\pm$Std.Err.     \\ \hline
            SMAC               & $  +2.3809\pm 0.1648$ \\
            TPE                & $  +1.8601\pm 0.1248$ \\ \hline
            AddDif             & $  +0.0040\pm 0.0015$ \\
            ProdDif            & $  +0.0152\pm 0.0044$ \\
            ModDif             & $  +0.0008\pm 0.0003$ \\ \hline
            AddLap             & $  +0.0042\pm 0.0018$ \\
            ProdLap            & $  +0.0177\pm 0.0038$ \\
            ModLap             & $  +0.0186\pm 0.0057$ \\ \hline
            CoCaBO-0.0         & $  +1.4986\pm 0.2012$ \\
            CoCaBO-0.5         & $  +1.3720\pm 0.2110$ \\
            CoCaBO-1.0         & $  +1.8114\pm 0.2168$ \\
            \hline
        \end{tabular}
    \end{minipage}
    \label{supp:fig:exp-ackley5c}
\end{table}

\newpage

\subsection{SVM Hyperparameter Optimization}

\begin{table}[!h]
    \begin{minipage}{0.5\linewidth}
        \centering
        \includegraphics[width=0.9\columnwidth]{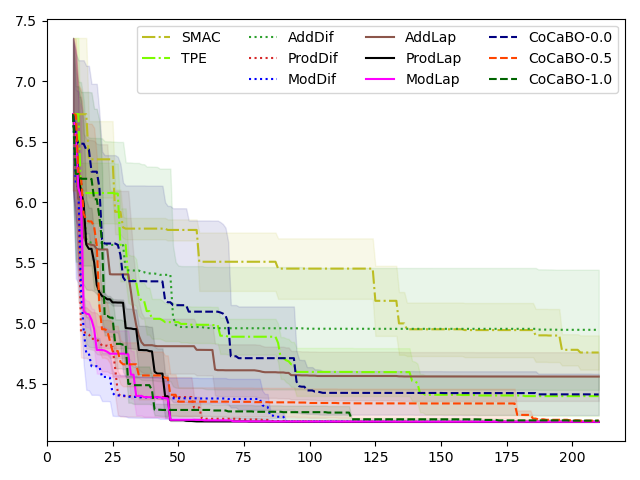}
    \end{minipage}
    \hfill
    \begin{minipage}{0.5\linewidth}
        \centering
        \begin{tabular}{lc}\hline
            Method             & Mean$\pm$Std.Err.     \\ \hline
            SMAC               & $  +4.7588\pm 0.1414$ \\
            TPE                & $  +4.3986\pm 0.1632$ \\ \hline
            AddDif             & $  +4.9463\pm 0.4960$ \\
            ProdDif            & $  +4.1857\pm 0.0017$ \\
            ModDif             & $  +4.1876\pm 0.0012$ \\ \hline
            AddLap             & $  +4.5600\pm 0.2014$ \\
            ProdLap            & $  +4.1856\pm 0.0012$ \\
            ModLap             & $  +4.1864\pm 0.0015$ \\ \hline
            CoCaBO-0.0         & $  +4.4122\pm 0.1703$ \\
            CoCaBO-0.5         & $  +4.1957\pm 0.0040$ \\
            CoCaBO-1.0         & $  +4.1958\pm 0.0037$ \\
            \hline
        \end{tabular}
    \end{minipage}
    \label{supp:fig:exp-svm}
\end{table}

\subsection{XGBoost Hyperparameter Optimization}

\begin{table}[!h]
    \begin{minipage}{0.5\linewidth}
        \centering
        \includegraphics[width=0.9\columnwidth]{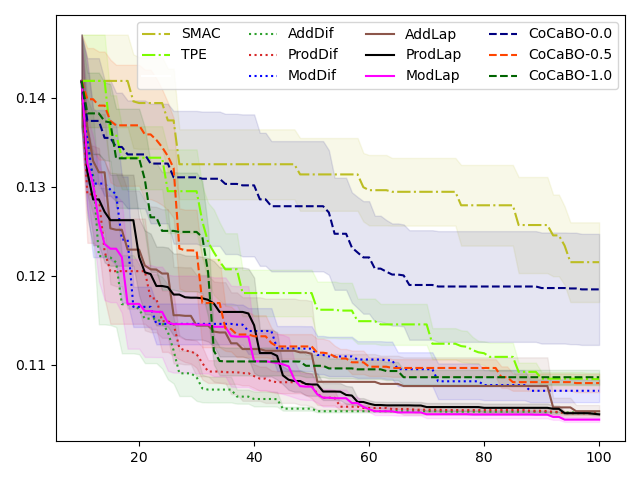}
    \end{minipage}
    \hfill
    \begin{minipage}{0.5\linewidth}
        \centering
        \begin{tabular}{lc}\hline
            Method             & Mean$\pm$Std.Err.     \\ \hline
            SMAC               & $  +0.1215\pm 0.0045$ \\
            TPE                & $  +0.1084\pm 0.0007$ \\ \hline
            AddDif             & $  +0.1046\pm 0.0001$ \\
            ProdDif            & $  +0.1045\pm 0.0003$ \\
            ModDif             & $  +0.1071\pm 0.0013$ \\ \hline
            AddLap             & $  +0.1048\pm 0.0007$ \\
            ProdLap            & $  +0.1044\pm 0.0001$ \\
            ModLap             & $  +0.1038\pm 0.0003$ \\ \hline
            CoCaBO-0.0         & $  +0.1184\pm 0.0062$ \\
            CoCaBO-0.5         & $  +0.1079\pm 0.0010$ \\
            CoCaBO-1.0         & $  +0.1086\pm 0.0008$ \\
            \hline
        \end{tabular}
    \end{minipage}
    \label{supp:fig:exp-xgboost}
\end{table}

\subsection{Joint Optimization of SGD hyperparameters and architecture.}

\begin{table}[!h]
    \begin{minipage}{0.5\linewidth}
        \centering
        \includegraphics[width=0.9\columnwidth]{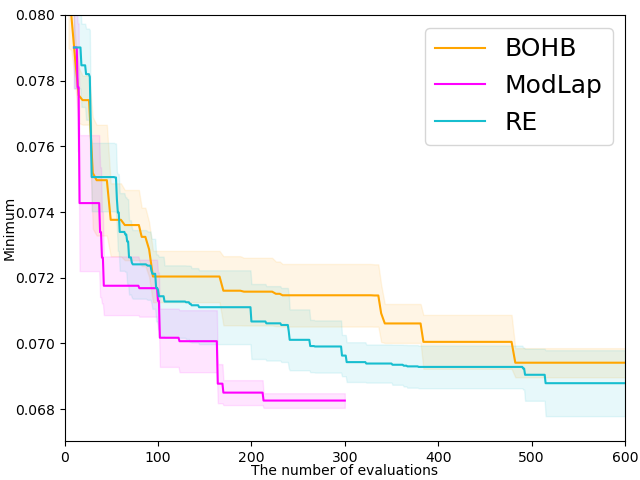}
    \end{minipage}
    \hfill
    \begin{minipage}{0.5\linewidth}
        \centering
        \begin{tabular}{lcc}\hline
            Method(\#Eval.) & Mean$\pm$Std.Err.     \\ \hline
            BOHB(200)       & \num{7.158E-02}$\pm$\num{1.0303E-03} \\ 
            BOHB(230)       & \num{7.151E-02}$\pm$\num{9.8367E-04} \\
            BOHB(600)       & \num{6.941E-02}$\pm$\num{4.4320E-04} \\ \hline
            RE(200)         & \num{7.067E-02}$\pm$\num{1.1417E-03} \\
            RE(230)         & \num{7.061E-02}$\pm$\num{1.1329E-03} \\
            RE(400)         & \num{6.929E-02}$\pm$\num{6.4804E-04} \\
            RE(600)         & \num{6.879E-02}$\pm$\num{1.0039E-03} \\ \hline
            ModLap(200)     & \num{6.850E-02}$\pm$\num{3.7914E-04} \\
            ModLap(230)     & \num{6.826E-02}$\pm$\num{2.2317E-04} \\
            ModLap(300)     & \num{6.826E-02}$\pm$\num{2.2317E-04} \\ \hline
        \end{tabular}
    \end{minipage}
    \label{supp:fig:exp-joint-nas}
\end{table}

% ----- SECTION -------
\section{Experiment: Ablation Study}
\label{supp:sec:ablation_study}

We run a regression task on 3 different UCI datasets.
\begin{table}[!h]
    \caption{Regression Datasets}
    \centering
    \begin{tabular}{l|r|r|r} \hline
        Dataset                & \# of points & Continuous Dim. & Categorical Dim.\\ \hline
        Meta-data              &         528 &              16 &         3\\
        Servo                  &         167 &             ~~2 &          2\\
        Optical Intercon. Net. &         640 &             ~~2 &          2\\
        \hline
    \end{tabular}
\end{table}

On 20 different random splits (training:test=8:2), negative log likelihood(NLL) and RMSE on test set are reported in Table~\ref{supp:tab:regression}.

\begin{table}[!h]
    \caption{Regression}
    \centering
    \begin{tabular}{c|c|c|c}\hline
        NLL      & Meta-data             & Servo                     & Optical Intercon. Net.\\ \hline
        AddDif   & $  16.0224\pm 3.9906$ & $~~~~4.2362\pm~~~~0.6115$ & $~~7.5504\pm 0.4867$\\
        ProdDif  & $ ~~9.5198\pm 3.7116$ & $~~~~0.9579\pm~~~~0.4758$ & $~~0.2132\pm 0.2050$\\
        ModDif   & $ ~~5.9377\pm 1.9872$ & $  503.9973\pm  486.4679$ & $  10.0005\pm 0.2934$\\
        AddLap   & $ ~~1.6805\pm 0.1847$ & $~~~~3.7083\pm~~~~0.5001$ & $~~7.5568\pm 0.4897$\\
        ProdLap  & $ ~~1.3236\pm 0.3539$ & $~~~~0.7008\pm~~~~0.3385$ & $~~0.2135\pm 0.1928$\\
        ModLap   & $ ~~1.1218\pm 0.2987$ & $~~~~1.0790\pm~~~~0.4607$ & $~~0.1521\pm 0.2265$\\
        \hline
    \end{tabular}
    \\
    
    \begin{tabular}{c|c|c|c}\hline
        RMSE     & Meta-data             & Servo                     & Optical Intercon. Net.\\ \hline
        AddDif   & $ ~~1.0223\pm 0.1601$ & $ ~~~~0.5696\pm ~~~~0.0310$ & $ ~~0.2577\pm 0.0052$\\
        ProdDif  & $ ~~1.1537\pm 0.1654$ & $ ~~~~0.3023\pm ~~~~0.0408$ & $ ~~0.1413\pm 0.0060$\\
        ModDif   & $ ~~1.4074\pm 0.2027$ & $ ~~~~0.7308\pm ~~~~0.0910$ & $ ~~0.7881\pm 0.0069$\\
        AddLap   & $ ~~1.0199\pm 0.1588$ & $ ~~~~0.5709\pm ~~~~0.0311$ & $ ~~0.2577\pm 0.0052$\\
        ProdLap  & $ ~~1.0898\pm 0.1642$ & $ ~~~~0.2971\pm ~~~~0.0405$ & $ ~~0.1417\pm 0.0059$\\
        ModLap   & $ ~~1.0920\pm 0.1626$ & $ ~~~~0.3046\pm ~~~~0.0412$ & $ ~~0.1400\pm 0.0063$\\
        \hline
    \end{tabular}
    \label{supp:tab:regression}
\end{table}

In terms of NLL, which takes into account uncertainty, ModLap is the best in Meta-data and Optical Intercon. Net. 
In Servo, ProdLap/ProdDif perform the best, so we conjecture that this dataset has an approximate product structure. 
In terms of RMSE, ModLap and ProdLap are equally good. 
We conclude that the frequency modulation has the benefit beyond the addition/product of good basis kernels.
Also, the importance of respecting the similarity measure behavior is observed on the regression task.

% \bibliographystyle{plainnat}
% \bibliography{oh_359-supp}

\end{document}